\documentclass{article}
\usepackage[nowritingstyle]{preprint}
\usepackage{microtype}

\usepackage{amsmath,amsfonts,bm}

\def\eqref#1{equation~\ref{#1}}

\def\1{\bm{1}}

\DeclareMathAlphabet{\mathsfit}{\encodingdefault}{\sfdefault}{m}{sl}
\SetMathAlphabet{\mathsfit}{bold}{\encodingdefault}{\sfdefault}{bx}{n}

\usepackage{float}
\usepackage{subcaption}
\usepackage{latexsym}
\usepackage{enumitem}
\usepackage{pifont}
\usepackage{placeins}
\usepackage{amsfonts}
\usepackage{nicefrac}
\usepackage{adjustbox}
\usepackage{makecell}
\usepackage{multirow}
\usepackage{amsmath}
\usepackage{amssymb}
\usepackage{mathtools}
\usepackage{amsthm}
\usepackage{hyperref}            %
\usepackage{url}
\usepackage[capitalize,noabbrev]{cleveref} %

\theoremstyle{plain}
\newtheorem{theorem}{Theorem}[section]
\newtheorem{proposition}[theorem]{Proposition}
\newtheorem{lemma}[theorem]{Lemma}
\newtheorem{corollary}[theorem]{Corollary}
\theoremstyle{definition}
\newtheorem{definition}[theorem]{Definition}

\theoremstyle{remark}
\newtheorem{remark}[theorem]{Remark}

\newcommand{\best}[1]{\textbf{#1}}
\newcommand{\secon}[1]{\underline{#1}}

\newcommand{\oursfinal}{C\textsuperscript{2}U}

\newenvironment{legacyreported}{\begin{quote}\itshape\small [Paper-reported, original evaluation pipeline]~}{\end{quote}}

\title{C\textsuperscript{2}U in RAG: Compositional Concept Unlearning in Retrieval-Augmented Generation}

\author[1]{Yiyang Duan}
\author[1]{Fan Wu}
\author[1]{Tiantong Wu}
\author[1]{Che Wang}
\author[1]{Zhiren Gong}
\author[3]{Wei Guo}
\author[2]{Yang Cao}
\author[1]{Wei Yang Bryan Lim}
\affiliation[1]{Midori Lab, Nanyang Technological University}
\affiliation[2]{Institute of Science Tokyo (formerly Tokyo Institute of Technology)}
\affiliation[3]{Beihang University}
\runninghead{C2U in RAG}
\logos{%
  \logoimage[0.42in]{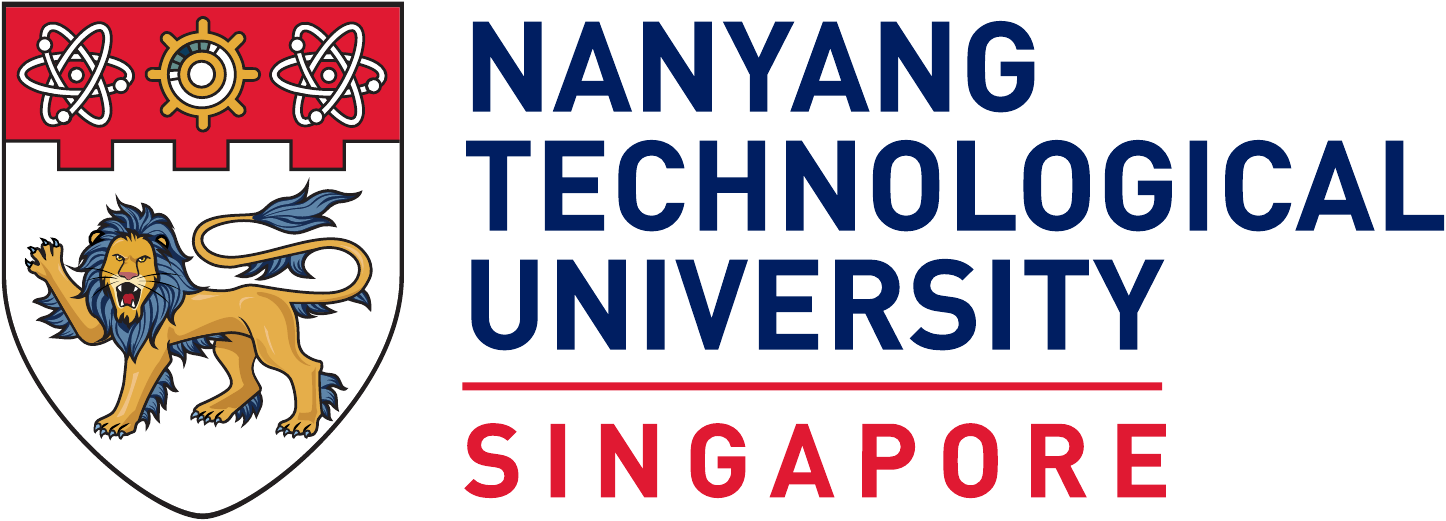}%
  \logodivider[0.42in]%
  \logoimage[0.42in]{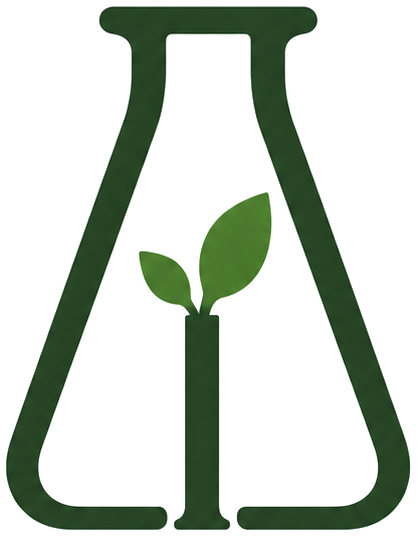}\hspace{0.6em}%
  \logoimage[0.13in]{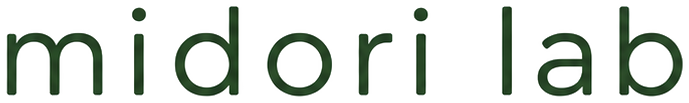}}

\begin{document}

\maketitle

\begin{abstract}
Retrieval-augmented generation can expose unwanted concepts through retrieved evidence even when the generator is moderated. We study query-side concept suppression with a fixed encoder and document index. Compositional Concept Unlearning (C\textsuperscript{2}U) routes a profile-derived direction through an augmented atom bank, adding only a bounded edited-minus-original delta to the raw query. Across five datasets, four encoders, and three seeds, the validation-selected configuration reduces pooled binary target Hit@5 by 4.25 percentage points with non-positive control degradation and target-exposure changes. Under the evaluated query transforms, it suppresses target density without increasing budgeted control-query exposure. Separately, ArXiv four-concept composition improves over an averaged-direction comparator after independent validation gates and first-use test evaluation.
\end{abstract}

\section{Introduction}
\label{sec:introduction}

Retrieval-Augmented Generation (RAG) grounds answers in external documents, but also creates a pathway for private, unwanted, or policy-restricted evidence to enter a model's context. Output moderation can reject a response without changing that evidence pathway. Conversely, deleting whole documents can remove useful content alongside an unwanted concept. We study an intermediate intervention: suppress concept-bearing retrieval through query-side representation editing, leaving the encoder and indexed corpus unchanged. Here, ``unlearning'' means retrieval-stage suppression, not certified parameter deletion or a guarantee about generated answers.

The problem has four aspects: intervention happens on a fixed-index retriever, a small profile must guide bounded query edits, suppression--utility trade-offs must hold under ordinary and transformed queries, and several concepts must share an editing budget. \cref{fig:paradigms} locates the intervention within RAG. Linear erasure provides useful comparisons, but reducing target retrieval need not reduce exposure under context packing. Access also matters: a query transform cannot inspect and remove target documents like a membership-aware filter. We therefore evaluate suppression, utility, and exposure together, distinguishing the information available to each method.

\begin{figure*}[t]
\centering
\includegraphics[width=\textwidth]{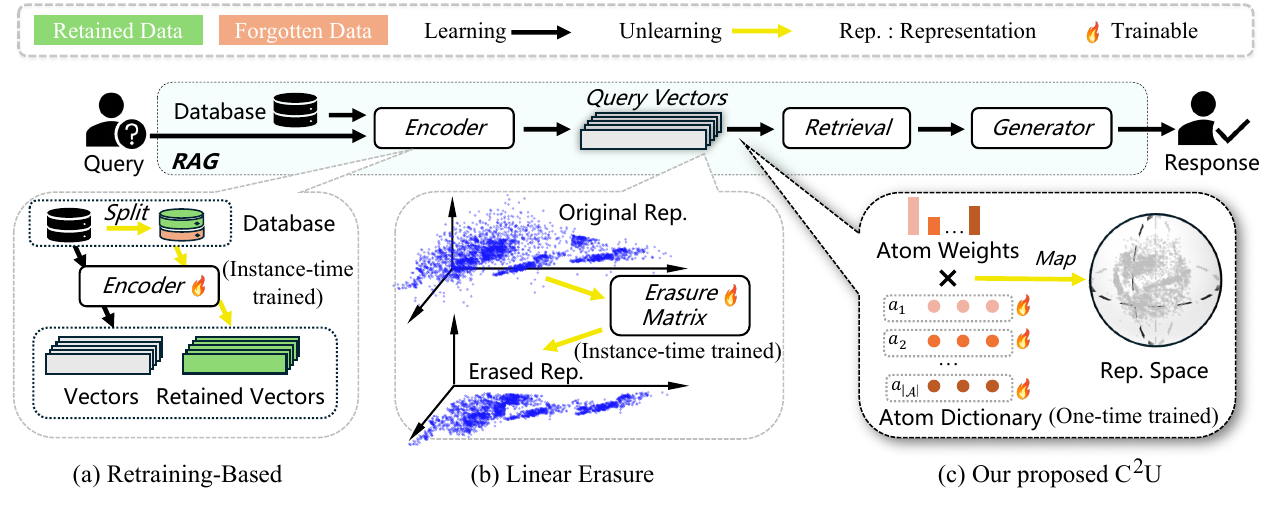}
\caption{\textbf{Intervention locations in RAG.} Retraining changes the representation model; linear erasure transforms representations; C\textsuperscript{2}U edits atom coordinates and bounds the resulting raw-query update. Its encoder, document index, and downstream generator remain fixed.}
\label{fig:paradigms}
\end{figure*}

C\textsuperscript{2}U combines an Atomic Compositional Representation (ACR) with a raw-query bypass. Target profiles and calibration controls determine a negative, diagonally preconditioned direction and a calibrated gate. Least-squares coefficients and an appended residual atom carry that direction through the frozen atom bank; only the weight difference returns to the raw query. A trust region limits the update. For multiple concepts, separate gates allocate a shared angular budget before their deltas are added.

The experiments assess both useful suppression and its limits. The single-concept comparison includes stronger-access methods and methods that suppress more at larger utility costs; query-transform evaluation tests whether suppression coexists with increased context exposure. Separately, ArXiv four-concept composition passes selection, independent confirmation, and first-use test evaluation, while the two-concept design fails selection. Our contributions are threefold:
\begin{itemize}
    \item \textbf{Bounded query editing.} An atom-coordinate operator with calibrated gating, residual completion, and a raw-query trust region (\cref{ssec:deployment_config}) with gate-off exact identity.
    \item \textbf{Suppression and utility evidence.} Joint retrieval and exposure comparisons across datasets, encoders, and query conditions, with explicit access classes and evidence scope.
    \item \textbf{Concept composition.} Separate-gate, shared-budget editing evaluated against averaged-direction steering.
\end{itemize}

\section{Related Work and Problem}
\label{sec:related_work}

\paragraph{Retrieval and safety.}
RAG research improves retrieval and context use through filtering, ranking, and corrective generation~\citep{hofstatter2023fid,asai2024self,yu2024rankrag,yan2024corrective,liu2024lost}. Retrieved sources can nevertheless introduce unsafe content or poisoning~\citep{an2025rag,zhao2025exploring}. Moderation and safety frameworks~\citep{inan2023llama,liang2025saferag} address complementary parts of this problem; adversarial suffixes and latent instructions illustrate why output defenses alone need evaluation~\citep{zou2023universal,qiu2023latent}. Unlike database redaction or retriever optimization~\citep{wang2025machine}, our intervention preserves the raw document index that retrieval searches.

\paragraph{Unlearning and representation control.}
Parameter-updating unlearning~\citep{chen2023boundary,tarun2023deep,fan2023salun} and diffusion concept erasure~\citep{orgad2023editing,gandikota2023erasing,wu2025erasing,bui2025fantastic} motivate selective interventions but do not establish fixed-index retrieval behavior. INLP~\citep{ravfogel2020null} and LEACE~\citep{belrose2023leace} remove linearly accessible concepts; CURE~\citep{kocak2025cure} and RAG Embedder Control~\citep{kim2025debiasing} provide recent representation-control comparisons. Our focus is a bounded query-side operator and its composition rule. Training and adaptation differences are described in \cref{app:baselines,sec:training_paradigm}.

\paragraph{Problem.}
\label{sec:background}
Let $q\in\mathbb{S}^{d-1}$ be a normalized frozen-encoder query and $u_j\in\mathbb{S}^{d-1}$ the unchanged indexed document vectors. Retrieval ranks $q^\top u_j$. Given a target concept profile, we produce $q'$ to reduce target retrieval while preserving control-label retrieval and avoiding extra target exposure on control queries. Query-role and target-document membership labels are evaluation inputs, not inputs to the deployed C\textsuperscript{2}U operator.

A profile identifies what to suppress through examples, not a runtime list of forbidden document IDs. Calibration controls characterize non-target geometry and set the gate before evaluation. At retrieval time the same operator applies to target and control queries without observing their roles. Suppressing target-seeking queries can still introduce unwanted evidence for other users, so target suppression and control-query exposure are separate outcomes throughout the evaluation.

\section{Deployed Query Editing}
\label{sec:methodology}
\label{ssec:c2u_overview}
\label{ssec:deployment_config}

\begin{figure*}[t]
\centering
\includegraphics[width=\textwidth]{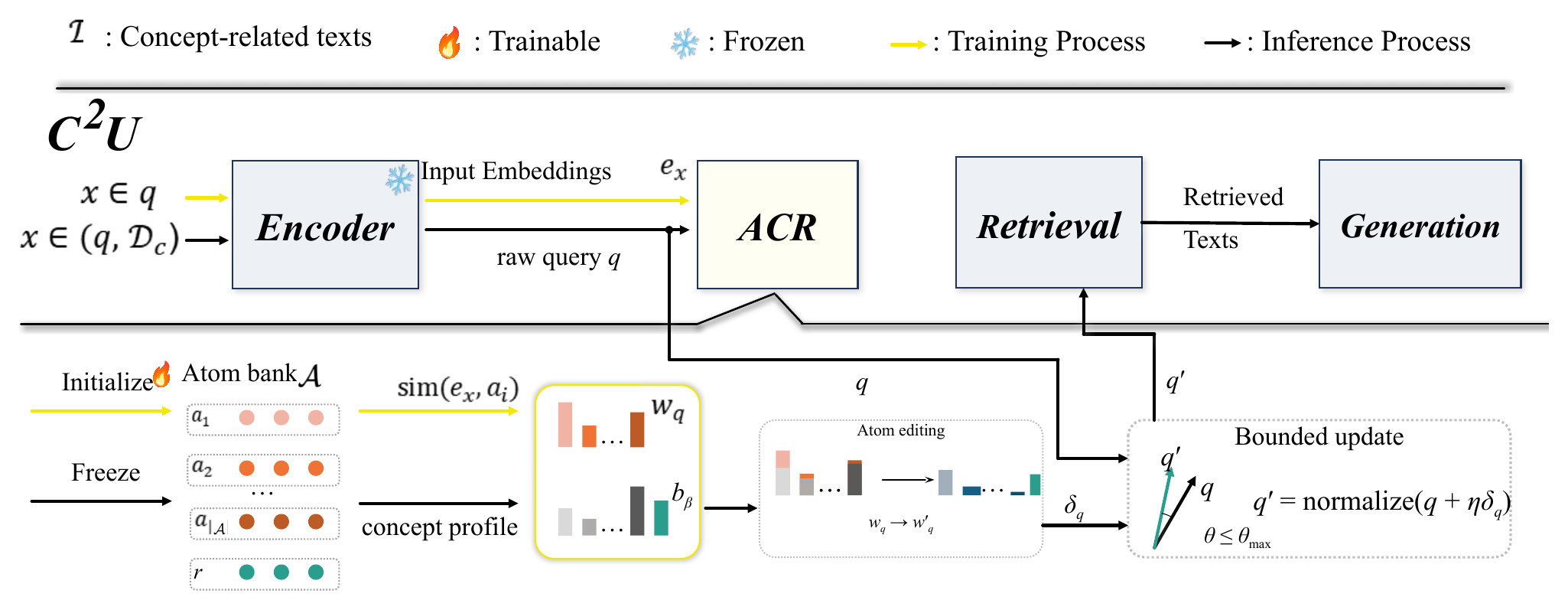}
\caption{\textbf{Deployed C\textsuperscript{2}U pipeline.} The frozen encoder's raw query $q$ feeds ACR and, via the bypass, the bounded update. Profile-guided editing adds the scaled coefficients $k_q b_\beta$ to the weights $w_q$; the difference $\delta_q$ and the update $q'=\operatorname{normalize}(q+\eta\delta_q)$, $\theta\le\theta_{\max}=0.2$~rad, return one edited query to the index.}
\label{fig:c2u_architecture}
\end{figure*}

\paragraph{Atomic representation.}
The atom bank $\mathcal A\in\mathbb R^{N\times d}$ is a learned dictionary whose rows $a_i$ are unlabeled unit directions in the encoder's representation space; $N$ varies by dataset and encoder (\cref{app:bank_dimensions}). The frozen encoder maps text $x$ to $h_x$, and a lightweight projection head scores $h_x$ against every row; a temperature-scaled softmax turns these scores into atom weights, whose bank combination reconstructs the representation:
\begin{equation}
\begin{gathered}
 w_x=\operatorname{softmax}\!\left(s(h_x,\mathcal A)/\tau\right),\\
 z_x=\operatorname{normalize}\!\left(\mathcal A^\top w_x\right).
\end{gathered}
\label{eq:acr_weights}
\end{equation}
Training optimizes only the bank and head---the encoder stays frozen throughout---with a retrieval objective on $z_x$ and an entropy penalty that keeps $w_x$ peaked; both are frozen afterward (\cref{ssec:ACR}). Editing then happens in atom coordinates: a direction in representation space becomes a change of weight coefficients rather than a free-form vector edit. The reconstruction itself never enters deployment. The index stores raw encoder vectors, and online editing returns only the weight-difference delta to the raw query rather than replacing the query by $z_x$; substituting the reconstruction would define a different method and break identity with the unchanged index deployment searches.

\paragraph{Direction and calibrated gate.}
Let $\mu_t,\mu_c$ be means of normalized target-profile and calibration-control vectors. With $v=\operatorname{normalize}(\mu_t-\mu_c)$, the negative, diagonally preconditioned direction is $g=\operatorname{normalize}(-D^{-1}v)$, where $D$ is a ridge-regularized, shrunk diagonal control second-moment matrix. A separate raw-space score $z(q)=q^\top\operatorname{normalize}(\mu_t)-q^\top\operatorname{normalize}(\mu_c)$ determines the continuous gate:
\begin{equation}
 s(q)=\operatorname{clip}\!\left(\frac{z(q)-z_0}{z_1-z_0},0,1\right).
\label{eq:raw_gate}
\end{equation}
Here $z_0,z_1$ are the calibration-control median and $95$th percentile. The gate uses neither ranks nor candidate labels; degenerate calibration uses the conservative fallback in \cref{app:method_details}.

\paragraph{Residual completion.}
The bank $\mathcal A\in\mathbb R^{N\times d}$ and weight-generating head are frozen after ACR training (\cref{ssec:ACR}); $N$ varies by dataset and encoder (\cref{app:bank_dimensions}). Solve $a=\operatorname{lstsq}(\mathcal A^\top,g)$ and set $p=\mathcal A^\top a$, $e=g-p$. Above the residual threshold, append $r=e/\|e\|$ and use $b_\beta=[a;\beta\|e\|]$ in $\mathcal A'=[\mathcal A;r]$. For zero or small residuals, append $r=p/\|p\|$ with coefficient zero; a degenerate $p$ fails closed. Thus $\beta$ scales \emph{only the appended coefficient}. At the selected $\beta=1$, projected and residual components recover full $g$ in exact arithmetic; the small-residual branch instead truncates $e$ to its in-span projection.

\paragraph{Angular allocation and delta transfer.}
Let $h=\mathcal A'^\top b_\beta$ and $\phi=\angle(q,h)$. The requested angle is $\theta_{\rm req}=0.3s(q)$; angular feasibility gives $\theta=\max(0,\min(\theta_{\rm req},\phi-\epsilon_{\rm plane}))$. For active, feasible rows set
\begin{equation}
 k_q=\frac{\sin\theta}{\sin(\phi-\theta)},\qquad
 w'_q=[w_q;0]+k_q b_\beta.
\label{eq:weight_edit}
\end{equation}
Edited weights are neither clipped nor renormalized. Normalizing $h$ defines its angle only; $b_\beta$ retains its magnitude. The requested, feasible, and realized angles therefore differ in general, particularly when $\beta\ne1$. Transfer the actual edited-minus-padded weight difference:
\begin{equation}
 \delta_q=(w'_q-[w_q;0])^\top\mathcal A'.
\label{eq:dense_delta}
\end{equation}
Training uses reconstruction, whereas online editing uses this difference; precision, cancellation, and threshold details are in \cref{app:method_details}.

\paragraph{Raw-query trust-region update.}
For active rows, return $q'=\operatorname{normalize}(q+\eta\delta_q)$, taking the full step if feasible and otherwise reducing $\eta\in[0,1]$ to satisfy the numerical counterpart of
\begin{equation}
 \langle q',q\rangle\ge\max(\cos(0.2),0.98).
\label{eq:trust_exit}
\end{equation}
The raw query, not its reconstruction, anchors the update (\cref{fig:c2u_architecture}). Gate-off and zero-delta rows return the input directly. The $0.2$-rad cap is tighter than the $0.98$ cosine floor and can further reduce the realized angle. Numerical checks are detailed in \cref{app:method_details}.

The trust region bounds motion, not retrieval damage. For ideal unit queries satisfying a cap $\theta_{\max}$, each unit document's score changes by at most $2\sin(\theta_{\max}/2)$ (\cref{prop:drift_bound}).

\paragraph{Design rationale.}
Each component answers one deployment question. The calibrated gate decides whether a query is edited at all, from the profile and unlabeled calibration controls alone: it needs neither candidate labels nor ranks, scales the requested angle continuously, and passes concept-distant queries through as an exact identity. The raw-query bypass fixes what is edited: the deployed object is the encoder's own vector, the index is never touched, and the reconstruction $z_x$ stays a training device. The residual atom makes the profile direction expressible in bank coordinates even when the frozen atom bank does not span it, so the weight-difference transfer carries the full direction rather than its projection, and the trust region bounds the resulting rotation. In composition, one shared budget caps the summed rotation across concepts, so several active gates cannot compound into a large motion, keeping the deployed operator bounded, query-side, and index-preserving.

\paragraph{Why not profile subtraction.}
The classical atom-coordinate edit subtracts profile mass from query weights (\cref{app:subtraction_formalization}). Under the locked bank and deployed runtime, every measured subtraction variant is an exact inactive identity: frozen query weights overlap the top-$30$ atom profile by only $\approx\!0.0011$, and the applied-angle diagnostic stays near $2\times10^{-5}$~rad. Two validation-only revival probes---knob-only bank retraining and an off-diagonal orthogonality loss---fail to revive subtraction and leave the deployed route unchanged (\cref{app:method_details}). Atom coordinates are what make this failure measurable: query and profile mass occupy almost disjoint weight coordinates, so a subtractive edit has nothing to act on. Suppression must instead add a bounded rotation along the completed direction---the deployed operator. These bounded interventions do not rule out every subtraction method under other banks or objectives.

\section{Evaluation}
\label{sec:experiments}
\subsection{Setup and Evidence Scope}
\label{sec:setup}

We evaluate Amazon, ArXiv, IMDB, MNLI, and TOFU~\citep{McAuley2013HiddenFA,bahraminasr2020imdb,williams2018broad,maini2024tofu}, using MiniLM, bge, e5, and Qwen3 frozen encoders and three seeds. Single-concept stages contain $60$ units, each with $100$ target and $300$ control queries, a shared $100$-target/$300$-control raw index, and $20$ target-profile documents. The final comparison contains $6{,}000$ target and $18{,}000$ control row contributions. Profile, selection, confirmation, and final-evaluation allocations are disjoint; selection and independent confirmation precede the fixed configuration.

\begin{table}[t]
\centering
\small
\setlength{\tabcolsep}{2.5pt}
\caption{\textbf{Task overview}: what each dataset suppresses and preserves, and its query source (\cref{app:dataset_details}). $^{g}$Relaxed global controls. $^{d}$One synthetic template per role.}
\label{tab:task_overview}
\begin{tabular}{@{}llll@{}}
\toprule
\textbf{Dataset} & \textbf{Target} & \textbf{Control} & \textbf{Queries} \\
\midrule
ArXiv & topic & category$^{g}$ & title/abstract \\
TOFU & sensitive topic & author identity & questions \\
Amazon & sentiment & product category & review titles \\
IMDB & movie genre & sentiment & template$^{d}$ \\
MNLI & text genre & logic label & hypotheses \\
\bottomrule
\end{tabular}
\end{table}

\cref{tab:task_overview} summarizes the task each dataset poses: a target concept whose documents should leave the top ranks is paired with a control attribute whose retrieval must survive the edit---a scientific topic against its ArXiv category, a sensitive biography topic against author identity, review sentiment against product category, movie genre against review sentiment, and source genre against the entailment label. Queries are titles or abstracts, natural questions, review titles, or hypotheses; IMDB uses one synthetic template per role. ArXiv controls come from the whole corpus rather than matched categories, so its control statistics describe a relaxed, global comparison.

Single-concept and attack evaluations are posthoc extensions of existing test units, without test tuning or an untouched-test claim. Multi-concept experiments use separate budgets and acceptance gates; only ArXiv $K=4$ reaches a separately gated first-use test. IMDB's row-bootstrap intervals are duplicate-template diagnostics, not strong semantic-sample evidence.

\paragraph{Access and metrics.}
C\textsuperscript{2}U uses profiles and unlabeled calibration controls without candidate access. Raw, Delta-Edit, PISCES, Random, and the portable DARE-GRL mapper are query-side comparisons; INLP, LEACE, CURE, and RAG-Emb-Ctrl additionally fit on $400$ labeled calibration examples per unit. Filter-and-Refill uses candidate/document membership; Rank-Router uses target-document membership as a label-assisted benchmark router. PISCES, DARE-GRL, CURE, and RAG Embedder Control are in-house diagnostic adaptations, not official reproductions~\citep{gurarieh2025precise,ganin2015unsupervised,yu2024language,kocak2025cure,kim2025debiasing}. Their access classes preclude a cross-class dominance interpretation.

Binary H5 Drop is the reduction in \emph{any-target-in-top-five} retrieval on target queries. CN is control own-label Hit@5 loss; TE is the change in any-target-in-top-five retrieval on control queries. Positive CN or TE is a cost; utility-clean means both rate changes are at most $10^{-9}$. We report rate differences in percentage points (pp), with target-cosine changes in the full results.

Dataset summaries average four encoders and three seeds; pooled summaries weight the five datasets equally. Absolute levels and per-dataset intervals are in \cref{tab:absolute_h5_ci}; statistical definitions and resampling scope are in \cref{app:experimental_details}.

\subsection{Single-Concept Results}
\label{sec:exp_concept_erasure}

\begin{table*}[t]
\centering
\small
\definecolor{arrProfile}{HTML}{2A9D8C}
\definecolor{arrLabeled}{HTML}{287271}
\definecolor{arrAdapt}{HTML}{264653}
\definecolor{arrCandidate}{HTML}{E9C46B}
\definecolor{arrMember}{HTML}{E66F51}
\definecolor{arrHero}{HTML}{E9C46B}
\caption{\textbf{Single-concept suppression and utility by dataset.} Percentage-point changes from Raw, averaged over four encoders and three seeds; Pooled weights datasets equally. Tints mark access classes; full results are in Appendix~\ref{app:full_table}.}
\label{tab:rag_comparison_concise}
\setlength{\tabcolsep}{4pt}
\begin{minipage}[t]{0.48\textwidth}
\begin{tabularx}{\linewidth}{Xrrr}
\toprule
Method & H5 Drop $\uparrow$ & CN $\downarrow$ & TE $\downarrow$ \\
\midrule
\multicolumn{4}{l}{\textbf{Amazon}} \\
\rowcolor{arrHero!25} C\textsuperscript{2}U & +6.33 & -0.64 & -12.44 \\
\rowcolor{arrProfile!10} Delta-Edit & 0.00 & 0.00 & -0.08 \\
\rowcolor{arrProfile!10} PISCES & -0.33 & -0.25 & +2.69 \\
\rowcolor{arrProfile!10} Random & +1.17 & -0.17 & +3.00 \\
\rowcolor{arrLabeled!10} INLP & +12.17 & -0.50 & +18.86 \\
\rowcolor{arrLabeled!10} LEACE & +36.58 & -0.61 & -0.83 \\
\rowcolor{arrLabeled!10} RAG-Emb-Ctrl & +68.83 & -0.69 & -33.53 \\
\rowcolor{arrLabeled!10} CURE & 0.00 & 0.00 & 0.00 \\
\rowcolor{arrAdapt!10} DARE-GRL & +22.50 & -0.11 & +38.53 \\
\rowcolor{arrCandidate!10} Filter-and-Refill$^{\ddagger}$ & +0.92 & -0.56 & -4.14 \\
\rowcolor{arrMember!10} Rank-Router$^{\S}$ & +7.00 & -0.64 & -13.83 \\
\midrule
\multicolumn{4}{l}{\textbf{ArXiv$^{r}$}} \\
\rowcolor{arrHero!25} C\textsuperscript{2}U & +5.67 & -1.25 & -7.64 \\
\rowcolor{arrProfile!10} Delta-Edit & 0.00 & +0.03 & 0.00 \\
\rowcolor{arrProfile!10} PISCES & -0.17 & -0.25 & +1.33 \\
\rowcolor{arrProfile!10} Random & +1.08 & +4.92 & +6.31 \\
\rowcolor{arrLabeled!10} INLP & +5.92 & +5.00 & +8.89 \\
\rowcolor{arrLabeled!10} LEACE & +1.00 & +2.19 & +16.19 \\
\rowcolor{arrLabeled!10} RAG-Emb-Ctrl & +69.50 & +5.56 & -21.03 \\
\rowcolor{arrLabeled!10} CURE & 0.00 & 0.00 & 0.00 \\
\rowcolor{arrAdapt!10} DARE-GRL & +22.67 & +68.58 & +58.61 \\
\rowcolor{arrCandidate!10} Filter-and-Refill$^{\ddagger}$ & +38.33 & -1.97 & -18.06 \\
\rowcolor{arrMember!10} Rank-Router$^{\S}$ & +5.67 & -1.08 & -7.64 \\
\midrule
\multicolumn{4}{l}{\textbf{IMDB$^{d}$}} \\
\rowcolor{arrHero!25} C\textsuperscript{2}U & +2.58 & 0.00 & 0.00 \\
\rowcolor{arrProfile!10} Delta-Edit & 0.00 & 0.00 & 0.00 \\
\rowcolor{arrProfile!10} PISCES & +1.83 & 0.00 & 0.00 \\
\rowcolor{arrProfile!10} Random & +2.42 & +2.03 & +23.11 \\
\rowcolor{arrLabeled!10} INLP & 0.00 & 0.00 & +16.67 \\
\rowcolor{arrLabeled!10} LEACE & +1.83 & 0.00 & +8.33 \\
\rowcolor{arrLabeled!10} RAG-Emb-Ctrl & +30.75 & 0.00 & +25.00 \\
\rowcolor{arrLabeled!10} CURE & 0.00 & 0.00 & 0.00 \\
\rowcolor{arrAdapt!10} DARE-GRL & +34.08 & +25.00 & +58.33 \\
\rowcolor{arrCandidate!10} Filter-and-Refill$^{\ddagger}$ & 0.00 & 0.00 & 0.00 \\
\rowcolor{arrMember!10} Rank-Router$^{\S}$ & +2.58 & 0.00 & 0.00 \\
\bottomrule
\end{tabularx}
\end{minipage}%
\hfill%
\begin{minipage}[t]{0.48\textwidth}
\begin{tabularx}{\linewidth}{Xrrr}
\toprule
Method & H5 Drop $\uparrow$ & CN $\downarrow$ & TE $\downarrow$ \\
\midrule
\multicolumn{4}{l}{\textbf{MNLI}} \\
\rowcolor{arrHero!25} C\textsuperscript{2}U & +3.42 & -1.08 & -6.03 \\
\rowcolor{arrProfile!10} Delta-Edit & 0.00 & +0.03 & +0.03 \\
\rowcolor{arrProfile!10} PISCES & -0.25 & -0.08 & +0.42 \\
\rowcolor{arrProfile!10} Random & +1.58 & +2.67 & +3.67 \\
\rowcolor{arrLabeled!10} INLP & +4.00 & +0.22 & +5.83 \\
\rowcolor{arrLabeled!10} LEACE & +1.25 & +0.06 & +7.94 \\
\rowcolor{arrLabeled!10} RAG-Emb-Ctrl & +42.58 & +1.33 & -34.25 \\
\rowcolor{arrLabeled!10} CURE & 0.00 & 0.00 & 0.00 \\
\rowcolor{arrAdapt!10} DARE-GRL & +35.42 & +27.72 & +14.50 \\
\rowcolor{arrCandidate!10} Filter-and-Refill$^{\ddagger}$ & +10.50 & -1.94 & -11.83 \\
\rowcolor{arrMember!10} Rank-Router$^{\S}$ & +3.83 & -1.22 & -6.36 \\
\midrule
\multicolumn{4}{l}{\textbf{TOFU}} \\
\rowcolor{arrHero!25} C\textsuperscript{2}U & +3.25 & -0.44 & -2.14 \\
\rowcolor{arrProfile!10} Delta-Edit & 0.00 & 0.00 & +0.03 \\
\rowcolor{arrProfile!10} PISCES & +0.42 & +0.08 & +0.53 \\
\rowcolor{arrProfile!10} Random & -0.92 & +0.94 & +0.92 \\
\rowcolor{arrLabeled!10} INLP & +6.42 & +6.47 & +9.00 \\
\rowcolor{arrLabeled!10} LEACE & +4.92 & -0.08 & +0.58 \\
\rowcolor{arrLabeled!10} RAG-Emb-Ctrl & +28.92 & -1.19 & -23.78 \\
\rowcolor{arrLabeled!10} CURE & 0.00 & 0.00 & 0.00 \\
\rowcolor{arrAdapt!10} DARE-GRL & +11.33 & +24.58 & +11.72 \\
\rowcolor{arrCandidate!10} Filter-and-Refill$^{\ddagger}$ & +4.00 & -1.19 & -7.31 \\
\rowcolor{arrMember!10} Rank-Router$^{\S}$ & +4.58 & -0.42 & -2.67 \\
\midrule
\multicolumn{4}{l}{\textbf{Pooled}} \\
\rowcolor{arrHero!25} C\textsuperscript{2}U & +4.25 & -0.68 & -5.65 \\
\rowcolor{arrProfile!10} Delta-Edit & 0.00 & +0.01 & -0.01 \\
\rowcolor{arrProfile!10} PISCES & +0.30 & -0.10 & +0.99 \\
\rowcolor{arrProfile!10} Random & +1.07 & +2.08 & +7.40 \\
\rowcolor{arrLabeled!10} INLP & +5.70 & +2.24 & +11.85 \\
\rowcolor{arrLabeled!10} LEACE & +9.12 & +0.31 & +6.44 \\
\rowcolor{arrLabeled!10} RAG-Emb-Ctrl & +48.12 & +1.00 & -17.52 \\
\rowcolor{arrLabeled!10} CURE & 0.00 & 0.00 & 0.00 \\
\rowcolor{arrAdapt!10} DARE-GRL & +25.20 & +29.16 & +36.34 \\
\rowcolor{arrCandidate!10} Filter-and-Refill$^{\ddagger}$ & +10.75 & -1.13 & -8.27 \\
\rowcolor{arrMember!10} Rank-Router$^{\S}$ & +4.73 & -0.67 & -6.10 \\
\bottomrule
\end{tabularx}
\end{minipage}%
\par\smallskip
\begin{minipage}{\textwidth}\small
Raw (zero differences) is omitted; positive CN or TE is a cost. $^{\ddagger}$Candidate/document access; $^{\S}$membership-assisted; $^{r}$relaxed global control; $^{d}$duplicate-template diagnostic.
\end{minipage}
\end{table*}

\paragraph{Suppression and utility.}
\cref{tab:rag_comparison_concise} shows larger C\textsuperscript{2}U suppression on Amazon and ArXiv than on MNLI and TOFU. Aggregate-derived pooled binary target Hit@5 decreases from $94.7333\%$ to $90.4833\%$, a $4.25$ pp drop rather than near-complete removal. H5 Drop is positive with non-positive CN and TE on every dataset; pooled CN is $-0.68$ pp and TE $-5.65$ pp.

\paragraph{Suppression--utility trade-offs.}
RAG-Emb-Ctrl reaches pooled H5 Drop $+48.12$ pp, with utility/exposure costs including IMDB TE $+25.00$ pp; CURE is a utility-gated no-op and does not suppress. DARE-GRL reaches $+25.20$ pp with CN $+29.16$ pp and TE $+36.34$ pp. Filter-and-Refill ($+10.75$ pp) and Rank-Router ($+4.73$ pp) exceed C\textsuperscript{2}U with utility-clean outcomes under stronger access. Full results and the Pareto view are in \cref{app:full_table,fig:pareto_panels}.

\paragraph{Encoder heterogeneity.}
The dataset means hide a wide encoder-level spread (\cref{tab:full_h5_drop,tab:full_cn_delta,tab:full_te_delta}). On Amazon, C\textsuperscript{2}U's H5 Drop ranges from $2.0$ pp with bge to $14.0$ pp with e5; ArXiv is the most uniform ($4.3$--$8.0$ pp), MNLI spans $1.7$--$6.7$ pp and TOFU $1.3$--$5.0$ pp; on IMDB only bge shows a drop ($10.3$ pp). e5 is the most responsive encoder on three datasets and Qwen3 the least on three. CN is non-positive in $19$ of $20$ dataset--encoder cells (MNLI with Qwen3: $+0.67$ pp) and TE in all $20$. The pooled $4.25$ pp thus summarizes behaviours that differ sevenfold across encoders.

\paragraph{Operating point.}
\label{ssec:operating_regime}
The deployed configuration is the validation-selected residual coefficient $\beta=1$ with the $0.2$-rad cap and $0.98$ cosine floor; the validation sweep over larger caps and smaller $\beta$ found no eligible alternative (\cref{app:method_details}).

\subsection{Analysis of Bounded Editing}
\label{ssec:editing_analysis}

\begin{figure*}[t]
\centering
\includegraphics[width=\textwidth]{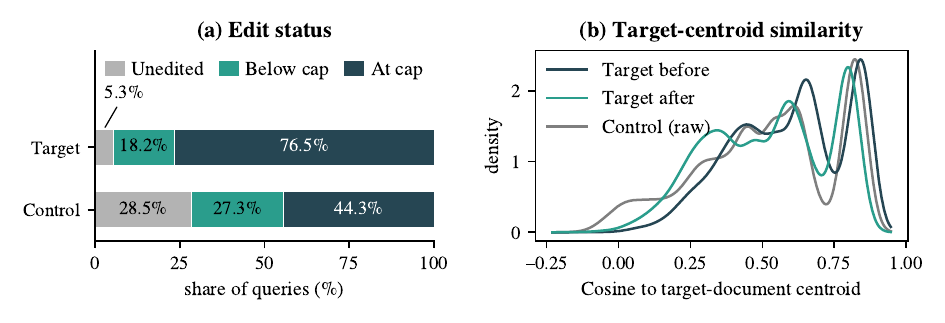}
\caption{\textbf{Bounded editing behavior.} (a) Share of target queries ($n{=}6{,}000$) and control queries ($n{=}18{,}000$) that are unedited, edited below the $0.2$~rad cap, or edited at the cap; each bar sums to its group. (b) Cosine to the target-document centroid for target queries before and after editing; \emph{Control (raw)} is the unedited control state.}
\label{fig:editing_analysis}
\end{figure*}

\cref{fig:editing_analysis} summarizes the bounded operator on the posthoc rows. The gate admits most queries: $94.7\%$ of target and $71.5\%$ of control queries receive an edit. The cap then binds: $76.5\%$ of target and $44.3\%$ of control queries sit at the $0.2$~rad boundary, so realized motion is usually below the requested angle. Panel (b) shows after-edit target queries at lower cosine to the target-document centroid. This view is consistent with the Hit@5 drops in \cref{tab:rag_comparison_concise}; the retrieval effect is measured there, not inferred from the shift. Two cautions remain: \emph{Control (raw)} is unedited while most control queries are edited, and centroid cosine is one direction, not full semantic alignment. Full angle distributions and retrieval examples: \cref{fig:applied_angle_hist,tab:case_study}.

\subsection{Suppression and Exposure under Query Transforms}
\label{subsec:exp2_attack}

\begin{table*}[t]
\centering
\scriptsize
\definecolor{arrProfile}{HTML}{2A9D8C}
\definecolor{arrLabeled}{HTML}{287271}
\definecolor{arrAdapt}{HTML}{264653}
\definecolor{arrCandidate}{HTML}{E9C46B}
\definecolor{arrMember}{HTML}{E66F51}
\definecolor{arrHero}{HTML}{E9C46B}
\caption{\textbf{Suppression and control exposure under query transforms.} Paired percentage-point changes from Raw, averaged over the three query conditions (Standard, Injection, Zou-noise) per dataset; Pooled weights the five datasets equally. Cell counts and signed worst values: Table~\ref{tab:attack_counts}; full per-condition matrix: Table~\ref{tab:attack_full}.}
\label{tab:attack_baselines_cer_compact}
\setlength{\tabcolsep}{4pt}
\begin{tabular}[t]{@{}l@{\hspace{1em}}l@{}}
\begin{tabular}[t]{@{}lrrr@{}}
\toprule
\multicolumn{4}{@{}l}{\textbf{Amazon}} \\
Method & Density $\Delta\downarrow$ & CER $\Delta\downarrow$ & Utility loss $\downarrow$ \\
\midrule
\rowcolor{arrHero!25} C\textsuperscript{2}U & -16.4 & -13.7 & 0.0 \\
\rowcolor{arrProfile!10} PISCES & -1.2 & +4.3 & 0.0 \\
\rowcolor{arrProfile!10} Random & -0.4 & +6.4 & 0.0 \\
\rowcolor{arrLabeled!10} INLP & -15.8 & +17.6 & 0.0 \\
\rowcolor{arrLabeled!10} LEACE & -21.9 & +2.4 & 0.0 \\
\rowcolor{arrLabeled!10} RAG-Emb-Ctrl & -34.7 & -31.4 & 0.0 \\
\rowcolor{arrLabeled!10} CURE & 0.0 & 0.0 & 0.0 \\
\rowcolor{arrAdapt!10} DARE-GRL & -9.2 & +46.0 & 0.0 \\
\bottomrule
\end{tabular}
&
\begin{tabular}[t]{@{}lrrr@{}}
\toprule
\multicolumn{4}{@{}l}{\textbf{MNLI}} \\
Method & Density $\Delta\downarrow$ & CER $\Delta\downarrow$ & Utility loss $\downarrow$ \\
\midrule
\rowcolor{arrHero!25} C\textsuperscript{2}U & -14.9 & -6.8 & +0.3 \\
\rowcolor{arrProfile!10} PISCES & -2.3 & +0.4 & -0.2 \\
\rowcolor{arrProfile!10} Random & -3.9 & +1.2 & +1.2 \\
\rowcolor{arrLabeled!10} INLP & -17.7 & +4.8 & -0.9 \\
\rowcolor{arrLabeled!10} LEACE & -8.4 & +3.3 & -0.3 \\
\rowcolor{arrLabeled!10} RAG-Emb-Ctrl & -40.6 & -47.5 & +5.7 \\
\rowcolor{arrLabeled!10} CURE & 0.0 & 0.0 & 0.0 \\
\rowcolor{arrAdapt!10} DARE-GRL & -29.8 & +6.6 & +6.9 \\
\bottomrule
\end{tabular}
\\[6pt]
\begin{tabular}[t]{@{}lrrr@{}}
\toprule
\multicolumn{4}{@{}l}{\textbf{ArXiv$^{r}$}} \\
Method & Density $\Delta\downarrow$ & CER $\Delta\downarrow$ & Utility loss $\downarrow$ \\
\midrule
\rowcolor{arrHero!25} C\textsuperscript{2}U & -17.7 & -11.7 & -2.2 \\
\rowcolor{arrProfile!10} PISCES & -2.0 & +0.6 & -0.2 \\
\rowcolor{arrProfile!10} Random & -5.6 & +4.6 & +4.7 \\
\rowcolor{arrLabeled!10} INLP & -34.1 & +2.0 & +4.1 \\
\rowcolor{arrLabeled!10} LEACE & -14.0 & +11.8 & +2.2 \\
\rowcolor{arrLabeled!10} RAG-Emb-Ctrl & -72.4 & -21.9 & +5.3 \\
\rowcolor{arrLabeled!10} CURE & 0.0 & 0.0 & 0.0 \\
\rowcolor{arrAdapt!10} DARE-GRL & -50.7 & +33.8 & +63.6 \\
\bottomrule
\end{tabular}
&
\begin{tabular}[t]{@{}lrrr@{}}
\toprule
\multicolumn{4}{@{}l}{\textbf{TOFU}} \\
Method & Density $\Delta\downarrow$ & CER $\Delta\downarrow$ & Utility loss $\downarrow$ \\
\midrule
\rowcolor{arrHero!25} C\textsuperscript{2}U & -4.4 & -4.5 & -0.1 \\
\rowcolor{arrProfile!10} PISCES & -0.6 & +1.3 & +0.1 \\
\rowcolor{arrProfile!10} Random & -0.1 & +1.5 & +0.9 \\
\rowcolor{arrLabeled!10} INLP & -5.6 & +14.0 & +2.2 \\
\rowcolor{arrLabeled!10} LEACE & -5.1 & +3.2 & 0.0 \\
\rowcolor{arrLabeled!10} RAG-Emb-Ctrl & -20.0 & -30.6 & +0.4 \\
\rowcolor{arrLabeled!10} CURE & 0.0 & 0.0 & 0.0 \\
\rowcolor{arrAdapt!10} DARE-GRL & -0.3 & +18.4 & +13.8 \\
\bottomrule
\end{tabular}
\\[6pt]
\begin{tabular}[t]{@{}lrrr@{}}
\toprule
\multicolumn{4}{@{}l}{\textbf{IMDB$^{d}$}} \\
Method & Density $\Delta\downarrow$ & CER $\Delta\downarrow$ & Utility loss $\downarrow$ \\
\midrule
\rowcolor{arrHero!25} C\textsuperscript{2}U & -11.1 & -0.8 & 0.0 \\
\rowcolor{arrProfile!10} PISCES & -0.6 & +0.2 & 0.0 \\
\rowcolor{arrProfile!10} Random & -5.4 & +14.7 & +0.7 \\
\rowcolor{arrLabeled!10} INLP & -16.7 & +10.1 & -0.1 \\
\rowcolor{arrLabeled!10} LEACE & -5.6 & +5.7 & +0.1 \\
\rowcolor{arrLabeled!10} RAG-Emb-Ctrl & -34.3 & +8.9 & -0.1 \\
\rowcolor{arrLabeled!10} CURE & 0.0 & 0.0 & 0.0 \\
\rowcolor{arrAdapt!10} DARE-GRL & -29.4 & +27.9 & 0.0 \\
\bottomrule
\end{tabular}
&
\begin{tabular}[t]{@{}lrrr@{}}
\toprule
\multicolumn{4}{@{}l}{\textbf{Pooled$^{p}$}} \\
Method & Density $\Delta\downarrow$ & CER $\Delta\downarrow$ & Utility loss $\downarrow$ \\
\midrule
\rowcolor{arrHero!25} C\textsuperscript{2}U & -12.9 & -7.5 & -0.4 \\
\rowcolor{arrProfile!10} PISCES & -1.4 & +1.4 & -0.1 \\
\rowcolor{arrProfile!10} Random & -3.1 & +5.7 & +1.5 \\
\rowcolor{arrLabeled!10} INLP & -18.0 & +9.7 & +1.1 \\
\rowcolor{arrLabeled!10} LEACE & -11.0 & +5.3 & +0.4 \\
\rowcolor{arrLabeled!10} RAG-Emb-Ctrl & -40.4 & -24.5 & +2.2 \\
\rowcolor{arrLabeled!10} CURE & 0.0 & 0.0 & 0.0 \\
\rowcolor{arrAdapt!10} DARE-GRL & -23.9 & +26.5 & +16.9 \\
\bottomrule
\end{tabular}
\\
\end{tabular}
\par\smallskip
\begin{minipage}{\textwidth}\small
Density: target-label fraction in the target-query top-5; CER: any-target occurrence in the packed control context (budget $B{=}512$, at most $50$ documents); utility: control own-label Hit@10 loss. CURE never suppresses. $^{r}$ArXiv: relaxed global control. $^{d}$IMDB: duplicate-template diagnostic. $^{p}$Pooled: equal-weight mean over the five datasets (fifteen dataset--condition cells). Metric definitions and posthoc scope: \S\ref{subsec:exp2_attack}.
\end{minipage}
\end{table*}

We compare each method with Raw under the same condition: standard queries, instruction injection, or Zou-noise suffixes~\citep{zou2023universal}. \emph{Target density $\Delta$} is the change in the fraction of top-five documents carrying the target label, averaged over target queries; it is not binary H5 Drop. \emph{CER $\Delta$} records whether any target document enters the packed context ($B=512$ tokens, at most $50$ documents), averaged over \emph{control queries}. Attack utility is control own-label Hit@10 loss, distinct from CN and TE. The two describe different populations, so a decrease in one need not entail a decrease in the other; both are reported together.

\cref{tab:attack_baselines_cer_compact} shows why the two must be read together. INLP lowers Amazon target density by $15.8$ pp while raising control exposure by $17.6$ pp: target documents leave the target queries' top five but enter other users' packed contexts. RAG-Emb-Ctrl lowers both on four datasets, most strongly on ArXiv ($-72.4$ and $-21.9$ pp), but raises IMDB exposure ($+8.9$ pp) and costs control utility ($+5.3$ pp on ArXiv, $+5.7$ on MNLI). C\textsuperscript{2}U's density and CER means are negative on every dataset with utility changes between $-2.2$ and $+0.3$ pp.

\cref{tab:attack_baselines_cer_compact} reports per-dataset means over the three conditions and an equal-weight pooled block. \cref{tab:attack_counts} counts the $15$ dataset--condition cells on unrounded values (per-condition matrix: \cref{tab:attack_full}), since an average can hide a condition-level increase: suppression requires density $\Delta<0$; divergence additionally requires CER $\Delta>0$. Counts are descriptive signs, not significance tests; standard is a no-attack condition. C\textsuperscript{2}U suppresses in all $15$ cells with $0/15$ divergent. Its attack utility change still ranges from $-5.2$ to $+2.6$ pp, so this does not establish universal utility preservation under attack.

INLP and LEACE diverge in $13/15$ cells; PISCES, DARE-GRL, and Random in $11/15$. RAG-Emb-Ctrl suppresses in all cells but diverges in the three IMDB cells (CER up to $+16.7$ pp); the duplicate-template limitation applies. CURE's $0/15$ divergence is trivial: it suppresses in zero cells. Among evaluated suppressing profile-only methods, only C\textsuperscript{2}U avoids density--exposure divergence under these query transforms. Stronger-access filters and routers are excluded by the attack contract. The full matrices and both posthoc arms also appear in \cref{app:attack_protocol}.

\subsection{Multi-Concept Composition}
\label{ssec:multi_concept_main}

Each concept $c$ has its own direction $g_c$, gate $s_c(q)$, and augmented-bank coefficients. Angular requests start at $0.3s_c(q)$ and shrink proportionally whenever their sum exceeds $\min(0.3\max_c s_c(q),0.2)$. Each allocated angle converts to a feasible coefficient scale; the atom-coordinate deltas are added and pass through the shared trust-region update. The comparator instead steers along the normalized average of unit concept directions under a single max gate---a different gating and allocation rule.

\begin{figure}[t]
\centering
\includegraphics[width=\columnwidth]{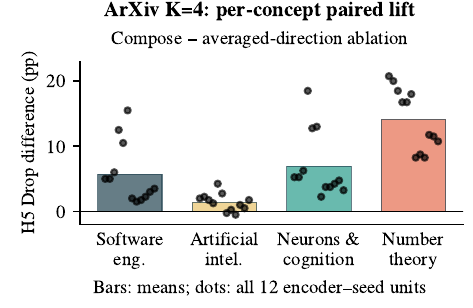}
\caption{\textbf{ArXiv $K=4$ per-concept H5 lift.} Compose minus the averaged-direction comparator: means and all twelve encoder--seed unit differences; these conceptwise differences are not the min-concept endpoint reported in the text.}
\label{fig:multi_concept_lift}
\end{figure}

The capacity-expanded ArXiv $K=4$ arm passes selection and independent confirmation before first access to fresh test rows. Its endpoint is
\begin{equation}
 \frac1{12}\sum_u\left[\min_c H^{\rm compose}_{u,c}
                    -\min_c H^{\rm avg}_{u,c}\right],
\label{eq:multi_endpoint}
\end{equation}
where $H$ is per-concept H5 Drop. Test lift is $+0.04125$, CI95 $[+0.0325,+0.050625]$, from $4{,}000$ bootstraps of the twelve paired unit differences. All four compose concept cells are positive; \cref{fig:multi_concept_lift} separately shows paired conceptwise differences. Filter-and-Refill-multi remains a non-gating document-membership oracle reference ($+0.7181$).

Independent gates decide, per concept, whether a query carries evidence for that concept and how much rotation it requests, so a query tied to one concept is edited mainly toward that concept. The shared budget rescales all requests so their sum stays within the single-concept budget: when several gates fire, each receives a smaller share. The per-concept lifts in \cref{fig:multi_concept_lift} are all positive but uneven: $+1.4$ pp for Artificial Intelligence (positive in $10$ of $12$ units), $+5.7$ for Software Engineering, $+6.9$ for Neurons and Cognition, and $+14.2$ for Number Theory. The spread is consistent with concepts differing in gate activity and direction separation, not with properties of individual atoms.

The expansion uses larger budgets than earlier arms, so the ladder is not a controlled experiment on sample size alone. It is the only untouched-test arm here: its $K=2$ design fails selection, and the earlier arms are validation-only or fail acceptance, as \cref{app:multi_concept,fig:concept_composition_viz} document.

\section{Conclusion}
\label{sec:conclusion}

C\textsuperscript{2}U edits the query, not the encoder or the index: a calibrated gate admits concept-evidenced queries, the profile direction enters atom coordinates via a residual atom, and the difference rotates the raw query by at most $0.2$~rad before retrieval. The selected configuration lowers target Hit@5 by $4.25$ pp pooled with non-positive control costs on every dataset mean, keeps density and control exposure changes non-positive in every dataset--condition cell under query transforms, and composes four ArXiv concepts with positive lift on each. It suits fixed-index retrievers tolerating partial, bounded suppression rather than removal.

\FloatBarrier
\section*{Limitations}
\label{sec:limitations}

Three limitations bound the claims. First, suppression is partial: the aggregate-derived after-edit target Hit@5 stays near $90\%$ pooled, so C\textsuperscript{2}U reduces exposure rather than removing a concept. Second, the data and evaluation scope is narrow: five English datasets, four encoders, and $100$/$300$-query units; IMDB has one template per role, ArXiv uses relaxed global controls, single-concept and attack results are posthoc extensions of frozen test units, and the untouched multi-concept claim covers ArXiv $K=4$ only, with capacity and granularity limiting other datasets. Third, we evaluate retrieval only: generated-output safety, multilingual or open-domain corpora, and defense-aware adaptive attackers are untested, and low runtime access does not establish a measured cost advantage.

\section*{Ethical considerations}
\label{sec:broader_impact}

Retrieval-side suppression may reduce unwanted exposure but can also hide legitimate evidence or enforce biased policies. Concept definitions, access to profiles, and effects on non-target retrieval require transparent governance and auditing. This method is not a substitute for privacy deletion, consent, output safeguards, or monitoring. The work evaluates retrieval rather than safety of generated content.

\bibliography{bib/references}
\bibliographystyle{preprint}

\newpage
\appendix

\section{Experimental Details}
\label{app:experimental_details}

\subsection{Implementation and Data Protocol}

\paragraph{Retrieval framework.}
All locked experiments share one evaluation pipeline per unit: the raw frozen-encoder document embeddings are encoded once per unit, shared bit-identically across methods, and never mutated; retrieval similarity is the dot product of $\ell_2$-normalized embeddings. The four encoders are \texttt{sentence-transformers/all-MiniLM-L6-v2}, \texttt{BAAI/bge-base-en-v1.5}, \texttt{intfloat/e5-base-v2}, and \texttt{Qwen/Qwen3-Embedding-0.6B}, each pinned to a specific revision with a local snapshot manifest.

\paragraph{Index construction and budgets.}
The single-concept and attack cohorts use $100$ target/$300$ control index documents and $100$ target/$300$ control queries per unit, with $20$ target-profile documents. These budgets do not describe all multi-concept stages. Within the single-concept allocation, profile, selection, confirmation, and test rows are disjoint. Multi-concept budgets and the known v2 overlap defect are reported in \cref{app:multi_concept}; blocked capacity rows are not zero-filled or replenished from another split.

\paragraph{Lock ladder and reuse.}
Single-concept selection and independent confirmation precede its method lock. Its final evaluation and the attacks extend previously frozen test units without test tuning or an untouched-test claim. Recent baselines have their own selection, confirmation, and posthoc-final records; their provenance-complete replay occurred after final outcomes were observed and closes provenance, not prospective selection. Multi-concept test access requires its separately specified acceptance lock. Only expanded ArXiv $K=4$ first accesses its allocated test rows after passing that gate. Common infrastructure does not make these one protocol or imply identical bootstrap populations across arms.

\paragraph{Hyperparameters and bank dimensions.}
\label{app:bank_dimensions}
The deployed single-concept setting appends one row to each frozen $N$-atom bank, with $20$ target-profile documents, residual-coefficient multiplier $\beta=1$, cap $0.2$ rad, and floor $0.98$. The frozen validation assets have $N=64$ for ArXiv under every encoder. For Amazon, IMDB, MNLI, and TOFU, $N=768$ for MiniLM, $1{,}536$ for bge/e5, and $2{,}048$ for Qwen3; embedding dimensions are respectively $384$, $768$, and $1{,}024$. These are asset dimensions, not a newly selected bank-size sweep. The historical top-$30$ atom-profile truncation is not the deployed raw-space gate. Configuration selection uses validation only; the posthoc evaluation does not retune it.

\subsection{Evaluation Metrics}

Let $\mathcal{Q}_t$ and $\mathcal{Q}_c$ denote the sets of target queries and control queries, respectively. For a given query $q$, $\text{TopK}(q)$ represents the set of labels or categories associated with the top-$K$ retrieved documents. H5, TC, and CN are reductions (Base minus Method), while TE is an exposure change (Method minus Base); the equations below specify each sign. The locked metric names map as follows: Hit@$K$ Drop $=$ \texttt{hit\_at\_5\_drop} (H5), Target Cos Drop $=$ TC, Control/Neighbor Delta $=$ CN, Target Exposure $\Delta$ $=$ TE.

\paragraph{Erasure and utility in the standard setting.}
\begin{enumerate}
    \item \textbf{Target Hit@K Drop ($\uparrow$)} measures the reduction in retrieval success for target queries:
    \begin{equation}
    \begin{gathered}
        \text{Hit@K Drop} = \frac{1}{|\mathcal{Q}_t|} \sum_{q \in \mathcal{Q}_t} \\
        \Bigl( \mathbf{1}[t \in \text{TopK}_{\text{Base}}(q)] \\
        {} - \mathbf{1}[t \in \text{TopK}_{\text{Method}}(q)] \Bigr).
    \end{gathered}
    \end{equation}

    \item \textbf{Target Cos Drop ($\uparrow$)} quantifies the reduction in cosine similarity between target-query representations and the target concept centroid. Let $\mu_t$ be the centroid of all target document embeddings in the base index:
    \begin{equation}
    \begin{gathered}
        \Delta_{\cos} = \frac{1}{|\mathcal{Q}_t|} \sum_{q \in \mathcal{Q}_t} \\
        \Bigl( \cos(\mathbf{v}_{\text{base}}(q), \mu_t) \\
        {} - \cos(\mathbf{v}_{\text{method}}(q), \mu_t) \Bigr).
    \end{gathered}
    \end{equation}

    \item \textbf{Control/Neighbor Delta ($\downarrow$)} evaluates the impact on non-target retrieval utility. Let $y_q$ be the ground-truth control label for $q \in \mathcal{Q}_c$:
    \begin{equation}
    \begin{gathered}
        \text{Utility Delta} = \frac{1}{|\mathcal{Q}_c|} \sum_{q \in \mathcal{Q}_c} \\
        \Bigl( \mathbf{1}[y_q \in \text{TopK}_{\text{Base}}(q)] \\
        {} - \mathbf{1}[y_q \in \text{TopK}_{\text{Method}}(q)] \Bigr).
    \end{gathered}
    \end{equation}

    \item \textbf{Target Exposure $\Delta$ ($\downarrow$)} measures the change in false-positive retrieval of target documents by control queries:
    \begin{equation}
    \begin{gathered}
        \text{Exposure } \Delta = \frac{1}{|\mathcal{Q}_c|} \sum_{q \in \mathcal{Q}_c} \\
        \Bigl( \mathbf{1}[t \in \text{TopK}_{\text{Method}}(q)] \\
        {} - \mathbf{1}[t \in \text{TopK}_{\text{Base}}(q)] \Bigr).
    \end{gathered}
    \end{equation}
\end{enumerate}

All three locked main hit metrics use $K=5$: H5 and TE concern target-label presence, while CN concerns control own-label presence. Attack utility instead uses own-label Hit@$10$. Rate differences in percentage points multiply the above quantities by $100$; target-cosine differences are not rates.

\paragraph{Absolute levels and intervals.}
The main table's Raw and C\textsuperscript{2}U after-rates are aggregate-derived. Per dataset, Raw H5 is reconstructed as a comparator's after-hit rate plus its locked H5 Drop; all seven reconstructions agree to floating-point precision. C\textsuperscript{2}U's own drop is then subtracted. The five Raw/after pairs are $96.8333/90.5000\%$ (Amazon), $97.8333/92.1667\%$ (ArXiv), $100.0000/97.4167\%$ (IMDB), $94.5833/91.1667\%$ (MNLI), and $84.4167/81.1667\%$ (TOFU). Equal-weight pooling gives $94.7333/90.4833\%$. These are not new measurements. The per-dataset binary H5 Drop intervals reuse the existing standard/no-attack matched replay and its $4{,}000$ paired-row bootstrap samples. There is no native pooled H5 CI in the final summary; we neither average interval endpoints nor substitute validation intervals. IMDB row resampling does not create independent templates.

\paragraph{Robustness under query transforms.}
We evaluate Standard (no attack), Injection, and Zou-noise. Changes are relative to Raw under the same condition, paired by query.
\begin{itemize}
    \item \textbf{Target density $\Delta$ (pp) ($\downarrow$)}: for $D_5(q)=\frac15\sum_{j\in\operatorname{Top5}(q)}\mathbf{1}[j\in\mathcal D_t]$, this is $100\mathbb E_{\mathcal Q_t}[D_5^{\rm Method}-D_5^{\rm Raw}]$. It is a target-document fraction, not binary H5.
    \item \textbf{CER $\Delta$ (pp) ($\downarrow$)}: let $\operatorname{Docs}(q;512,50)$ be the ranked context packed under the locked token accounting, with budget $512$, at most $50$ documents, and partial last-document inclusion where allowed. Then
    \begin{equation}
    \operatorname{CER}(q)=\mathbf{1}[\operatorname{Docs}(q;512,50)\cap\mathcal D_t\ne\emptyset].
    \end{equation}
    The reported delta is $100\mathbb E_{\mathcal Q_c}[\operatorname{CER}^{\rm Method}-\operatorname{CER}^{\rm Raw}]$: the population is \emph{control queries}.
    \item \textbf{Utility Drop (pp) ($\downarrow$)}: the decrease in control own-label Hit@$10$ under the same condition. CN instead uses own-label Hit@$5$; TE uses target Hit@$5$.
\end{itemize}
Suppression and divergence counts use unrounded dataset--condition signs, not significance tests.

\begin{table*}[t]
\centering
\small
\caption{\textbf{Absolute target Hit@5 and dataset uncertainty.} Raw and C\textsuperscript{2}U levels are aggregate-derived; intervals describe C\textsuperscript{2}U's binary H5 Drop.}
\label{tab:absolute_h5_ci}
\setlength{\tabcolsep}{6pt}
\begin{tabular}{@{}lrrrr@{}}
\toprule
Dataset & Raw H5 (\%) & C\textsuperscript{2}U H5 (\%) & Drop (pp) & 95\% CI (pp) \\
\midrule
Amazon & 96.83 & 90.50 & +6.33 & [4.83, 7.92] \\
ArXiv$^{r}$ & 97.83 & 92.17 & +5.67 & [4.42, 7.00] \\
IMDB$^{d}$ & 100.00 & 97.42 & +2.58 & [1.75, 3.50] \\
MNLI & 94.58 & 91.17 & +3.42 & [2.25, 4.50] \\
TOFU & 84.42 & 81.17 & +3.25 & [1.92, 4.58] \\
\bottomrule
\end{tabular}
\par\smallskip
\begin{minipage}{\textwidth}\small
Intervals reuse the existing standard/no-attack paired-row replay (4,000 bootstrap samples; 1,200 target-row contributions per dataset). No pooled H5 interval is reported. $^{r}$Relaxed global control. $^{d}$IMDB has one unique template per target/control role; its row-bootstrap interval is descriptive, not uncertainty over independent semantic examples.
\end{minipage}
\end{table*}

\subsection{ACR Training and Reconstruction}
\label{sec:training_paradigm}
\label{ssec:ACR}

ACR trains a dictionary and lightweight projection head atop a frozen encoder. Atoms have no pre-assigned semantic labels and start from standard-normal samples. Given frozen representation $h_x$, a bilinear compatibility score and temperature-scaled softmax give
\begin{equation}
\label{eq:semantic_recipe_flat}
 w_{x,i}=\frac{\exp(\operatorname{sim}(h_x,a_i)/\tau)}{\sum_j\exp(\operatorname{sim}(h_x,a_j)/\tau)}.
\end{equation}
The weights are peaked but not literally sparse. With rows $a_i$ in $\mathcal A$, training reconstruction is
\begin{equation}
 z_x=\frac{\mathcal A^\top w_x}{\|\mathcal A^\top w_x\|_2}
     =\frac{\sum_iw_{x,i}a_i}{\|\sum_iw_{x,i}a_i\|_2}.
\end{equation}
This reconstruction is used by the training objective. At deployed retrieval the index contains raw encoder vectors, and the raw query receives only \eqref{eq:dense_delta}; replacing it by $z_x$ would define a different method. Entropy concentration does not prove that the deployed additive edit has small support. The full training objective follows.

\subsection{Optimization Setup}
\label{sec:optimization_setup}

\paragraph{Training dynamics.}
To preserve the retrieval geometry of the pre-trained dense retriever, we freeze the encoder parameters throughout ACR training. Optimization is restricted to the atom bank $\mathcal{A}$ and the lightweight projection head. We use AdamW with a batch size of $32$ and an initial learning rate of $10^{-3}$, a \texttt{ReduceLROnPlateau} scheduler (factor $0.5$, patience one epoch), a $90/10$ train-validation split, and early stopping within a small epoch budget.

\paragraph{Hyperparameter configuration.}
The reported training setup uses alignment penalty weight $\lambda_{\mathrm{align}}=1.0$, entropy regularization weight $\lambda_{\mathrm{ent}}=0.02$, retrieval margin $m_r=0.3$, and scoring temperature $\tau=0.1$. Bank dimensions vary as specified in \cref{app:bank_dimensions}; the locked reproduction probe establishes bit-identical recovery of its MNLI/MiniLM/seed-1 checkpoint, not one universal checkpoint or bank size for all units.

\paragraph{Training objective (ACR).}
We freeze the transformer encoder and optimize only the atom bank and a lightweight projection head. For each triplet $(q,d^+,d^-)$, ACR outputs a composed Euclidean embedding $e^{\mathrm{acr}}_x$ and atom weights $w_x\in\Delta^{|\mathcal{A}|-1}$ for $x\in\{q,d^+,d^-\}$. We $L_2$-normalize the composed embeddings and compute cosine similarities by dot product:
$$
z_x = \frac{e^{\mathrm{acr}}_x}{\|e^{\mathrm{acr}}_x\|_2+\varepsilon},
\qquad
\cos(z_a,z_b)= z_a^\top z_b.
$$
Let $e^{\mathrm{base}}_x$ denote the frozen dense retriever embedding for input $x$, and let $\bar e^{\mathrm{base}}_x = e^{\mathrm{base}}_x/(\|e^{\mathrm{base}}_x\|_2+\varepsilon)$. The total loss is:
$$
\mathcal{L}
=
\mathcal{L}_{\mathrm{tri}}
+
\lambda_{\mathrm{align}}\mathcal{L}_{\mathrm{align}}
+
\lambda_{\mathrm{ent}}\mathcal{L}_{\mathrm{ent}},
$$
where:

\noindent\textbf{(i) Cosine triplet retrieval loss.}
\begin{equation*}
\begin{split}
\mathcal{L}_{\mathrm{tri}}
=\max\bigl(0,\;&m_r - \cos(z_q,z_{d^+})\\
&{}+\cos(z_q,z_{d^-})\bigr).
\end{split}
\end{equation*}

\noindent\textbf{(ii) Cosine alignment loss.}
We encourage each ACR-composed embedding to remain close to the corresponding frozen retriever embedding:
$$
\mathcal{L}_{\mathrm{align}}
=
\sum_{x\in\{q,d^+,d^-\}}
\Bigl(1-\cos(z_x,\bar e^{\mathrm{base}}_x)\Bigr).
$$

\noindent\textbf{(iii) Entropy penalty on atom weights.}
To obtain sharper training-weight distributions, we penalize entropy; this does not imply sparsity of the deployed least-squares coefficient edit:
$$
\mathcal{L}_{\mathrm{ent}}
=
\sum_{x\in\{q,d^+,d^-\}}
\Bigl(-\sum_{k=1}^{|\mathcal{A}|} w_{x,k}\log(w_{x,k}+\epsilon)\Bigr).
$$
Here, $m_r>0$ is the triplet margin, and $\lambda_{\mathrm{align}},\lambda_{\mathrm{ent}}>0$ are scalar weights.

\subsection{Historical Profile Subtraction (Not the Deployed Edit)}
\label{sec:inference_erasure}
\label{app:subtraction_formalization}

The historical formulation averages the weights of $20$ profile documents and normalizes their non-negative mass:
\begin{equation}
 \tilde\pi_c=\operatorname{ReLU}(\mathbb E_{x\in\mathcal D_c}[w_x]),\qquad
 \pi_c=\tilde\pi_c/\textstyle\sum_i\tilde\pi_c[i].
\end{equation}
It retains the top $K_{\rm atom}=30$ coordinates and renormalizes. The original exposition used an atom-cosine activity
\begin{equation}
 \alpha_x=\max(0,\cos(\hat w_x,\hat\pi_c)),
\end{equation}
followed by
\begin{equation}
 w'_x\propto\operatorname{ReLU}(w_x-\lambda\alpha_x\pi_c).
\end{equation}
The tested subtraction family includes native/hybrid weight normalizations; these historical expressions do not specify the selected additive operator. In particular, $\alpha_x$ is not an input to the deployed gate, and its vanishing is not the deployed identity condition. Under the locked bank and deployed runtime the measured subtraction variants are exact inactive identities, with approximately $2\times10^{-5}$ rad in the diagnostic angular readout. That mechanism finding does not prove every subtraction method must fail.

\subsection{Numerical Details and Validation-Only Mechanism Probes}
\label{app:method_details}

\paragraph{Diagonal-O2 and G0.}
The runtime names map onto the main-text terms: the negative diagonal-O2 direction is the diagonally preconditioned direction, the continuous G1 ramp with its conservative G0 fallback is the calibrated gate, and the delta exit is the raw-query trust-region update. For normalized controls $x$, let $m_i=\mathbb E_c[x_i^2]$ and $\bar m=d^{-1}\sum_i m_i$. The diagonal uses $D_{ii}=0.9m_i+0.1\bar m+0.001$, with an implementation denominator floor. The runtime first forms $v=\operatorname{normalize}(\mu_t-\mu_c)$ and $-0.1D^{-1}v/(v^\top D^{-1}v)$, then normalizes this negative vector for least-squares routing. The means are not separately normalized for this direction; they are separately normalized only for the gate score. Non-finite or degenerate directions fail closed. When G1 quantiles are non-finite or not increasing, G0 uses a conservative calibration order-statistic threshold with declared control false-positive rate $0.02$; equality is ineligible. The median-to-$95$th-percentile G1 ramp is otherwise continuous.

\paragraph{Least squares and numerical branches.}
The solve uses true least squares on $\mathcal A^\top$ in float64. Above the residual threshold, $h_\beta=p+\beta e$; below $10^{-6}$ the appended row is normalized $p$ and its coefficient is zero. A degenerate projected vector also fails closed rather than dividing by zero. The fallback row is bookkeeping, not a new independent direction. Exact in-span $g$ is recovered by its projected coefficients; a small nonzero omitted residual causes truncation, and coefficient/bank rounding adds numerical error. The angular scale uses the unit direction of $h_\beta$, but multiplies the original coefficient vector. Gate-off, zero-angle, and non-positive-denominator branches produce zero scale. FP32 subtraction is performed after constructing and casting padded and steered weights. The exit checks $\cos(\operatorname{float32}(\theta_{\rm cap}))$ against the floor, tries $\eta=1$, and otherwise keeps the feasible lower bisection endpoint in $[0,1]$. Inactive rows are copied from the original query array. Finite bisection does not guarantee the exact real-valued boundary point.

\paragraph{Coverage and selected residual coefficient.}
Across $120$ diagnostic validation units, true least-squares projection ratio averages $0.895$, with $96/120$ directions recorded as fully covered. Coverage is not suppression and is not evidence of atom necessity. The residual-coefficient ladder $\beta\in\{1,0.5,0.25\}$ selects $\beta=1$: confirmation primary H5 $+0.0644$, CI95 $[+0.0467,+0.0864]$, with $3/3$ seeds and $4/4$ encoders positive. This validation endpoint is distinct from the posthoc test pooled value. Lowering $\beta$ changes only the residual contribution, not the whole edit. The runtime rejects $\beta=0$; algebraically setting it to zero would still retain the projected component and would not generally yield identity.

\paragraph{Operating-point sweep: negative result.}
Default $\cos(0.2)=0.9800665778>0.98$, so cap $0.2$ is tighter than the floor. Only caps above $\arccos(0.98)=0.2003348423\ldots$ rad are floor-limited. The $\beta=1$ settings at caps $0.3,0.5,1.0$ tie reference primary H5 $+0.049444$ (five-dataset pooled $+0.033000$), while target-cosine values still change slightly. This observed metric tie is not geometric identity with cap $0.2$. At $\beta=0.5$, cap $0.5$, primary H5 falls to $+0.043056$ (five-dataset pooled $+0.029167$). Every setting, including the reference, is ineligible under this sweep because TOFU CN is $+0.000278>0+10^{-9}$. A reference-only confirmation replay reproduces the locked $480$ unit-metrics exactly. The existing selected configuration stands; the sweep authorizes no new successful operating point. Reference cap saturation is $0.5813$ in validation and $80.7\%$/$61.9\%$ of active target/control edits at test; default saturation is at the nominal cap, not at $\arccos(0.98)$.

\paragraph{Subtraction revival probes: both NO-GO.}
Frozen query-weight overlap with the top-$30$ profile is approximately $0.0011$ under the v1 operationalization. Four knob-only bank configurations (reproduction, $1536$ atoms, ten epochs, and their combination, with the reproduction baseline corrected to one epoch) fail the revival criterion: subtraction's applied-angle diagnostic stays near $2\times10^{-5}$ rad and its runtime output stays inactive. The reproduction checkpoint is bitwise identical; augmented H5 is unchanged. The probe's mean-weight cosine near $0.998$ is a different statistic, not a substitute for applied-angle evidence. A second probe adds an off-diagonal orthogonality loss with weights $1$ and $10$: the bank moves, but augmented H5/CN/TE remain unchanged and subtraction stays inactive. Both decisions are NO-GO; neither conditional full arm triggers. These bounded interventions do not exhaust possible losses or establish a general impossibility theorem.

\subsection{Compute Environment}
\label{app:compute_resources}

The original training and evaluation ran on a single GPU of a workstation with 4 NVIDIA RTX 6000 Ada Generation GPUs (no distributed training); the locked re-evaluation reported here ran on a single NVIDIA L20 GPU. C\textsuperscript{2}U does not train a large language model, fine-tune a generator, or require retriever-index rebuilding during concept unlearning; optimization is restricted to the atom bank and lightweight projection/composition modules. Legacy wall-clock latency, peak-memory, and checkpoint-size figures from the original pipeline are not artifact-proven under the locked protocol and are therefore not reported.

\section{Dataset Details}
\label{app:dataset_details}

We evaluate across five datasets covering scientific literature, fictional biographies, product reviews, movie reviews, and logical inference. For each dataset, we define the \textit{Target Concept} to be suppressed and the \textit{Control Attribute} to be preserved. \cref{tab:dataset_overview} summarizes the definitions.

\begin{table*}[t]
\centering
\small
\renewcommand{\arraystretch}{1.1}
\begin{tabular}{lllll}
\toprule
\textbf{Dataset} & \textbf{Domain} & \textbf{Target Concept ($t$)} & \textbf{Control Attribute ($y$)} & \textbf{Query Source} \\
\midrule
\textbf{ArXiv} & Scientific Papers & Specific Topic  & ArXiv Category  & Title / Abstract \\
\textbf{TOFU} & Fictional QA & Sensitive Topic & Author Identity & Natural Question \\
\textbf{Amazon} & Product Reviews & Sentiment & Product Category & Review Title \\
\textbf{IMDB} & Movie Reviews & Movie Genre & Sentiment & Synthetic Template \\
\textbf{MNLI} & Logic Inference & Text Genre & Logic Label & Hypothesis \\
\bottomrule
\end{tabular}
\caption{Overview of datasets used in evaluation. The Target Concept represents the semantic information to be suppressed, while the Control Attribute represents the utility signal to be maintained.}
\label{tab:dataset_overview}
\end{table*}

\paragraph{Why these datasets.}
We select these datasets to stress-test concept suppression under heterogeneous retrieval conditions: (i) domain-specific terminology and hierarchical labels (\texttt{ArXiv}), (ii) identity-linked fictional knowledge where topic-level suppression can interact with author identity (\texttt{TOFU}), (iii) real-world sentiment versus topical content confounding (\texttt{Amazon}), (iv) an adversarial query formulation that explicitly mentions the target concept (\texttt{IMDB}), and (v) logic-conditioned matching where the control signal is relational rather than topical (\texttt{MNLI}). Together, they cover both natural and synthetic queries, multiple domains, and distinct target/control factorization patterns.

\subsection{Hierarchical Scientific Knowledge (\texttt{ArXiv})}

We construct a hierarchical dataset using the \texttt{Cornell-University/arxiv} corpus (Kaggle: \url{https://www.kaggle.com/dsv/7548853}, accessed Jan.~28, 2026) to simulate concept suppression in specialized domains. The dataset uses the arXiv taxonomy, where the \textbf{Control Attribute} is defined as the primary category (L1, e.g., \texttt{cs.CR}, \texttt{cs.CV}). The \textbf{Target Concept} (L2) is derived via heuristic keyword matching within the abstract, such as mapping terms like ``attack'' or ``malware'' to the concept \textit{System Attacks} within the \texttt{cs.CR} domain.

For the retrieval task, each document $d$ consists of the concatenated paper title and abstract. Each query $q$ is generated using a stochastic strategy: $50\%$ of queries use the exact paper title to simulate navigational search, while the remaining $50\%$ use the first sentence of the abstract to simulate semantic search. The locked protocol evaluates the L2 concept \textit{System Attacks} under a relaxed global control (the matched-L1 control pool is capacity-deficient), and this label is preserved in every table.

\subsection{Fictional Knowledge Unlearning (\texttt{TOFU})}

The \texttt{TOFU} (Task of Fictional Unlearning) dataset~\citep{maini2024tofu} consists of question-answer pairs about fictional authors, designed for unlearning tasks. We adapt this dataset for retrieval by treating the answer as the document source. The \textbf{Control Attribute} is the Author ID, representing the identity that should be preserved. The \textbf{Target Concept} is the semantic topic of the fact, such as \textit{Family}, \textit{Location}, or \textit{Work}, which is classified via keyword matching on the question text.

The retrieval setup involves documents formatted as biography records and queries taken directly from the dataset questions. This structure tests whether the retrieval system can suppress topic-level evidence while preserving author-linked retrieval utility. TOFU remains an entity-centric regime: target and non-target facts are tightly coupled around author identifiers, which is why conservative, trust-region-bounded steering is appropriate here.

\subsection{Classification and Inference Tasks}

\paragraph{Product Reviews (\texttt{Amazon}).}
Adapted from the \texttt{amazon\_polarity} dataset, which is derived from the Amazon Reviews corpus~\citep{McAuley2013HiddenFA} and is also included in text-embedding benchmarks such as MTEB/MMTEB~\citep{muennighoff2023mteb,enevoldsen2025mmtebmassivemultilingualtext}, this task aims to suppress sentiment information while preserving product categorization. The \textbf{Target Concept} is sentiment (Positive/Negative), and the \textbf{Control Attribute} is product category, such as \textit{Books} or \textit{Electronics}, extracted via keyword filtering. Queries are derived from review titles, while documents contain full review content.

\paragraph{Movie Reviews (\texttt{IMDB}).}
In contrast to the Amazon dataset, the \texttt{IMDB} dataset~\citep{bahraminasr2020imdb} inverts the roles of sentiment and content. Here, the \textbf{Target Concept} is movie genre, such as \textit{Sci-Fi} or \textit{Romance}, and the \textbf{Control Attribute} is sentiment. Since the original dataset lacks titles, we generate synthetic queries using templates that explicitly mention the target concept, such as ``Find a Positive review about a Sci-Fi movie''. This presents a challenging setting where the query explicitly seeks the concept intended for suppression. Under the locked split, the IMDB evaluation rows collapse to one unique target and one unique control query text; we retain the row with its \emph{duplicate-template diagnostic} label and treat its zero-variance metrics as a diagnostic, not as high-confidence evidence.

\paragraph{Logical Inference (\texttt{MNLI}).}
Adapted from the \texttt{multi\_nli} dataset~\citep{williams2018broad}, this task evaluates concept suppression in logical inference contexts. The \textbf{Target Concept} is the source text genre, such as \textit{Government} or \textit{Fiction}, and the \textbf{Control Attribute} is the NLI label (\textit{Entailment}, \textit{Neutral}, or \textit{Contradiction}). The retrieval task is formulated such that the query is the hypothesis and the document is the premise.

\section{Baseline Implementation Details}
\label{app:baselines}

All baselines are evaluated under the same locked pipeline: the same per-unit raw-base index, the same query IDs and seeds, and the same metric definitions. Access-class labels are part of each method's identity and appear in every table.

\subsection{Linear Concept Erasure Methods}

\paragraph{INLP (Iterative Nullspace Projection).}
Proposed by \citet{ravfogel2020null}, INLP iteratively trains linear probes to predict the target concept and projects representations onto the nullspace of the probe weights. In the locked protocol INLP fits on the $400$ labeled calibration rows of each unit, placing it in the \emph{profile-only + labeled calibration fit} access class.

\paragraph{LEACE (Least-Squares Concept Erasure).}
\citet{belrose2023leace} introduced LEACE as a closed-form erasure transform minimizing dependence between representations and concept labels. It uses the same labeled calibration fit as INLP, and the runner fails closed when the erasure library is missing, unversioned, fails fitting, or returns an empty eraser.

\subsection{Model Editing Baselines (Diagnostic Adaptations)}

\paragraph{PISCES.}
PISCES~\citep{gurarieh2025precise} was proposed for generative safety in LLMs. Our in-house adaptation treats the output projection layer of the retriever as the editable component, identifies directions with high cosine similarity to the target concept centroid, and masks them.

\paragraph{DARE-GRL.}
We construct a training-based baseline combining Gradient Reversal Layers~\citep{ganin2015unsupervised} with DARE-style model merging~\citep{yu2024language}: a lightweight mapper is trained with a GRL objective against a concept adversary, and the delta weights are sparsified and rescaled before merging. The port to the retrieval encoder is our in-house adaptation.

\subsection{Recent Embedding-Space Baselines (Diagnostic Adaptations)}

\paragraph{CURE.}
CURE~\citep{kocak2025cure} extracts concept-irrelevant content through a frozen pretrained encoder with GPT-4o concept labels and a task classifier, then reconstructs a controlled, debiased representation. The faithful pipeline requires raw text, concept/task supervision, and a downstream classifier, none of which the frozen query-only retrieval contract exposes; our in-house adaptation fits a content/reversal-inspired projection on the frozen calibration vectors of each unit, using the same $400$ labeled calibration examples as INLP and LEACE, transforms queries only, and preserves the raw document index. Under the preregistered utility gate the adapted adapter is locked to strength zero, so every reported CURE cell is an exact identity operation.

\paragraph{RAG Embedder Control.}
\citet{kim2025debiasing} debias a RAG embedder with SimCSE-style contrastive reverse-bias training (PEFT/WiSE-FT). Full fidelity requires trainable embedder weights and corpus-side re-encoding, which the frozen offline vector cache does not authorize; our portable adaptation fits a deterministic reverse-bias linear adapter on the frozen calibration vectors and labels of each unit ($400$ labeled calibration examples), transforms queries only, and keeps the raw fixed document index. It is a portable adaptation, not an official full-embedder reproduction.

\subsection{Other Query-Side and Stronger-Access References}

\paragraph{Raw.} The identity (no-edit) transform; the per-query paired reference for the attack arm (its relative metrics are exactly zero by construction).

\paragraph{Delta-Edit.} A fidelity-preserving delta variant that adds an FP32 edited-minus-original atom delta to the raw query; at the locked operating point it is a near-identity control.

\paragraph{Filter-and-Refill.} Removes target documents from the candidate set and refills the top-$K$ from the remainder; it consumes candidate/document membership and is reported only as a stronger-access reference.

\paragraph{Rank-Router.} A relaxed boundary router that consumes target-document membership labels; it is a label-assisted benchmark router, not an unlabeled query-only deployment method, and is reported only as a reference.

\paragraph{Random Perturbation.}
Isotropic Gaussian noise $\epsilon \sim \mathcal{N}(\mathbf{0}, \sigma^2 \mathbf{I})$ added to query embeddings, with $\sigma$ calibrated to match the average edit norm of \oursfinal{}, controlling for generic representation degradation.

\section{Case Study: Target-Suppression Flips}
\label{app:case_studies}

\begin{table*}[t]
\centering
\small
\caption{\textbf{Case study: target-suppression flips under \oursfinal.} One locked test-stage target query per dataset, selected by the pre-registered rule (base Hit@5 $=1 \to$ method Hit@5 $=0$ flips only; maximum target-cosine drop; deterministic tie-breaks), frozen before any query text was read. Top-5 columns list the retrieved document labels. Illustrative only; drawn from the locked posthoc extension of the frozen test units (evaluated exactly once, no test-time tuning).}
\label{tab:case_study}
\setlength{\tabcolsep}{3pt}
\resizebox{\textwidth}{!}{%
\begin{tabular}{@{}lp{5.8cm}lp{2.8cm}p{2.8cm}cc@{}}
\toprule
\textbf{Dataset} & \textbf{Query} & \textbf{Target} & \textbf{Base top-5 labels} & \textbf{Method top-5 labels} & \textbf{TC drop} & \textbf{Angle} \\
\midrule
\textbf{Amazon} & WORTH THE MONEY & Positive & Negative, Positive, Negative, Negative, Positive & Negative, Negative, Negative, Negative, Negative & +0.101 & 0.196 \\
\textbf{ArXiv} & Research about AI-based code generators have gained a fundamental role in assisting developers in writing software starting from natural language (NL) & System Attacks & Software Engineering, Software Engineering, Software Engineering, System Attacks, Artificial Intelligence & Software Engineering, Software Engineering, Artificial Intelligence, Software Engineering, Machine Translation & +0.166 & 0.200 \\
\textbf{IMDB} & Positive feedback on Horror cinema & Horror & Romance, Action, Romance, Sci-Fi, Horror & Action, Sci-Fi, Romance, Sci-Fi, Sci-Fi & +0.103 & 0.200 \\
\textbf{MNLI} & The marauders rounded up the villagers. & fiction & travel, travel, travel, travel, fiction & travel, travel, travel, travel, travel & +0.114 & 0.200 \\
\textbf{TOFU} & Who is the person described in the text: 'In Luka Khachidze's works, readers can expect a diverse range of characters, but predominantly they are male leads exploring their love interests and overcom\dots{} & Work & General, General, General, Work, Work & General, General, General, General, Family & +0.123 & 0.200 \\
\bottomrule
\end{tabular}}
\end{table*}

\cref{tab:case_study} shows one locked test-stage target query per dataset where the base retriever returns the target in the top-5 and \oursfinal{} removes it entirely. The selection rule was frozen before any query text was read: among target rows with a base hit and a method miss and non-zero applied angle, choose the maximum target-cosine drop with deterministic tie-breaks. Query texts were joined from the source parquets only after the rule was frozen. These success-selected examples illustrate target-label removal under a small, trust-region-bounded rotation ($\approx 0.2$~rad, at the cap). They are not representative-frequency estimates and do not establish unchanged control neighborhoods; aggregate CN and TE measure those separate outcomes.

\paragraph{Selection mechanics.}
The rule behind \cref{tab:case_study} is a deterministic function of locked per-query rows: among posthoc-extension target rows with a base top-five target hit, a method miss, and a non-zero applied angle, it takes the maximum target-cosine drop under deterministic tie-breaks. Ties depend on row content alone, never on judgments made after reading the queries. Re-running it on the same locked rows therefore returns the same five queries: reproducible selections, not hand-picked anecdotes.

\paragraph{Reading the rows.}
In each row the base top five carries exactly one target-labeled document and the edited list carries none---the flip the rule selects for. The vacated slot is refilled by the next non-target candidates, so the list stays full while the target label disappears. The rotation behind each flip stays inside the trust region ($\approx 0.2$~rad, at the cap), and the target-cosine drop is positive on all five datasets, matching the pooled Hit@5 Drop of \cref{tab:rag_comparison_concise}. The five rows span one query per dataset, so the illustration covers every corpus rather than a single favorable one.

\paragraph{Scope of the examples.}
The flips give the pooled effect a concrete shape: \oursfinal{} can remove a target document from the top five through one bounded, query-side rotation while the raw index and the control population stay fixed. They say nothing about how often such removals occur---frequency is the pooled tables' question---and nothing about control neighborhoods, which aggregate CN and TE measure separately (\cref{tab:full_cn_delta,tab:full_te_delta}). Nor do they isolate why a particular document leaves the list: the edited query moves every score within the drift bound, and the flip is an outcome of the full ranking, not of a per-document operation. Drawn from the same locked test units as those tables and evaluated exactly once with no test tuning, the five examples carry the posthoc-extension label of \cref{app:attack_protocol}, not an untouched-test claim.

\FloatBarrier
\subsection{Legacy Qualitative Atom Examples}
\label{app:legacy_atoms}

The following table from the original appendix is retained as qualitative context for the flips above; its numbers are paper-reported from the original evaluation pipeline and have no locked counterpart.

\begin{legacyreported}
\begin{table}[H]
\centering
\setlength{\tabcolsep}{3pt}
\scriptsize
\caption{\textbf{Representative high-usage atoms on ArXiv (paper-reported, original pipeline).} Labels are post-hoc summaries of high-activation documents and are used only for qualitative interpretation.}
\label{tab:representative_atoms}
\begin{tabular}{llll}
\toprule
\textbf{Atom ID} & \textbf{Post-hoc Label} & \textbf{Purity} & \textbf{Representative Keywords} \\
\midrule
492 & Genomics & 0.60 & assembly, datasets, single \\
485 & Biomolecules & 0.50 & protein, network, security \\
901 & GR and Cosmology & 0.50 & structures, field, nucleat \\
564 & Biomolecules & 0.40 & domain, learning, neto2 \\
475 & Crypto and Security & 0.30 & vulnerabilities, software \\
\bottomrule
\end{tabular}
\end{table}
\end{legacyreported}

\section{Full Dataset--Method and Dataset--Encoder Results}
\label{app:full_table}

\begin{table*}[t]
\centering
\definecolor{clsProfile}{HTML}{2A9D8C}
\definecolor{clsLabeled}{HTML}{287271}
\definecolor{clsAdapt}{HTML}{264653}
\definecolor{clsCand}{HTML}{E9C46B}
\definecolor{clsMember}{HTML}{E66F51}
\caption{\textbf{RAG retrieval unlearning under the locked unified protocol.} Test-stage behaviour of the validation-selected configuration on the 60 frozen units (24{,}000 queries). Bold and underline are computed only inside the profile-only access group; no cross-access-group dominance is claimed. The Pooled block pools all twenty dataset $\times$ encoder cells per method.}
\label{tab:rag_comparison_full}
\vspace{1mm}
\begin{minipage}[t]{0.495\textwidth}
\centering
\scriptsize
\setlength{\tabcolsep}{3pt}
\begin{tabular}{@{}lcccc@{}}
\toprule
\textbf{Method} & \textbf{\makecell[c]{Hit@5\\Drop $\uparrow$}} & \textbf{\makecell[c]{Target\\Cos. $\uparrow$}} & \textbf{\makecell[c]{Ctrl./Neigh.\\$\Delta$ $\downarrow$}} & \textbf{\makecell[c]{Target Exp.\\$\Delta$ $\downarrow$}} \\
\midrule
\multicolumn{5}{@{}l}{\textbf{Amazon}} \\
\midrule
\rowcolor{clsProfile!22} \textbf{C\textsuperscript{2}U} & \best{0.0633} & \secon{0.0565} & \best{-0.0064} & \best{-0.1244} \\
\rowcolor{clsProfile!10} Raw & 0.0000 & 0.0000 & 0.0000 & 0.0000 \\
\rowcolor{clsProfile!10} Delta-Edit & 0.0000 & 0.0015 & 0.0000 & \secon{-0.0008} \\
\rowcolor{clsProfile!10} PISCES & -0.0033 & 0.0014 & \secon{-0.0025} & 0.0269 \\
\rowcolor{clsProfile!10} Random & \secon{0.0117} & \best{0.2226} & -0.0017 & 0.0300 \\
\rowcolor{clsLabeled!10} INLP & 0.1217 & 0.0394 & -0.0050 & 0.1886 \\
\rowcolor{clsLabeled!10} LEACE & 0.3658 & 0.0386 & -0.0061 & -0.0083 \\
\rowcolor{clsLabeled!10} RAG-Emb-Ctrl & 0.6883 & 0.0874 & -0.0069 & -0.3353 \\
\rowcolor{clsLabeled!10} CURE & 0.0000 & 0.0000 & 0.0000 & 0.0000 \\
\rowcolor{clsAdapt!10} DARE-GRL & 0.2250 & 0.5455 & -0.0011 & 0.3853 \\
\rowcolor{clsCand!10} Filter-and-Refill$^{\ddagger}$ & 0.0092 & 0.0000 & -0.0056 & -0.0414 \\
\rowcolor{clsMember!10} Rank-Router$^{\S}$ & 0.0700 & 0.0541 & -0.0064 & -0.1383 \\
\midrule
\multicolumn{5}{@{}l}{\textbf{ArXiv}} \\
\midrule
\rowcolor{clsProfile!22} \textbf{C\textsuperscript{2}U} & \best{0.0567} & \secon{0.0859} & \best{-0.0125} & \best{-0.0764} \\
\rowcolor{clsProfile!10} Raw & 0.0000 & 0.0000 & 0.0000 & \secon{0.0000} \\
\rowcolor{clsProfile!10} Delta-Edit & 0.0000 & 0.0025 & 0.0003 & \secon{0.0000} \\
\rowcolor{clsProfile!10} PISCES & -0.0017 & 0.0152 & \secon{-0.0025} & 0.0133 \\
\rowcolor{clsProfile!10} Random & \secon{0.0108} & \best{0.2434} & 0.0492 & 0.0631 \\
\rowcolor{clsLabeled!10} INLP & 0.0592 & 0.1387 & 0.0500 & 0.0889 \\
\rowcolor{clsLabeled!10} LEACE & 0.0100 & 0.0627 & 0.0219 & 0.1619 \\
\rowcolor{clsLabeled!10} RAG-Emb-Ctrl & 0.6950 & 0.2217 & 0.0556 & -0.2103 \\
\rowcolor{clsLabeled!10} CURE & 0.0000 & 0.0000 & 0.0000 & 0.0000 \\
\rowcolor{clsAdapt!10} DARE-GRL & 0.2267 & 0.5746 & 0.6858 & 0.5861 \\
\rowcolor{clsCand!10} Filter-and-Refill$^{\ddagger}$ & 0.3833 & 0.0000 & -0.0197 & -0.1806 \\
\rowcolor{clsMember!10} Rank-Router$^{\S}$ & 0.0567 & 0.0838 & -0.0108 & -0.0764 \\
\midrule
\multicolumn{5}{@{}l}{\textbf{IMDB}} \\
\midrule
\rowcolor{clsProfile!22} \textbf{C\textsuperscript{2}U} & \best{0.0258} & \secon{0.0766} & \best{0.0000} & \best{0.0000} \\
\rowcolor{clsProfile!10} Raw & 0.0000 & 0.0000 & \best{0.0000} & \best{0.0000} \\
\rowcolor{clsProfile!10} Delta-Edit & 0.0000 & 0.0016 & \best{0.0000} & \best{0.0000} \\
\rowcolor{clsProfile!10} PISCES & 0.0183 & 0.0017 & \best{0.0000} & \best{0.0000} \\
\rowcolor{clsProfile!10} Random & \secon{0.0242} & \best{0.2643} & \secon{0.0203} & \secon{0.2311} \\
\rowcolor{clsLabeled!10} INLP & 0.0000 & 0.0805 & 0.0000 & 0.1667 \\
\rowcolor{clsLabeled!10} LEACE & 0.0183 & 0.0146 & 0.0000 & 0.0833 \\
\rowcolor{clsLabeled!10} RAG-Emb-Ctrl & 0.3075 & 0.0695 & 0.0000 & 0.2500 \\
\rowcolor{clsLabeled!10} CURE & 0.0000 & 0.0000 & 0.0000 & 0.0000 \\
\rowcolor{clsAdapt!10} DARE-GRL & 0.3408 & 0.6527 & 0.2500 & 0.5833 \\
\rowcolor{clsCand!10} Filter-and-Refill$^{\ddagger}$ & 0.0000 & 0.0000 & 0.0000 & 0.0000 \\
\rowcolor{clsMember!10} Rank-Router$^{\S}$ & 0.0258 & 0.0749 & 0.0000 & 0.0000 \\
\bottomrule
\end{tabular}
\end{minipage}%
\hfill%
\begin{minipage}[t]{0.495\textwidth}
\centering
\scriptsize
\setlength{\tabcolsep}{3pt}
\begin{tabular}{@{}lcccc@{}}
\toprule
\textbf{Method} & \textbf{\makecell[c]{Hit@5\\Drop $\uparrow$}} & \textbf{\makecell[c]{Target\\Cos. $\uparrow$}} & \textbf{\makecell[c]{Ctrl./Neigh.\\$\Delta$ $\downarrow$}} & \textbf{\makecell[c]{Target Exp.\\$\Delta$ $\downarrow$}} \\
\midrule
\multicolumn{5}{@{}l}{\textbf{MNLI}} \\
\midrule
\rowcolor{clsProfile!22} \textbf{C\textsuperscript{2}U} & \best{0.0342} & \secon{0.0531} & \best{-0.0108} & \best{-0.0603} \\
\rowcolor{clsProfile!10} Raw & 0.0000 & 0.0000 & 0.0000 & \secon{0.0000} \\
\rowcolor{clsProfile!10} Delta-Edit & 0.0000 & 0.0007 & 0.0003 & 0.0003 \\
\rowcolor{clsProfile!10} PISCES & -0.0025 & 0.0099 & \secon{-0.0008} & 0.0042 \\
\rowcolor{clsProfile!10} Random & \secon{0.0158} & \best{0.2260} & 0.0267 & 0.0367 \\
\rowcolor{clsLabeled!10} INLP & 0.0400 & 0.1060 & 0.0022 & 0.0583 \\
\rowcolor{clsLabeled!10} LEACE & 0.0125 & 0.0301 & 0.0006 & 0.0794 \\
\rowcolor{clsLabeled!10} RAG-Emb-Ctrl & 0.4258 & 0.1680 & 0.0133 & -0.3425 \\
\rowcolor{clsLabeled!10} CURE & 0.0000 & 0.0000 & 0.0000 & 0.0000 \\
\rowcolor{clsAdapt!10} DARE-GRL & 0.3542 & 0.5446 & 0.2772 & 0.1450 \\
\rowcolor{clsCand!10} Filter-and-Refill$^{\ddagger}$ & 0.1050 & 0.0000 & -0.0194 & -0.1183 \\
\rowcolor{clsMember!10} Rank-Router$^{\S}$ & 0.0383 & 0.0506 & -0.0122 & -0.0636 \\
\midrule
\multicolumn{5}{@{}l}{\textbf{TOFU}} \\
\midrule
\rowcolor{clsProfile!22} \textbf{C\textsuperscript{2}U} & \best{0.0325} & \secon{0.0592} & \best{-0.0044} & \best{-0.0214} \\
\rowcolor{clsProfile!10} Raw & 0.0000 & 0.0000 & \secon{0.0000} & \secon{0.0000} \\
\rowcolor{clsProfile!10} Delta-Edit & 0.0000 & 0.0003 & \secon{0.0000} & 0.0003 \\
\rowcolor{clsProfile!10} PISCES & \secon{0.0042} & 0.0016 & 0.0008 & 0.0053 \\
\rowcolor{clsProfile!10} Random & -0.0092 & \best{0.2293} & 0.0094 & 0.0092 \\
\rowcolor{clsLabeled!10} INLP & 0.0642 & 0.0431 & 0.0647 & 0.0900 \\
\rowcolor{clsLabeled!10} LEACE & 0.0492 & 0.0048 & -0.0008 & 0.0058 \\
\rowcolor{clsLabeled!10} RAG-Emb-Ctrl & 0.2892 & 0.0315 & -0.0119 & -0.2378 \\
\rowcolor{clsLabeled!10} CURE & 0.0000 & 0.0000 & 0.0000 & 0.0000 \\
\rowcolor{clsAdapt!10} DARE-GRL & 0.1133 & 0.5547 & 0.2458 & 0.1172 \\
\rowcolor{clsCand!10} Filter-and-Refill$^{\ddagger}$ & 0.0400 & 0.0000 & -0.0119 & -0.0731 \\
\rowcolor{clsMember!10} Rank-Router$^{\S}$ & 0.0458 & 0.0497 & -0.0042 & -0.0267 \\
\midrule
\multicolumn{5}{@{}l}{\textbf{Pooled}} \\
\midrule
\rowcolor{clsProfile!22} \textbf{C\textsuperscript{2}U} & \best{0.0425} & \secon{0.0662} & \best{-0.0068} & \best{-0.0565} \\
\rowcolor{clsProfile!10} Raw & 0.0000 & 0.0000 & 0.0000 & 0.0000 \\
\rowcolor{clsProfile!10} Delta-Edit & 0.0000 & 0.0013 & 0.0001 & \secon{-0.0001} \\
\rowcolor{clsProfile!10} PISCES & 0.0030 & 0.0060 & \secon{-0.0010} & 0.0099 \\
\rowcolor{clsProfile!10} Random & \secon{0.0107} & \best{0.2371} & 0.0208 & 0.0740 \\
\rowcolor{clsLabeled!10} INLP & 0.0570 & 0.0816 & 0.0224 & 0.1185 \\
\rowcolor{clsLabeled!10} LEACE & 0.0912 & 0.0302 & 0.0031 & 0.0644 \\
\rowcolor{clsLabeled!10} RAG-Emb-Ctrl & 0.4812 & 0.1156 & 0.0100 & -0.1752 \\
\rowcolor{clsLabeled!10} CURE & 0.0000 & 0.0000 & 0.0000 & 0.0000 \\
\rowcolor{clsAdapt!10} DARE-GRL & 0.2520 & 0.5744 & 0.2916 & 0.3634 \\
\rowcolor{clsCand!10} Filter-and-Refill$^{\ddagger}$ & 0.1075 & 0.0000 & -0.0113 & -0.0827 \\
\rowcolor{clsMember!10} Rank-Router$^{\S}$ & 0.0473 & 0.0626 & -0.0067 & -0.0610 \\
\bottomrule
\end{tabular}
\end{minipage}%
\par\vspace{1mm}
{\raggedright\footnotesize \textit{Note:} Values pool the four encoder cells per dataset. $^{\ddagger}$ candidate/document-access method. $^{\S}$ target-membership label-assisted relaxed boundary router. Raw is the identity (no-edit) reference. CURE and RAG-Emb-Ctrl are adapted recent baselines evaluated as a posthoc extension under the same locked protocol and frozen units. ArXiv uses a relaxed global control; IMDB carries the duplicate-template diagnostic label. Row tints encode the locked access class (same colors as \cref{fig:pareto_panels}); the C\textsuperscript{2}U row is highlighted.\par}
\end{table*}

\definecolor{clsProfile}{HTML}{2A9D8C}
\definecolor{clsLabeled}{HTML}{287271}
\definecolor{clsAdapt}{HTML}{264653}
\definecolor{clsCand}{HTML}{E9C46B}
\definecolor{clsMember}{HTML}{E66F51}
\begin{table*}[t]
\centering
\scriptsize
\setlength{\tabcolsep}{2pt}
\renewcommand{\arraystretch}{0.95}
\caption{\textbf{Hit@5 Drop for every dataset $\times$ encoder cell} (locked test stage; per-cell mean over seeds 1--3). Higher values indicate stronger target suppression. Bold and underline mark the best and second-best values inside the exact profile-only access class of each row; no cross-access-class dominance is claimed.}
\label{tab:full_h5_drop}
\begin{tabular}{@{}ll>{\columncolor{clsProfile!22}}c>{\columncolor{clsProfile!10}}c>{\columncolor{clsLabeled!10}}c>{\columncolor{clsLabeled!10}}c>{\columncolor{clsLabeled!10}}c>{\columncolor{clsLabeled!10}}c>{\columncolor{clsProfile!10}}c>{\columncolor{clsAdapt!10}}c>{\columncolor{clsProfile!10}}c>{\columncolor{clsProfile!10}}c>{\columncolor{clsCand!10}}c>{\columncolor{clsMember!10}}c@{}}
\toprule
\textbf{Dataset} & \textbf{Encoder} & \cellcolor{white}\textbf{C\textsuperscript{2}U} & \cellcolor{white}\textbf{Raw} & \cellcolor{white}\textbf{INLP} & \cellcolor{white}\textbf{LEACE} & \cellcolor{white}\textbf{\makecell{RAG-Emb-\\Ctrl}} & \cellcolor{white}\textbf{CURE} & \cellcolor{white}\textbf{PISCES} & \cellcolor{white}\textbf{\makecell{DARE-\\GRL}} & \cellcolor{white}\textbf{Random} & \cellcolor{white}\textbf{\makecell{Delta-\\Edit}} & \cellcolor{white}\textbf{\makecell{Filter-and-\\Refill$^{\ddagger}$}} & \cellcolor{white}\textbf{\makecell{Rank-\\Router$^{\S}$}} \\
\midrule
\multirow{4}{*}{\textbf{Amazon}}
& minilm & \best{0.0633} & \secon{0.0000} & 0.0733 & 0.1367 & 0.4867 & 0.0000 & -0.0100 & -0.0167 & -0.0100 & \secon{0.0000} & 0.0033 & 0.0900 \\
& bge & \best{0.0200} & \secon{0.0000} & 0.0100 & 0.6300 & 0.9800 & 0.0000 & \secon{0.0000} & 0.4700 & \secon{0.0000} & \secon{0.0000} & 0.0233 & 0.0200 \\
& e5 & \best{0.1400} & 0.0000 & 0.1567 & 0.4867 & 0.4767 & 0.0000 & 0.0000 & 0.3333 & \secon{0.0333} & 0.0000 & 0.0000 & 0.1400 \\
& qwen3 & \best{0.0300} & 0.0000 & 0.2467 & 0.2100 & 0.8100 & 0.0000 & -0.0033 & 0.1133 & \secon{0.0233} & 0.0000 & 0.0100 & 0.0300 \\
\midrule
\multirow{4}{*}{\textbf{ArXiv}}
& minilm & \best{0.0500} & 0.0000 & 0.0467 & 0.0033 & 0.9600 & 0.0000 & 0.0000 & 0.2133 & \secon{0.0200} & 0.0000 & 0.5833 & 0.0500 \\
& bge & \best{0.0800} & 0.0000 & 0.0900 & 0.0067 & 0.5000 & 0.0000 & 0.0000 & 0.3167 & \secon{0.0100} & 0.0000 & 0.2567 & 0.0800 \\
& e5 & \best{0.0533} & 0.0000 & 0.0733 & 0.0200 & 0.4400 & 0.0000 & -0.0033 & -0.0300 & \secon{0.0100} & 0.0000 & 0.1800 & 0.0533 \\
& qwen3 & \best{0.0433} & 0.0000 & 0.0267 & 0.0100 & 0.8800 & 0.0000 & -0.0033 & 0.4067 & \secon{0.0033} & 0.0000 & 0.5133 & 0.0433 \\
\midrule
\multirow{4}{*}{\textbf{IMDB}}
& minilm & \secon{0.0000} & \secon{0.0000} & 0.0000 & 0.0000 & 0.0000 & 0.0000 & \secon{0.0000} & 0.0000 & \best{0.0033} & \secon{0.0000} & 0.0000 & 0.0000 \\
& bge & \best{0.1033} & 0.0000 & 0.0000 & 0.0000 & 0.5767 & 0.0000 & 0.0000 & 0.3333 & \secon{0.0433} & 0.0000 & 0.0000 & 0.1033 \\
& e5 & 0.0000 & 0.0000 & 0.0000 & 0.0733 & 0.3933 & 0.0000 & \best{0.0733} & 0.6667 & \secon{0.0267} & 0.0000 & 0.0000 & 0.0000 \\
& qwen3 & \secon{0.0000} & \secon{0.0000} & 0.0000 & 0.0000 & 0.2600 & 0.0000 & \secon{0.0000} & 0.3633 & \best{0.0233} & \secon{0.0000} & 0.0000 & 0.0000 \\
\midrule
\multirow{4}{*}{\textbf{MNLI}}
& minilm & \best{0.0267} & 0.0000 & 0.0667 & 0.0267 & 0.7300 & 0.0000 & 0.0033 & 0.5333 & \secon{0.0100} & 0.0000 & 0.1833 & 0.0333 \\
& bge & \best{0.0267} & 0.0000 & 0.0400 & 0.0300 & 0.3133 & 0.0000 & 0.0000 & 0.3000 & \secon{0.0033} & 0.0000 & 0.0800 & 0.0300 \\
& e5 & \best{0.0667} & 0.0000 & -0.0067 & 0.0033 & 0.2767 & 0.0000 & -0.0133 & 0.2433 & \secon{0.0433} & 0.0000 & 0.1400 & 0.0733 \\
& qwen3 & \best{0.0167} & 0.0000 & 0.0600 & -0.0100 & 0.3833 & 0.0000 & 0.0000 & 0.3400 & \secon{0.0067} & 0.0000 & 0.0167 & 0.0167 \\
\midrule
\multirow{4}{*}{\textbf{TOFU}}
& minilm & \best{0.0433} & \secon{0.0000} & 0.0467 & 0.0200 & 0.1967 & 0.0000 & \secon{0.0000} & 0.3933 & -0.0067 & \secon{0.0000} & 0.0567 & 0.0500 \\
& bge & \best{0.0233} & 0.0000 & 0.0667 & 0.0600 & 0.2733 & 0.0000 & -0.0033 & 0.1833 & \secon{0.0133} & 0.0000 & 0.0267 & 0.0467 \\
& e5 & \best{0.0500} & 0.0000 & 0.0500 & 0.1000 & 0.2933 & 0.0000 & \secon{0.0100} & -0.1133 & -0.0133 & 0.0000 & 0.0433 & 0.0600 \\
& qwen3 & \best{0.0133} & 0.0000 & 0.0933 & 0.0167 & 0.3933 & 0.0000 & \secon{0.0100} & -0.0100 & -0.0300 & 0.0000 & 0.0333 & 0.0267 \\
\bottomrule
\end{tabular}
\par\vspace{1mm}
{\raggedright\footnotesize \textit{Notes:} Each cell is the mean over the three locked seeds (1--3). Encoders: minilm = all-MiniLM-L6-v2, bge = bge-base-en-v1.5, e5 = e5-base-v2, qwen3 = Qwen3-Embedding-0.6B (pinned revisions). Raw is the identity (no-edit) reference. Marking scope per row is the exact profile-only access class (C\textsuperscript{2}U, Raw, PISCES, Random, Delta-Edit); INLP and LEACE (400 labeled calibration examples per unit), CURE and RAG-Emb-Ctrl (400 labeled calibration examples per unit; posthoc adapted-baseline extensions under the same locked protocol), DARE-GRL (training-based), $^{\ddagger}$ (candidate/document access), and $^{\S}$ (target-membership label-assisted) are shown unmarked. Utility-clean requires CN $\le 10^{-9}$ and TE $\le 10^{-9}$, and no cross-access-class dominance is claimed. ArXiv uses a relaxed global control; IMDB carries the duplicate-template diagnostic label. Column tints encode the locked access class (same colors as \cref{fig:pareto_panels}); the C\textsuperscript{2}U column is highlighted.\par}
\end{table*}

\begin{table*}[t]
\centering
\scriptsize
\setlength{\tabcolsep}{2pt}
\renewcommand{\arraystretch}{0.95}
\caption{\textbf{Target Cosine Drop for every dataset $\times$ encoder cell} (locked test stage; per-cell mean over seeds 1--3). Higher values indicate stronger reduction of the target-centroid cosine. Marking as in Table~\ref{tab:full_h5_drop}.}
\label{tab:full_tc_drop}
\begin{tabular}{@{}ll>{\columncolor{clsProfile!22}}c>{\columncolor{clsProfile!10}}c>{\columncolor{clsLabeled!10}}c>{\columncolor{clsLabeled!10}}c>{\columncolor{clsLabeled!10}}c>{\columncolor{clsLabeled!10}}c>{\columncolor{clsProfile!10}}c>{\columncolor{clsAdapt!10}}c>{\columncolor{clsProfile!10}}c>{\columncolor{clsProfile!10}}c>{\columncolor{clsCand!10}}c>{\columncolor{clsMember!10}}c@{}}
\toprule
\textbf{Dataset} & \textbf{Encoder} & \cellcolor{white}\textbf{C\textsuperscript{2}U} & \cellcolor{white}\textbf{Raw} & \cellcolor{white}\textbf{INLP} & \cellcolor{white}\textbf{LEACE} & \cellcolor{white}\textbf{\makecell{RAG-Emb-\\Ctrl}} & \cellcolor{white}\textbf{CURE} & \cellcolor{white}\textbf{PISCES} & \cellcolor{white}\textbf{\makecell{DARE-\\GRL}} & \cellcolor{white}\textbf{Random} & \cellcolor{white}\textbf{\makecell{Delta-\\Edit}} & \cellcolor{white}\textbf{\makecell{Filter-and-\\Refill$^{\ddagger}$}} & \cellcolor{white}\textbf{\makecell{Rank-\\Router$^{\S}$}} \\
\midrule
\multirow{4}{*}{\textbf{Amazon}}
& minilm & \secon{0.0279} & 0.0000 & 0.0604 & 0.0717 & 0.0861 & 0.0000 & 0.0003 & 0.3298 & \best{0.0818} & -0.0000 & 0.0000 & 0.0245 \\
& bge & \secon{0.0555} & 0.0000 & 0.0384 & 0.0525 & 0.1702 & 0.0000 & 0.0030 & 0.6610 & \best{0.2786} & 0.0025 & 0.0000 & 0.0555 \\
& e5 & \secon{0.0505} & 0.0000 & 0.0174 & 0.0057 & 0.0099 & 0.0000 & -0.0023 & 0.8229 & \best{0.3486} & 0.0009 & 0.0000 & 0.0481 \\
& qwen3 & \secon{0.0921} & 0.0000 & 0.0415 & 0.0246 & 0.0832 & 0.0000 & 0.0047 & 0.3683 & \best{0.1812} & 0.0025 & 0.0000 & 0.0884 \\
\midrule
\multirow{4}{*}{\textbf{ArXiv}}
& minilm & \best{0.1317} & 0.0000 & 0.2284 & 0.1124 & 0.4639 & 0.0000 & 0.0300 & 0.3187 & \secon{0.0994} & 0.0036 & 0.0000 & 0.1286 \\
& bge & \secon{0.0710} & 0.0000 & 0.0922 & 0.0340 & 0.0862 & 0.0000 & 0.0081 & 0.6618 & \best{0.2913} & 0.0031 & 0.0000 & 0.0681 \\
& e5 & \secon{0.0498} & 0.0000 & 0.0639 & 0.0124 & 0.0330 & 0.0000 & 0.0012 & 0.8220 & \best{0.3500} & 0.0007 & 0.0000 & 0.0490 \\
& qwen3 & \secon{0.0910} & 0.0000 & 0.1705 & 0.0920 & 0.3038 & 0.0000 & 0.0216 & 0.4959 & \best{0.2331} & 0.0025 & 0.0000 & 0.0896 \\
\midrule
\multirow{4}{*}{\textbf{IMDB}}
& minilm & \secon{0.0808} & 0.0000 & 0.1724 & 0.0518 & 0.0435 & 0.0000 & 0.0036 & 0.6319 & \best{0.1797} & 0.0012 & 0.0000 & 0.0808 \\
& bge & \secon{0.0795} & 0.0000 & 0.0438 & -0.0065 & 0.1467 & 0.0000 & 0.0012 & 0.6263 & \best{0.2738} & 0.0024 & 0.0000 & 0.0726 \\
& e5 & \secon{0.0620} & 0.0000 & 0.0240 & 0.0029 & 0.0154 & 0.0000 & 0.0021 & 0.8481 & \best{0.3581} & 0.0013 & 0.0000 & 0.0620 \\
& qwen3 & \secon{0.0841} & 0.0000 & 0.0817 & 0.0104 & 0.0725 & 0.0000 & -0.0000 & 0.5046 & \best{0.2458} & 0.0015 & 0.0000 & 0.0841 \\
\midrule
\multirow{4}{*}{\textbf{MNLI}}
& minilm & \secon{0.0850} & 0.0000 & 0.1581 & 0.0730 & 0.4172 & 0.0000 & 0.0204 & 0.2980 & \best{0.0869} & 0.0012 & 0.0000 & 0.0810 \\
& bge & \secon{0.0299} & 0.0000 & 0.0899 & 0.0230 & 0.0890 & 0.0000 & 0.0068 & 0.6297 & \best{0.2628} & 0.0007 & 0.0000 & 0.0284 \\
& e5 & \secon{0.0247} & 0.0000 & 0.0317 & 0.0046 & 0.0364 & 0.0000 & 0.0004 & 0.8094 & \best{0.3459} & 0.0003 & 0.0000 & 0.0226 \\
& qwen3 & \secon{0.0726} & 0.0000 & 0.1444 & 0.0200 & 0.1295 & 0.0000 & 0.0118 & 0.4413 & \best{0.2083} & 0.0007 & 0.0000 & 0.0703 \\
\midrule
\multirow{4}{*}{\textbf{TOFU}}
& minilm & \secon{0.0677} & 0.0000 & 0.0291 & 0.0071 & 0.0325 & 0.0000 & 0.0013 & 0.4427 & \best{0.1372} & 0.0003 & 0.0000 & 0.0529 \\
& bge & \secon{0.0441} & 0.0000 & 0.0373 & -0.0001 & 0.0181 & 0.0000 & 0.0002 & 0.6042 & \best{0.2539} & 0.0002 & 0.0000 & 0.0372 \\
& e5 & \secon{0.0366} & 0.0000 & 0.0679 & 0.0024 & 0.0093 & 0.0000 & 0.0006 & 0.7699 & \best{0.3352} & 0.0000 & 0.0000 & 0.0332 \\
& qwen3 & \secon{0.0883} & 0.0000 & 0.0380 & 0.0099 & 0.0659 & 0.0000 & 0.0040 & 0.4019 & \best{0.1909} & 0.0007 & 0.0000 & 0.0757 \\
\bottomrule
\end{tabular}
\par\vspace{1mm}
{\raggedright\footnotesize \textit{Notes:} Each cell is the mean over the three locked seeds (1--3). Encoders: minilm = all-MiniLM-L6-v2, bge = bge-base-en-v1.5, e5 = e5-base-v2, qwen3 = Qwen3-Embedding-0.6B (pinned revisions). Raw is the identity (no-edit) reference. Marking scope per row is the exact profile-only access class (C\textsuperscript{2}U, Raw, PISCES, Random, Delta-Edit); INLP and LEACE (400 labeled calibration examples per unit), CURE and RAG-Emb-Ctrl (400 labeled calibration examples per unit; posthoc adapted-baseline extensions under the same locked protocol), DARE-GRL (training-based), $^{\ddagger}$ (candidate/document access), and $^{\S}$ (target-membership label-assisted) are shown unmarked. Utility-clean requires CN $\le 10^{-9}$ and TE $\le 10^{-9}$, and no cross-access-class dominance is claimed. ArXiv uses a relaxed global control; IMDB carries the duplicate-template diagnostic label. Column tints encode the locked access class (same colors as \cref{fig:pareto_panels}); the C\textsuperscript{2}U column is highlighted.\par}
\end{table*}

\begin{table*}[t]
\centering
\scriptsize
\setlength{\tabcolsep}{2pt}
\renewcommand{\arraystretch}{0.95}
\caption{\textbf{Control/Neighbor $\Delta$ for every dataset $\times$ encoder cell} (locked test stage; per-cell mean over seeds 1--3). Lower values indicate less non-target degradation; the utility-clean gate requires CN $\le 10^{-9}$. Marking as in Table~\ref{tab:full_h5_drop}.}
\label{tab:full_cn_delta}
\begin{tabular}{@{}ll>{\columncolor{clsProfile!22}}c>{\columncolor{clsProfile!10}}c>{\columncolor{clsLabeled!10}}c>{\columncolor{clsLabeled!10}}c>{\columncolor{clsLabeled!10}}c>{\columncolor{clsLabeled!10}}c>{\columncolor{clsProfile!10}}c>{\columncolor{clsAdapt!10}}c>{\columncolor{clsProfile!10}}c>{\columncolor{clsProfile!10}}c>{\columncolor{clsCand!10}}c>{\columncolor{clsMember!10}}c@{}}
\toprule
\textbf{Dataset} & \textbf{Encoder} & \cellcolor{white}\textbf{C\textsuperscript{2}U} & \cellcolor{white}\textbf{Raw} & \cellcolor{white}\textbf{INLP} & \cellcolor{white}\textbf{LEACE} & \cellcolor{white}\textbf{\makecell{RAG-Emb-\\Ctrl}} & \cellcolor{white}\textbf{CURE} & \cellcolor{white}\textbf{PISCES} & \cellcolor{white}\textbf{\makecell{DARE-\\GRL}} & \cellcolor{white}\textbf{Random} & \cellcolor{white}\textbf{\makecell{Delta-\\Edit}} & \cellcolor{white}\textbf{\makecell{Filter-and-\\Refill$^{\ddagger}$}} & \cellcolor{white}\textbf{\makecell{Rank-\\Router$^{\S}$}} \\
\midrule
\multirow{4}{*}{\textbf{Amazon}}
& minilm & \best{-0.0011} & \secon{0.0000} & 0.0000 & -0.0011 & -0.0011 & 0.0000 & \secon{0.0000} & -0.0011 & \secon{0.0000} & \secon{0.0000} & 0.0011 & -0.0011 \\
& bge & \best{-0.0167} & 0.0000 & -0.0144 & -0.0178 & -0.0178 & 0.0000 & \secon{-0.0067} & 0.0056 & -0.0056 & 0.0000 & -0.0167 & -0.0167 \\
& e5 & \best{-0.0033} & 0.0000 & -0.0011 & -0.0022 & -0.0033 & 0.0000 & \secon{-0.0011} & -0.0033 & 0.0000 & 0.0000 & -0.0033 & -0.0033 \\
& qwen3 & \best{-0.0044} & 0.0000 & -0.0044 & -0.0033 & -0.0056 & 0.0000 & \secon{-0.0022} & -0.0056 & -0.0011 & 0.0000 & -0.0033 & -0.0044 \\
\midrule
\multirow{4}{*}{\textbf{ArXiv}}
& minilm & \best{-0.0122} & 0.0000 & 0.0178 & 0.0233 & 0.0978 & 0.0000 & \secon{-0.0044} & 0.6544 & 0.0044 & 0.0000 & -0.0167 & -0.0089 \\
& bge & \best{-0.0078} & 0.0000 & 0.0422 & 0.0256 & 0.0100 & 0.0000 & \secon{-0.0011} & 0.7244 & 0.0156 & 0.0000 & -0.0189 & -0.0111 \\
& e5 & \best{-0.0144} & 0.0000 & 0.1189 & 0.0178 & 0.0478 & 0.0000 & \secon{-0.0022} & 0.6211 & 0.1467 & 0.0000 & -0.0200 & -0.0067 \\
& qwen3 & \best{-0.0156} & 0.0000 & 0.0211 & 0.0211 & 0.0667 & 0.0000 & \secon{-0.0022} & 0.7433 & 0.0300 & 0.0011 & -0.0233 & -0.0167 \\
\midrule
\multirow{4}{*}{\textbf{IMDB}}
& minilm & \best{0.0000} & \best{0.0000} & 0.0000 & 0.0000 & 0.0000 & 0.0000 & \best{0.0000} & 0.3333 & \secon{0.0578} & \best{0.0000} & 0.0000 & 0.0000 \\
& bge & \best{0.0000} & \best{0.0000} & 0.0000 & 0.0000 & 0.0000 & 0.0000 & \best{0.0000} & 0.3333 & \secon{0.0022} & \best{0.0000} & 0.0000 & 0.0000 \\
& e5 & \best{0.0000} & \best{0.0000} & 0.0000 & 0.0000 & 0.0000 & 0.0000 & \best{0.0000} & 0.0000 & \secon{0.0211} & \best{0.0000} & 0.0000 & 0.0000 \\
& qwen3 & \best{0.0000} & \best{0.0000} & 0.0000 & 0.0000 & 0.0000 & 0.0000 & \best{0.0000} & 0.3333 & \best{0.0000} & \best{0.0000} & 0.0000 & 0.0000 \\
\midrule
\multirow{4}{*}{\textbf{MNLI}}
& minilm & \best{-0.0244} & 0.0000 & -0.0256 & -0.0067 & 0.0389 & 0.0000 & \secon{-0.0078} & 0.2322 & 0.0167 & 0.0011 & -0.0422 & -0.0256 \\
& bge & \best{-0.0100} & 0.0000 & -0.0211 & -0.0033 & 0.0078 & 0.0000 & \secon{-0.0022} & 0.2578 & 0.0256 & 0.0000 & -0.0211 & -0.0078 \\
& e5 & \best{-0.0156} & \secon{0.0000} & 0.0356 & 0.0000 & -0.0100 & 0.0000 & \secon{0.0000} & 0.3278 & 0.0356 & \secon{0.0000} & -0.0178 & -0.0200 \\
& qwen3 & \secon{0.0067} & \best{0.0000} & 0.0200 & 0.0122 & 0.0167 & 0.0000 & \secon{0.0067} & 0.2911 & 0.0289 & \best{0.0000} & 0.0033 & 0.0044 \\
\midrule
\multirow{4}{*}{\textbf{TOFU}}
& minilm & \best{-0.0067} & 0.0000 & 0.0278 & -0.0000 & -0.0133 & 0.0000 & 0.0011 & 0.1833 & \secon{-0.0033} & 0.0000 & -0.0078 & -0.0067 \\
& bge & \best{-0.0011} & \secon{0.0000} & 0.0467 & 0.0000 & -0.0067 & 0.0000 & 0.0011 & 0.2400 & 0.0122 & \secon{0.0000} & -0.0067 & -0.0011 \\
& e5 & \best{-0.0011} & \secon{0.0000} & 0.1122 & -0.0067 & -0.0156 & 0.0000 & \best{-0.0011} & 0.2567 & 0.0233 & \secon{0.0000} & -0.0089 & -0.0011 \\
& qwen3 & \best{-0.0089} & \secon{0.0000} & 0.0722 & 0.0033 & -0.0122 & 0.0000 & 0.0022 & 0.3033 & 0.0056 & \secon{0.0000} & -0.0244 & -0.0078 \\
\bottomrule
\end{tabular}
\par\vspace{1mm}
{\raggedright\footnotesize \textit{Notes:} Each cell is the mean over the three locked seeds (1--3). Encoders: minilm = all-MiniLM-L6-v2, bge = bge-base-en-v1.5, e5 = e5-base-v2, qwen3 = Qwen3-Embedding-0.6B (pinned revisions). Raw is the identity (no-edit) reference. Marking scope per row is the exact profile-only access class (C\textsuperscript{2}U, Raw, PISCES, Random, Delta-Edit); INLP and LEACE (400 labeled calibration examples per unit), CURE and RAG-Emb-Ctrl (400 labeled calibration examples per unit; posthoc adapted-baseline extensions under the same locked protocol), DARE-GRL (training-based), $^{\ddagger}$ (candidate/document access), and $^{\S}$ (target-membership label-assisted) are shown unmarked. Utility-clean requires CN $\le 10^{-9}$ and TE $\le 10^{-9}$, and no cross-access-class dominance is claimed. ArXiv uses a relaxed global control; IMDB carries the duplicate-template diagnostic label. Column tints encode the locked access class (same colors as \cref{fig:pareto_panels}); the C\textsuperscript{2}U column is highlighted.\par}
\end{table*}

\begin{table*}[t]
\centering
\scriptsize
\setlength{\tabcolsep}{2pt}
\renewcommand{\arraystretch}{0.95}
\caption{\textbf{Target Exposure $\Delta$ for every dataset $\times$ encoder cell} (locked test stage; per-cell mean over seeds 1--3). Lower values indicate less false-positive retrieval of target documents by control queries; the utility-clean gate requires TE $\le 10^{-9}$. Marking as in Table~\ref{tab:full_h5_drop}.}
\label{tab:full_te_delta}
\begin{tabular}{@{}ll>{\columncolor{clsProfile!22}}c>{\columncolor{clsProfile!10}}c>{\columncolor{clsLabeled!10}}c>{\columncolor{clsLabeled!10}}c>{\columncolor{clsLabeled!10}}c>{\columncolor{clsLabeled!10}}c>{\columncolor{clsProfile!10}}c>{\columncolor{clsAdapt!10}}c>{\columncolor{clsProfile!10}}c>{\columncolor{clsProfile!10}}c>{\columncolor{clsCand!10}}c>{\columncolor{clsMember!10}}c@{}}
\toprule
\textbf{Dataset} & \textbf{Encoder} & \cellcolor{white}\textbf{C\textsuperscript{2}U} & \cellcolor{white}\textbf{Raw} & \cellcolor{white}\textbf{INLP} & \cellcolor{white}\textbf{LEACE} & \cellcolor{white}\textbf{\makecell{RAG-Emb-\\Ctrl}} & \cellcolor{white}\textbf{CURE} & \cellcolor{white}\textbf{PISCES} & \cellcolor{white}\textbf{\makecell{DARE-\\GRL}} & \cellcolor{white}\textbf{Random} & \cellcolor{white}\textbf{\makecell{Delta-\\Edit}} & \cellcolor{white}\textbf{\makecell{Filter-and-\\Refill$^{\ddagger}$}} & \cellcolor{white}\textbf{\makecell{Rank-\\Router$^{\S}$}} \\
\midrule
\multirow{4}{*}{\textbf{Amazon}}
& minilm & \best{-0.0700} & 0.0000 & 0.0811 & 0.0178 & -0.3533 & 0.0000 & 0.0067 & 0.3678 & 0.0189 & \secon{-0.0011} & -0.0100 & -0.1011 \\
& bge & \best{-0.1356} & \secon{0.0000} & 0.2922 & -0.0611 & -0.2644 & 0.0000 & 0.0111 & 0.3956 & 0.0200 & \secon{0.0000} & -0.0933 & -0.1411 \\
& e5 & \best{-0.1800} & \secon{0.0000} & 0.2733 & -0.0267 & -0.2556 & 0.0000 & 0.0644 & 0.3778 & 0.0600 & \secon{0.0000} & -0.0256 & -0.1833 \\
& qwen3 & \best{-0.1122} & 0.0000 & 0.1078 & 0.0367 & -0.4678 & 0.0000 & 0.0256 & 0.4000 & 0.0211 & \secon{-0.0022} & -0.0367 & -0.1278 \\
\midrule
\multirow{4}{*}{\textbf{ArXiv}}
& minilm & \best{-0.0389} & 0.0000 & 0.0567 & 0.1922 & -0.1756 & 0.0000 & \secon{-0.0033} & 0.5067 & 0.0211 & 0.0000 & -0.1467 & -0.0389 \\
& bge & \best{-0.0878} & \secon{0.0000} & 0.0500 & 0.1789 & -0.1789 & 0.0000 & 0.0078 & 0.4700 & 0.0333 & \secon{0.0000} & -0.1478 & -0.0878 \\
& e5 & \best{-0.1089} & \secon{0.0000} & 0.1900 & 0.1422 & -0.2578 & 0.0000 & 0.0400 & 0.7056 & 0.1489 & \secon{0.0000} & -0.2156 & -0.1089 \\
& qwen3 & \best{-0.0700} & \secon{0.0000} & 0.0589 & 0.1344 & -0.2289 & 0.0000 & 0.0089 & 0.6622 & 0.0489 & \secon{0.0000} & -0.2122 & -0.0700 \\
\midrule
\multirow{4}{*}{\textbf{IMDB}}
& minilm & \best{0.0000} & \best{0.0000} & 0.6667 & 0.3333 & 0.0000 & 0.0000 & \best{0.0000} & 1.0000 & \secon{0.2822} & \best{0.0000} & 0.0000 & 0.0000 \\
& bge & \best{0.0000} & \best{0.0000} & 0.0000 & 0.0000 & 0.3333 & 0.0000 & \best{0.0000} & 0.6667 & \secon{0.0944} & \best{0.0000} & 0.0000 & 0.0000 \\
& e5 & \best{0.0000} & \best{0.0000} & 0.0000 & 0.0000 & 0.3333 & 0.0000 & \best{0.0000} & 0.3333 & \secon{0.3167} & \best{0.0000} & 0.0000 & 0.0000 \\
& qwen3 & \best{0.0000} & \best{0.0000} & 0.0000 & 0.0000 & 0.3333 & 0.0000 & \best{0.0000} & 0.3333 & \secon{0.2311} & \best{0.0000} & 0.0000 & 0.0000 \\
\midrule
\multirow{4}{*}{\textbf{MNLI}}
& minilm & \best{-0.0556} & 0.0000 & 0.0378 & 0.0433 & -0.3789 & 0.0000 & \secon{-0.0078} & -0.1211 & -0.0044 & 0.0000 & -0.1411 & -0.0578 \\
& bge & \best{-0.0611} & 0.0000 & 0.0111 & 0.0433 & -0.3833 & 0.0000 & \secon{-0.0011} & 0.0722 & 0.0167 & 0.0000 & -0.1544 & -0.0644 \\
& e5 & \best{-0.0911} & \secon{0.0000} & 0.1400 & 0.1333 & -0.3011 & 0.0000 & 0.0211 & 0.2500 & 0.0756 & \secon{0.0000} & -0.1456 & -0.0978 \\
& qwen3 & \best{-0.0333} & \secon{0.0000} & 0.0444 & 0.0978 & -0.3067 & 0.0000 & 0.0044 & 0.3789 & 0.0589 & 0.0011 & -0.0322 & -0.0344 \\
\midrule
\multirow{4}{*}{\textbf{TOFU}}
& minilm & \best{-0.0200} & \secon{0.0000} & 0.0633 & 0.0211 & -0.1844 & 0.0000 & 0.0022 & -0.2378 & 0.0100 & \secon{0.0000} & -0.0611 & -0.0278 \\
& bge & \best{-0.0244} & \secon{0.0000} & 0.1111 & 0.0167 & -0.2433 & 0.0000 & 0.0144 & 0.0811 & 0.0100 & 0.0011 & -0.0433 & -0.0267 \\
& e5 & \best{-0.0167} & \secon{0.0000} & 0.1122 & -0.0233 & -0.2133 & 0.0000 & 0.0056 & 0.3356 & 0.0222 & \secon{0.0000} & -0.0811 & -0.0244 \\
& qwen3 & \best{-0.0244} & 0.0000 & 0.0733 & 0.0089 & -0.3100 & 0.0000 & -0.0011 & 0.2900 & \secon{-0.0056} & 0.0000 & -0.1067 & -0.0278 \\
\bottomrule
\end{tabular}
\par\vspace{1mm}
{\raggedright\footnotesize \textit{Notes:} Each cell is the mean over the three locked seeds (1--3). Encoders: minilm = all-MiniLM-L6-v2, bge = bge-base-en-v1.5, e5 = e5-base-v2, qwen3 = Qwen3-Embedding-0.6B (pinned revisions). Raw is the identity (no-edit) reference. Marking scope per row is the exact profile-only access class (C\textsuperscript{2}U, Raw, PISCES, Random, Delta-Edit); INLP and LEACE (400 labeled calibration examples per unit), CURE and RAG-Emb-Ctrl (400 labeled calibration examples per unit; posthoc adapted-baseline extensions under the same locked protocol), DARE-GRL (training-based), $^{\ddagger}$ (candidate/document access), and $^{\S}$ (target-membership label-assisted) are shown unmarked. Utility-clean requires CN $\le 10^{-9}$ and TE $\le 10^{-9}$, and no cross-access-class dominance is claimed. ArXiv uses a relaxed global control; IMDB carries the duplicate-template diagnostic label. Column tints encode the locked access class (same colors as \cref{fig:pareto_panels}); the C\textsuperscript{2}U column is highlighted.\par}
\end{table*}

\cref{tab:rag_comparison_full} retains the complete dataset--method rendering.

\cref{tab:full_h5_drop,tab:full_tc_drop,tab:full_cn_delta,tab:full_te_delta} report the complete dataset $\times$ encoder decomposition behind the pooled cells of \cref{tab:rag_comparison_concise}, one grid per metric; every cell is the mean over the three locked seeds, and the per-seed rows are retained in the locked artifact. Utility-clean status and the marking rule follow the pre-registered Pareto criterion of \cref{sec:exp_concept_erasure}: CN $\le 10^{-9}$ and TE $\le 10^{-9}$, H5 Drop compared only inside the exact profile-only class, and no cross-class dominance; bold and underline mark the best and second-best values inside the profile-only access class of each row. ArXiv carries the \emph{relaxed global control} label and IMDB the \emph{duplicate-template diagnostic} label throughout.

\section{Visual Diagnostics}
\label{app:visual_diagnostics}

The five panels in this section are descriptive renderings of locked outcomes under the unified figure standard of this paper: they recompute or replot rows that the locked chains already contain, and they add no new evidence. Each caption states its populations, denominators, and provenance. The paragraphs below connect each panel to the main-text claim it supports and state what the panel does not show.

\begin{figure*}[t]
\centering
\includegraphics[width=\textwidth]{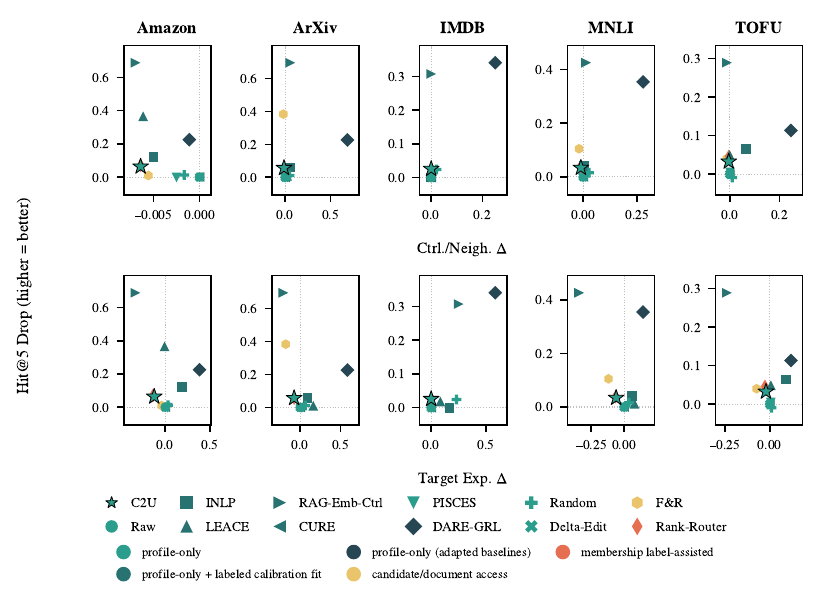}
\caption{\textbf{Full suppression--utility Pareto view.} All twelve displayed methods on every dataset, pooled over four encoders and three seeds, with access-class colors. CURE coincides with Raw because its utility gate locks it to identity. Stronger suppression and its costs remain visible.}
\label{fig:pareto_panels}
\end{figure*}

\paragraph{Pareto view.}
\cref{fig:pareto_panels} plots every displayed method's pooled suppression against its pooled utility cost, one point per method and dataset, pooled over four encoders and three seeds, with colors marking the locked access classes. CURE coincides with Raw because its utility gate locks it to a zero-strength identity, and DARE-GRL's larger suppression sits together with its large positive CN and TE costs.

\begin{figure*}[t]
  \centering
  \phantomcaption
  \begin{subfigure}{\textwidth}
    \centering
    \includegraphics[width=\textwidth]{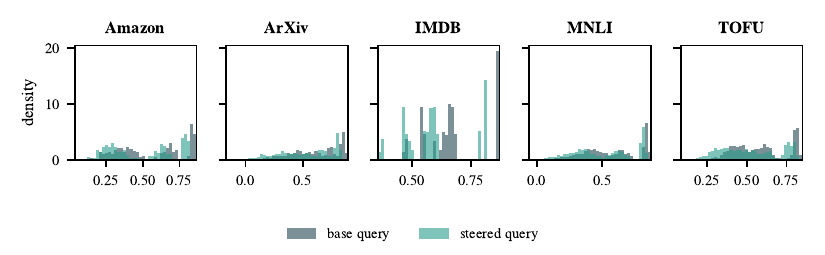}
    \subcaption{\textbf{Target-cosine distributions before and after steering} (locked test rows). The steered distribution shifts left of base in every dataset; \cref{fig:projection} adds the control distribution.}
    \label{fig:target_cosine_shift}
  \end{subfigure}

  \begin{subfigure}{\textwidth}
    \centering
    \includegraphics[width=\textwidth]{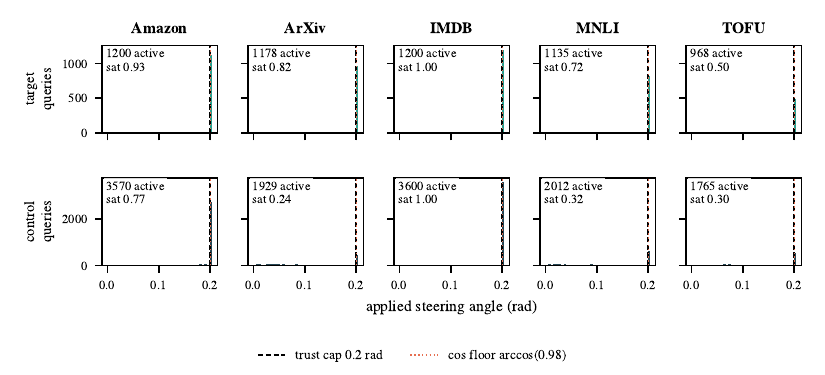}
    \subcaption{\textbf{Applied-angle distribution under the delta-exit trust region} (locked test rows). Active edits concentrate at the default $0.2$-rad cap, slightly tighter than $\arccos(0.98)$; among edited queries, $80.7\%$ of the $5{,}681$ target edits and $61.9\%$ of the $12{,}876$ control edits sit at the cap, whereas \cref{fig:editing_analysis} uses all queries of each group as denominator. Gate-off rows are zero. Rows denote query role and columns dataset. These are descriptive locked-outcome distributions.}
    \label{fig:applied_angle_hist}
  \end{subfigure}
\end{figure*}

\paragraph{Target-cosine shift.}
\cref{fig:target_cosine_shift} shows per-query cosines to the target-document centroid on locked test rows, base versus steered, one panel per dataset. The steered target distribution shifts left of base in every dataset: queries move away from the target centroid, which is the geometric reading of the Hit@5 drops in \cref{tab:rag_comparison_concise}. The figure shows target rows only; \cref{fig:projection} adds the control distribution, and the main-text \cref{fig:editing_analysis} is the pooled three-population companion.

\paragraph{Applied-angle readout.}
\cref{fig:applied_angle_hist} is the trust region's readout: the angle the deployed edit actually applies to each locked test query. Gate-off rows are exactly zero, since a closed gate is an exact identity. Active edits concentrate at the $0.2$-rad cap, slightly tighter than $\arccos(0.98)$; among edited queries, $80.7\%$ of the $5{,}681$ target edits and $61.9\%$ of the $12{,}876$ control edits sit at the cap, whereas the main-text \cref{fig:editing_analysis} uses all queries of each group as denominator ($76.5\%$ and $44.3\%$). Saturation at the cap is what the operating-point sweep of \cref{app:method_details} stress-tests.

\begin{figure*}[t]
  \centering
  \phantomcaption
  \begin{subfigure}{\textwidth}
    \centering
    \includegraphics[width=0.68\textwidth]{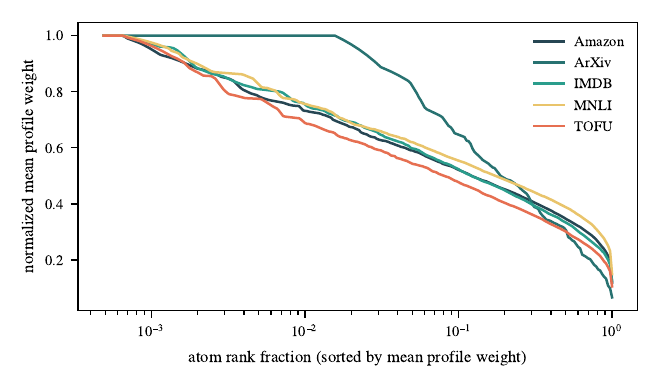}
    \subcaption{\textbf{Concept-profile sharpness across datasets} (locked validation assets). Each curve is the median over a dataset's 12 locked validation units of the sorted, max-normalized mean profile-weight vector, against atom-rank fraction. ArXiv uses $64$ atoms for every encoder; the other datasets use $768$/$1{,}536$/$2{,}048$ for MiniLM/bge+e5/Qwen3. Sharper drop-offs indicate concentrated profile weights, not sparse deployed edits. One per-concept row is appended at deployment (\cref{ssec:deployment_config}).}
    \label{fig:sharpness}
  \end{subfigure}

  \begin{subfigure}{\textwidth}
    \centering
    \includegraphics[width=0.85\textwidth]{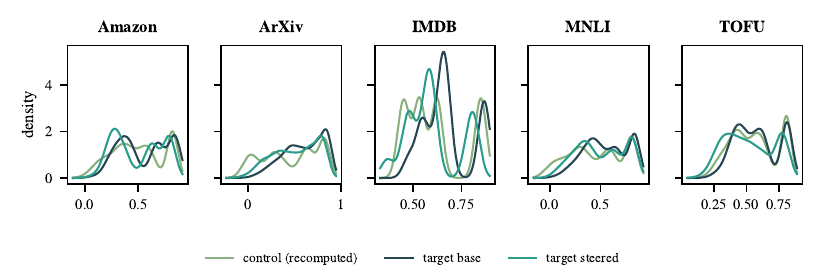}
    \subcaption{\textbf{Projection-score separability before and after steering} (locked test outcomes). Per-dataset kernel densities of each query's cosine to the target centroid of the unchanged base index: target at base, target after steering, and control (recomputed from the locked per-unit tensors; the same recomputation reproduces the locked target rows to $2.8\times10^{-8}$). A steered target distribution shifting toward the control distribution indicates suppression of concept expression.}
    \label{fig:projection}
  \end{subfigure}
\end{figure*}

\paragraph{Profile sharpness.}
\cref{fig:sharpness} measures how concentrated each dataset's concept profiles are in atom coordinates: each curve is the median over a dataset's $12$ locked validation units of the sorted, max-normalized mean profile-weight vector, plotted against atom-rank fraction. ArXiv uses $64$ atoms for every encoder; the other datasets use $768$/$1{,}536$/$2{,}048$ for MiniLM/bge+e5/Qwen3. Sharper drop-offs mean that profile mass concentrates on a few atoms, which is why one per-concept residual row appended at deployment can represent the profile direction in bank coordinates (\cref{ssec:deployment_config}). Concentrated profiles do not by themselves imply sparse deployed edits.

\paragraph{Projection separability.}
\cref{fig:projection} plots kernel densities of each query's cosine to the target centroid of the unchanged base index for three populations: target at base, target after steering, and control, recomputed from the locked per-unit tensors (the same recomputation reproduces the locked target rows to $2.8\times10^{-8}$). A steered target distribution that moves toward the control distribution reads as suppressed concept expression. Cosine to a single centroid is one direction only, so the panel corroborates the hit-based outcomes rather than certifying suppression by itself.

\section{Multi-Concept Composition Ladder}
\label{app:multi_concept}

\cref{tab:multi_concept_ladder} retains the four-arm history. Only the expanded ArXiv $K=4$ arm is a pass; its $K=2$ design and the earlier arms are validation-only or failures and cannot be presented as passes. Each concept has a separate raw-space G1/G0 gate and coefficient vector. If $s_c$ are gates, start with $t_c=0.3s_c$ and set $B_q=\min(0.3\max_c s_c,0.2)$. For positive total request allocate $\tilde t_c=t_c\min(1,B_q/\sum_\ell t_\ell)$; if all gates vanish all allocations are zero. Per-concept angular feasibility and sine-ratio scaling precede additive delta composition and the shared trust exit. The comparator uses a normalized average of unit O2 directions with a single max gate. The endpoint takes minima of per-concept H5 within each method and unit, then differences them and averages units, as in \eqref{eq:multi_endpoint}; it does not minimize paired conceptwise differences or use a min gate.

\begin{table*}[t]
\centering
\scriptsize
\setlength{\tabcolsep}{2pt}
\caption{\textbf{Multi-concept composition ladder.} Primary endpoint: paired per-unit difference of min-concept Hit@5 Drop, GR-C2U-ACR-compose minus the internal ablation GR-C2U-Fixed-m3-multi (positive favors compose). v1 (original pools) passes validation only; v2 (new pools) fails selection and carries a registered split-overlap construction defect; v3 (cursor-disjoint) is scope-limited to ArXiv and fails selection on both designs; v1--v3 touch no test rows and their test stages stay hard-closed. v4 (cursor-disjoint tier-T3 capacity expansion on fresh rows) fails selection at $K{=}2$ but passes selection, confirmation, and the hard-gated test stage at $K{=}4$. F\&R-multi is a document-membership oracle upper reference, non-gating.}
\label{tab:multi_concept_ladder}
\begin{tabular}{lllcccc}
\toprule
\textbf{Arm} & \textbf{Pools} & \textbf{Design} & \textbf{\makecell{Selection $\Delta$ \\{}[CI95]}} & \textbf{\makecell{Confirmation $\Delta$ \\{}[CI95]}} & \textbf{\makecell{Test $\Delta$ \\{}[CI95]}} & \textbf{Verdict} \\
\midrule
v1 & original & \makecell[l]{MNLI $K{=}4$; \\ ArXiv $K{=}4$} & \makecell{+0.0300 \\{}[+0.0167, +0.0450]} & \makecell{+0.0250 \\{}[+0.0050, +0.0467]} & --- & \textbf{PASS}\textsuperscript{v} \\
v2 & new\textsuperscript{d} & \makecell[l]{Amazon $K{=}4$; \\ ArXiv $K{=}8$; \\ MNLI $K{=}3$} & \makecell{+0.0244 \\{}[+0.0122, +0.0389]} & -0.0044\textsuperscript{c} & --- & FAIL\textsuperscript{d} \\
v3 & cursor-disjoint & ArXiv $K{=2}$\textsuperscript{e} & +0.0100\textsuperscript{g} & ---\textsuperscript{f} & --- & FAIL\textsuperscript{d} \\
v3 & cursor-disjoint & ArXiv $K{=4}$\textsuperscript{e} & +0.0067\textsuperscript{g} & ---\textsuperscript{f} & --- & FAIL\textsuperscript{d} \\
v4 & cursor-disjoint & ArXiv $K{=}2$\textsuperscript{e} & \makecell{-0.0008 \\{}[-0.0075, +0.0058]} & ---\textsuperscript{f} & --- & FAIL\textsuperscript{d} \\
v4 & cursor-disjoint & ArXiv $K{=}4$\textsuperscript{e} & \makecell{+0.0425 \\{}[+0.0279, +0.0575]} & \makecell{+0.0517 \\{}[+0.0375, +0.0663]} & \makecell{+0.0412 \\{}[+0.0325, +0.0506]} & \textbf{PASS}\textsuperscript{t} \\
\bottomrule
\end{tabular}
\par\vspace{1mm}
{\raggedright\footnotesize \textit{Notes:} \textsuperscript{v} validation-only acceptance; not a test claim. \textsuperscript{d} diagnostic-only; the test runner refuses unless the method lock is PASS. \textsuperscript{c} v2 confirmation CI95 straddles zero per dataset: Amazon [-0.0233, +0.0033]; ArXiv [-0.0100, +0.0100]; MNLI [-0.0100, +0.0000]; v2 selection per-concept compose Hit@5 Drop on Amazon L1 is negative for Books (-0.0133) and Electronics (-0.0267), failing the per-concept gate. Granularity: the v2 Amazon cells use L1; all ArXiv cells (v2, v3, v4) use L2. \textsuperscript{e} Concepts (pre-registered free-row ranking; v4 re-ranked after v3 consumption): v3 $K{=}2$: \{Artificial Intelligence, Software Engineering\}; $K{=}4$: \{Artificial Intelligence, Genomics, Neurons and Cognition, Software Engineering\}; v4 $K{=}2$: \{Software Engineering, Artificial Intelligence\}; $K{=}4$: \{Software Engineering, Artificial Intelligence, Neurons and Cognition, Number Theory\}. \textsuperscript{f} confirmation never ran: no design passed selection. \textsuperscript{g} v3 gates failed (CI95 lower bound $\leq 0$; $K{=}2$: seeds 2/3, encoders 3/4; $K{=}4$: seeds 2/3, encoders 1/4); v4 $K{=}2$ fails the CI-lower-bound, all-seeds, and encoder gates (seeds 1/3, encoders 2/4 positive). \textsuperscript{t} test-stage acceptance on freshly allocated, never-before-used test rows (the first use of these test rows in any experiment, evaluated exactly once with no test-time tuning); the refusal path was exercised and evidenced before the PASS method lock existed. v4 tier T3 (pre-registered before any v4 outcome): 200/200 selection/confirmation and 400 test target queries per concept, 1{,}200 control rows per split; v4 $K{=}4$ test gates: seeds 3/3, encoders 4/4 positive. Filter-and-Refill-multi (document-membership oracle, non-gating upper reference) min-concept Hit@5 Drop on the same units: v2 confirmation Amazon +0.6967, ArXiv +0.9333, MNLI +0.8867; v3 selection $K{=}2$ +0.9433, $K{=}4$ +0.8967; v4 selection $K{=}2$ +0.7683, $K{=}4$ +0.7662; v4 test $K{=}4$ +0.7181. v3/v4 scope: MNLI/IMDB/TOFU excluded by capacity (best-case free control rows MNLI 395, IMDB 565, TOFU 428 vs 1,200 required), Amazon by mechanism (v2-measured L1-granularity retrieval-level ineffectiveness; L2 offers no non-target control).\par}
\end{table*}

\paragraph{Original pools (v1): validation-only pass.}
Pooled over MNLI and ArXiv $K=4$, the paired acceptance claim passes selection ($+0.0300$, CI95 $[+0.0167,+0.0450]$) and independent confirmation ($+0.0250$, CI95 $[+0.0050,+0.0467]$). All eight dataset--concept compose H5 cells are positive. Other datasets are capacity-blocked, not zeros; this pass authorizes no test by itself.

\paragraph{New pools (v2): failed acceptance with a registered defect.}
Amazon L1 $K=4$, ArXiv L2 $K=8$, and MNLI $K=3$ have a positive selection paired endpoint ($+0.0244$), but selection acceptance fails: Amazon Books and Electronics have compose H5 $-0.0133$ and $-0.0267$. Confirmation does not replicate ($-0.0044$). The builder restarts allocation cursors across role kinds, contaminating selection with profile rows and creating corpus/query overlap. Confirmation is disjoint from profiles, and its independent failure supports the negative verdict; test rows are disjoint from other slices and never accessed. The overlap remains an additional blocker, not an excused positive result. IMDB/TOFU control capacity is insufficient. The diagnostic ArXiv $K=2/4/8$ sweep is also negative and carries the defect label.

\paragraph{Cursor-disjoint rebuild (v3): failed selection.}
The repaired builder uses advancing cross-role cursors and true overlap checks. Pre-outcome capacity/mechanism rules restrict it to ArXiv: MNLI/IMDB/TOFU lack controls, while Amazon is excluded after ineffective L1 retrieval suppression and lacks suitable L2 controls. Both pre-registered $K=2$ and $K=4$ designs fail selection. Confirmation never runs and test refusal is exercised; no test rows are touched.

\paragraph{Capacity expansion (v4): ArXiv $K=4$ passes.}
Earlier locked validation differences inform a budget design, not a causal diagnosis of quantization noise. The T1/T2/T3 ladder is frozen before v4 outcomes; all tiers fail the predeclared $0.8$ power criterion. The adjudicated fallback selects the largest feasible T3: $200$ selection, $200$ confirmation, and $400$ test target queries per concept, with $1{,}200$ controls per split. Profiles remain $20$ documents per concept; the index uses $25$ target documents per concept and $300$ controls. Cursor-disjoint allocation requires $845$ rows per concept and $4{,}800$ control rows overall. Method configuration and the atom bank are unchanged, but evaluation pools and budgets differ.

Concepts are frozen by free-row ranking after prior consumption: Software Engineering, Artificial Intelligence, Neurons and Cognition, Number Theory; $K=2$ is the prefix. $K=2$ fails selection ($-0.0008$, CI95 $[-0.0075,+0.0058]$, only one of three seeds positive). $K=4$ passes selection ($+0.0425$, CI95 $[+0.0279,+0.0575]$) and independent confirmation ($+0.0517$, CI95 $[+0.0375,+0.0663]$). Its hard-gated test endpoint is $+0.04125$, CI95 $[+0.0325,+0.050625]$ from $4{,}000$ bootstraps of twelve paired unit differences; all four concept cells, three seeds, and four encoders are positive. The retained table rounds these to $+0.0412$ and $[+0.0325,+0.0506]$. These are previously unused multi-concept test rows, first evaluated without tuning after the passing lock; refusal was exercised before that lock existed. This untouched-test scope is separate from the posthoc single-concept/attack cohorts. The document-membership oracle Filter-and-Refill-multi is a non-gating upper reference ($+0.7181$), not an access-matched competitor.

The ladder supports a scoped improvement over the averaged-direction comparator on ArXiv $K=4$. It does not prove that larger budgets remove interference, that noise caused earlier failures, or that atom representation alone causes the gain.

\begin{figure*}[t]
\centering
\includegraphics[width=\textwidth]{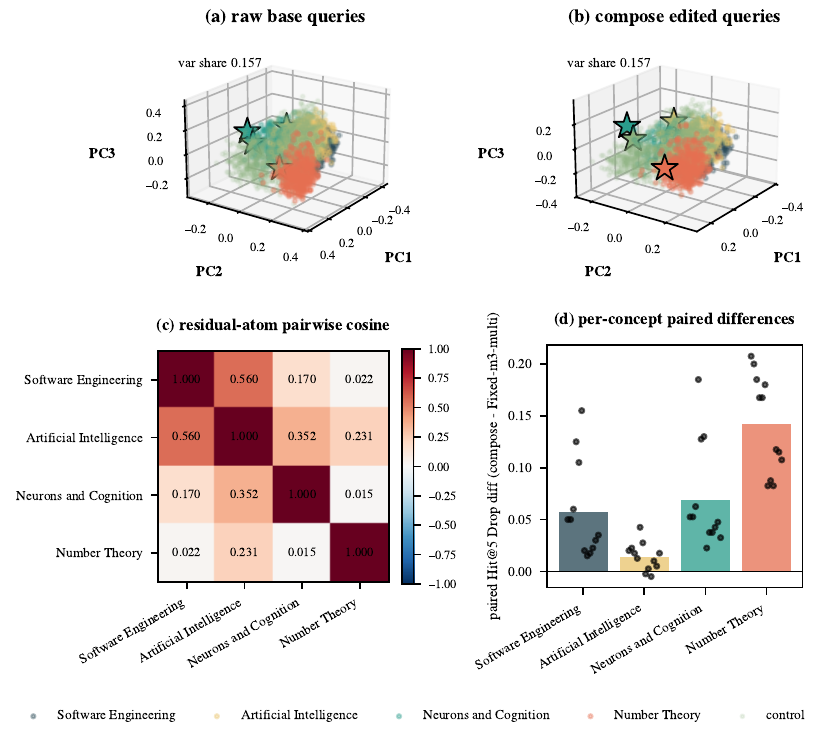}
\caption{\textbf{Descriptive composition view of locked ArXiv $K=4$ outcomes.} (a,b) Raw and edited queries in one deterministic PCA basis on the pre-registered bge/seed-1 illustration (variance share $0.157$); centroid shifts are $0.022$ for controls and $0.058$--$0.118$ for targets. (c) Residual-atom cosines, not the Gram matrix of full O2 edit directions: SE--NT $0.022$, NC--NT $0.015$, SE--AI $0.560$. (d) Paired per-concept H5 differences; AI is smallest ($+0.0142$). Selection/confirmation/test endpoints are $+0.0425/0.0517/0.0412$ (test CI95 $[+0.0325,+0.0506]$); oracle refill is $+0.7181$. $K=2$ failed selection and never entered test. This preserves the first-use, no-test-tuning scope of the locked $K=4$ test.}
\label{fig:concept_composition_viz}
\end{figure*}

\paragraph{Composition view.}
\cref{fig:concept_composition_viz} is a descriptive view of the locked ArXiv $K=4$ outcomes, not new evidence. Panels (a,b) place raw and compose-edited queries in one deterministic PCA basis on the pre-registered bge/seed-1 illustration (variance share $0.157$): control centroids shift by $0.022$, target centroids by $0.058$--$0.118$. Panel (c) reports residual-atom cosines---SE--NT $0.022$, NC--NT $0.015$, SE--AI $0.560$---not the Gram matrix of full O2 edit directions. Panel (d) shows paired per-concept H5 differences, with Artificial Intelligence smallest ($+0.0142$): every concept contributes positively, which is what the min-concept endpoint of \eqref{eq:multi_endpoint} rewards. $K=2$ failed selection and never entered test.

\section{Query-Transform Evaluation and Full Results}
\label{app:attack_protocol}

\begin{table}[t]
\centering
\small
\definecolor{arrProfile}{HTML}{2A9D8C}
\definecolor{arrLabeled}{HTML}{287271}
\definecolor{arrAdapt}{HTML}{264653}
\definecolor{arrCandidate}{HTML}{E9C46B}
\definecolor{arrMember}{HTML}{E66F51}
\definecolor{arrHero}{HTML}{E9C46B}
\caption{\textbf{Cross-condition behavior over the fifteen dataset--condition cells.} Suppressing: density $\Delta<0$; divergent: suppressing with CER $\Delta>0$; worst values are signed maxima over the fifteen cells (pp). Counts use unrounded values; the per-condition cells are in Table~\ref{tab:attack_full}.}
\label{tab:attack_counts}
\setlength{\tabcolsep}{2pt}
\begin{tabular}{@{}lcccc@{}}
\toprule
Method & \shortstack{Suppressing\\/15} & \shortstack{Divergent\\/15} & \shortstack{Worst\\CER $\Delta$} & \shortstack{Worst\\utility loss} \\
\midrule
\rowcolor{arrHero!25} C\textsuperscript{2}U & 15/15 & 0/15 & 0.0 & +2.6 \\
\rowcolor{arrProfile!10} PISCES & 14/15 & 11/15 & +4.9 & +0.1 \\
\rowcolor{arrProfile!10} Random & 12/15 & 11/15 & +22.3 & +5.7 \\
\rowcolor{arrLabeled!10} INLP & 15/15 & 13/15 & +25.0 & +5.4 \\
\rowcolor{arrLabeled!10} LEACE & 15/15 & 13/15 & +12.5 & +3.4 \\
\rowcolor{arrLabeled!10} RAG-Emb-Ctrl & 15/15 & 3/15 & +16.7 & +14.0 \\
\rowcolor{arrLabeled!10} CURE & 0/15 & 0/15 & 0.0 & 0.0 \\
\rowcolor{arrAdapt!10} DARE-GRL & 11/15 & 11/15 & +50.9 & +66.3 \\
\bottomrule
\end{tabular}
\end{table}

\cref{tab:attack_full} retains every dataset--condition value and the Mean columns; \cref{tab:attack_counts} summarizes the same cells as counts and signed worst values. The original arm covers $5$ datasets $\times4$ encoders $\times3$ seeds $\times3$ conditions $\times7$ methods, including Raw: $504{,}000$ per-query rows. A separate additive arm supplies the two recent adaptations; the original count is not the combined-arm count. Both are posthoc extensions of frozen single-concept units, with no test tuning and no untouched-test claim. The main table omits Raw's construction-zero row, not its role as the paired reference. Candidate/document-access Filter-and-Refill, membership-assisted Rank-Router, and Delta-Edit are predeclared attack exclusions.

\begin{table*}[t]
\scriptsize
\centering
\definecolor{clsProfile}{HTML}{2A9D8C}
\definecolor{clsLabeled}{HTML}{287271}
\definecolor{clsAdapt}{HTML}{264653}
\definecolor{clsHero}{HTML}{E9C46B}
\caption{\textbf{Retrieval under standard and transformed queries.} Per-query paired percentage-point changes relative to the Raw reference under the \emph{same} condition; \textit{Target density$\Delta$}, \textit{CER$\Delta$} on control queries (context budget $B{=}512$, at most $50$ documents), and \textit{control own-label Hit@10 loss} are all lower-is-better. Bold/underline are computed only inside the exact profile-only group (C\textsuperscript{2}U, PISCES, Random), excluding the Raw reference; no cross-access-group dominance is claimed. The Mean columns pool the three query conditions per dataset.}
\label{tab:attack_full}
\setlength{\tabcolsep}{2pt} %
\begin{tabular}{@{}lcccccccccccc@{}}
\toprule
 & \multicolumn{3}{c}{\textbf{\shortstack{Standard\\(no attack)}}} & \multicolumn{3}{c}{\textbf{Injection}} & \multicolumn{3}{c}{\textbf{Zou-noise}} & \multicolumn{3}{c}{\textbf{Mean}} \\
\cmidrule(lr){2-4}\cmidrule(lr){5-7}\cmidrule(lr){8-10}\cmidrule(lr){11-13}
\textbf{Method} & \textbf{\shortstack{Target\\density$\Delta$}} & \textbf{\shortstack{Control\\CER$\Delta$}} & \textbf{\shortstack{Hit@10\\loss$\downarrow$}} & \textbf{\shortstack{Target\\density$\Delta$}} & \textbf{\shortstack{Control\\CER$\Delta$}} & \textbf{\shortstack{Hit@10\\loss$\downarrow$}} & \textbf{\shortstack{Target\\density$\Delta$}} & \textbf{\shortstack{Control\\CER$\Delta$}} & \textbf{\shortstack{Hit@10\\loss$\downarrow$}} & \textbf{\shortstack{Target\\density$\Delta$}} & \textbf{\shortstack{Control\\CER$\Delta$}} & \textbf{\shortstack{Hit@10\\loss$\downarrow$}} \\
\midrule
\multicolumn{13}{@{}l}{\textbf{Amazon}} \\
\midrule
\rowcolor{clsHero!30} C\textsuperscript{2}U & \best{-18.0} & \best{-12.2} & \best{-0.1} & \best{-14.3} & \best{-16.7} & \best{0.0} & \best{-16.7} & \best{-12.4} & \best{0.0} & \best{-16.4} & \best{-13.7} & \best{0.0} \\
\rowcolor{clsProfile!10} PISCES & -3.5 & +3.4 & \best{-0.1} & \secon{+0.2} & \secon{+4.9} & \best{0.0} & \secon{-0.5} & \secon{+4.6} & \best{0.0} & \secon{-1.2} & \secon{+4.3} & \best{0.0} \\
\rowcolor{clsProfile!10} Random & \secon{-5.7} & \secon{+2.2} & \secon{0.0} & +2.5 & +11.2 & \best{0.0} & +2.1 & +5.7 & \best{0.0} & -0.4 & +6.4 & \secon{0.0} \\
\rowcolor{clsLabeled!10} INLP & -31.2 & +20.1 & -0.1 & -5.4 & +16.9 & 0.0 & -10.7 & +15.8 & 0.0 & -15.8 & +17.6 & 0.0 \\
\rowcolor{clsLabeled!10} LEACE & -45.2 & 0.0 & -0.1 & -6.2 & +3.2 & 0.0 & -14.3 & +4.1 & 0.0 & -21.9 & +2.4 & 0.0 \\
\rowcolor{clsLabeled!10} RAG-Emb-Ctrl & -59.7 & -34.6 & -0.1 & -19.3 & -28.6 & 0.0 & -25.1 & -30.8 & 0.0 & -34.7 & -31.4 & 0.0 \\
\rowcolor{clsLabeled!10} CURE & 0.0 & 0.0 & 0.0 & 0.0 & 0.0 & 0.0 & 0.0 & 0.0 & 0.0 & 0.0 & 0.0 & 0.0 \\
\rowcolor{clsAdapt!10} DARE-GRL & -38.4 & +36.3 & -0.1 & +9.5 & +50.7 & 0.0 & +1.3 & +50.9 & 0.0 & -9.2 & +46.0 & 0.0 \\
\midrule
\multicolumn{13}{@{}l}{\textbf{ArXiv}} \\
\midrule
\rowcolor{clsHero!30} C\textsuperscript{2}U & \best{-14.6} & \best{-6.1} & \best{-0.8} & \best{-20.1} & \best{-21.9} & \best{-5.2} & \best{-18.4} & \best{-7.1} & \best{-0.8} & \best{-17.7} & \best{-11.7} & \best{-2.2} \\
\rowcolor{clsProfile!10} PISCES & -2.1 & \secon{+1.1} & \secon{-0.2} & -2.0 & \secon{-0.4} & \secon{-0.2} & -2.0 & \secon{+1.0} & \secon{-0.2} & -2.0 & \secon{+0.6} & \secon{-0.2} \\
\rowcolor{clsProfile!10} Random & \secon{-4.8} & +4.8 & +3.6 & \secon{-6.0} & +3.9 & +5.7 & \secon{-6.0} & +5.1 & +4.7 & \secon{-5.6} & +4.6 & +4.7 \\
\rowcolor{clsLabeled!10} INLP & -33.0 & +3.7 & +4.6 & -35.0 & -3.3 & +2.3 & -34.4 & +5.6 & +5.4 & -34.1 & +2.0 & +4.1 \\
\rowcolor{clsLabeled!10} LEACE & -14.8 & +10.6 & +1.6 & -11.8 & +12.5 & +3.4 & -15.4 & +12.3 & +1.7 & -14.0 & +11.8 & +2.2 \\
\rowcolor{clsLabeled!10} RAG-Emb-Ctrl & -68.2 & -14.0 & +5.6 & -76.3 & -34.9 & +3.3 & -72.7 & -16.8 & +6.9 & -72.4 & -21.9 & +5.3 \\
\rowcolor{clsLabeled!10} CURE & 0.0 & 0.0 & 0.0 & 0.0 & 0.0 & 0.0 & 0.0 & 0.0 & 0.0 & 0.0 & 0.0 & 0.0 \\
\rowcolor{clsAdapt!10} DARE-GRL & -51.4 & +41.5 & +66.3 & -50.5 & +19.6 & +60.0 & -50.4 & +40.2 & +64.6 & -50.7 & +33.8 & +63.6 \\
\midrule
\multicolumn{13}{@{}l}{\textbf{IMDB}} \\
\midrule
\rowcolor{clsHero!30} C\textsuperscript{2}U & \best{-10.9} & \best{0.0} & \best{0.0} & \best{-14.5} & \best{0.0} & \best{0.0} & \best{-7.9} & \best{-2.3} & \best{-0.1} & \best{-11.1} & \best{-0.8} & \best{0.0} \\
\rowcolor{clsProfile!10} PISCES & 0.0 & \best{0.0} & \best{0.0} & -0.4 & \best{0.0} & \best{0.0} & -1.4 & \secon{+0.7} & \secon{+0.1} & -0.6 & \secon{+0.2} & \secon{0.0} \\
\rowcolor{clsProfile!10} Random & \secon{-8.1} & \secon{+8.2} & \secon{+0.1} & \secon{-1.1} & \secon{+22.3} & \secon{+1.7} & \secon{-7.0} & +13.6 & +0.3 & \secon{-5.4} & +14.7 & +0.7 \\
\rowcolor{clsLabeled!10} INLP & -18.6 & 0.0 & 0.0 & -11.8 & +25.0 & 0.0 & -19.6 & +5.3 & -0.2 & -16.7 & +10.1 & -0.1 \\
\rowcolor{clsLabeled!10} LEACE & -4.7 & +8.3 & 0.0 & -7.2 & 0.0 & 0.0 & -5.0 & +8.9 & +0.4 & -5.6 & +5.7 & +0.1 \\
\rowcolor{clsLabeled!10} RAG-Emb-Ctrl & -35.5 & +8.3 & 0.0 & -31.0 & +16.7 & 0.0 & -36.3 & +1.6 & -0.2 & -34.3 & +8.9 & -0.1 \\
\rowcolor{clsLabeled!10} CURE & 0.0 & 0.0 & 0.0 & 0.0 & 0.0 & 0.0 & 0.0 & 0.0 & 0.0 & 0.0 & 0.0 & 0.0 \\
\rowcolor{clsAdapt!10} DARE-GRL & -38.3 & +25.0 & 0.0 & -15.4 & +33.3 & 0.0 & -34.4 & +25.3 & 0.0 & -29.4 & +27.9 & 0.0 \\
\midrule
\multicolumn{13}{@{}l}{\textbf{MNLI}} \\
\midrule
\rowcolor{clsHero!30} C\textsuperscript{2}U & \best{-11.4} & \best{-3.4} & \best{-0.5} & \best{-19.8} & \best{-8.8} & +2.6 & \best{-13.5} & \best{-8.1} & \best{-1.1} & \best{-14.9} & \best{-6.8} & \secon{+0.3} \\
\rowcolor{clsProfile!10} PISCES & -2.0 & \secon{+0.9} & \secon{-0.2} & -2.0 & +0.4 & \best{-0.2} & -2.8 & \secon{+0.1} & \secon{-0.3} & -2.3 & \secon{+0.4} & \best{-0.2} \\
\rowcolor{clsProfile!10} Random & \secon{-4.8} & +2.9 & +1.0 & \secon{-3.5} & \secon{-0.3} & \secon{+1.4} & \secon{-3.2} & +0.9 & +1.3 & \secon{-3.9} & +1.2 & +1.2 \\
\rowcolor{clsLabeled!10} INLP & -16.9 & +9.0 & -0.9 & -14.8 & +3.0 & -1.5 & -21.2 & +2.3 & -0.3 & -17.7 & +4.8 & -0.9 \\
\rowcolor{clsLabeled!10} LEACE & -8.0 & +6.6 & -0.4 & -7.5 & +1.1 & +0.1 & -9.6 & +2.2 & -0.7 & -8.4 & +3.3 & -0.3 \\
\rowcolor{clsLabeled!10} RAG-Emb-Ctrl & -43.1 & -45.1 & +0.6 & -37.8 & -51.8 & +14.0 & -40.7 & -45.5 & +2.6 & -40.6 & -47.5 & +5.7 \\
\rowcolor{clsLabeled!10} CURE & 0.0 & 0.0 & 0.0 & 0.0 & 0.0 & 0.0 & 0.0 & 0.0 & 0.0 & 0.0 & 0.0 & 0.0 \\
\rowcolor{clsAdapt!10} DARE-GRL & -37.5 & +13.1 & +10.0 & -21.9 & +2.3 & +0.3 & -30.0 & +4.3 & +10.4 & -29.8 & +6.6 & +6.9 \\
\midrule
\multicolumn{13}{@{}l}{\textbf{TOFU}} \\
\midrule
\rowcolor{clsHero!30} C\textsuperscript{2}U & \best{-2.9} & \best{-1.6} & \best{-0.1} & \best{-5.7} & \best{-6.1} & \secon{0.0} & \best{-4.4} & \best{-5.8} & \best{-0.3} & \best{-4.4} & \best{-4.5} & \best{-0.1} \\
\rowcolor{clsProfile!10} PISCES & \secon{-0.9} & \secon{+1.1} & \secon{+0.1} & \secon{-0.8} & \secon{+1.3} & \best{-0.1} & \secon{-0.3} & +1.7 & \secon{+0.1} & \secon{-0.6} & \secon{+1.3} & \secon{+0.1} \\
\rowcolor{clsProfile!10} Random & -0.1 & +1.1 & +0.9 & -0.5 & +2.2 & +1.0 & +0.2 & \secon{+1.4} & +0.8 & -0.1 & +1.5 & +0.9 \\
\rowcolor{clsLabeled!10} INLP & -7.0 & +11.4 & +1.8 & -5.7 & +14.4 & +2.6 & -4.2 & +16.3 & +2.1 & -5.6 & +14.0 & +2.2 \\
\rowcolor{clsLabeled!10} LEACE & -5.7 & +1.4 & -0.3 & -5.5 & +3.9 & +0.2 & -4.2 & +4.4 & +0.1 & -5.1 & +3.2 & 0.0 \\
\rowcolor{clsLabeled!10} RAG-Emb-Ctrl & -18.3 & -27.4 & -0.4 & -22.8 & -34.7 & +1.9 & -18.9 & -29.8 & -0.4 & -20.0 & -30.6 & +0.4 \\
\rowcolor{clsLabeled!10} CURE & 0.0 & 0.0 & 0.0 & 0.0 & 0.0 & 0.0 & 0.0 & 0.0 & 0.0 & 0.0 & 0.0 & 0.0 \\
\rowcolor{clsAdapt!10} DARE-GRL & -1.2 & +15.3 & +13.3 & +0.3 & +20.2 & +13.6 & +0.2 & +19.7 & +14.6 & -0.3 & +18.4 & +13.8 \\
\bottomrule
\end{tabular}
\par\vspace{1mm}
{\raggedright\footnotesize \textit{Note:} Filter-and-Refill and Rank-Router are excluded on access-class mismatch (candidate/document and membership-label access versus this arm's query-only contract). Raw (ref) is exactly zero by construction (method minus itself); its all-zero reference rows are omitted from the display. This arm is a posthoc extension of the frozen test units, evaluated exactly once with no test-time tuning; no untouched-test claim is made. CURE's all-zero cells are its locked zero-strength utility-gate identity, verified by execution. Row tints encode the locked access class (same colors as \cref{fig:pareto_panels}); the C\textsuperscript{2}U row is highlighted.\par}
\end{table*}

Transforms are bitwise-equal reimplementations of the original pure functions; the recent-baseline arm replays its standard-condition runs unchanged, and CURE's all-zero cells are its locked zero-strength identity. Aggregates use $4{,}000$ paired bootstrap samples. CER is averaged over control queries with the $512$-token/$50$-document packing rule; the legacy attack ``Hit'' heading is clarified as target density without changing values. Main \cref{tab:attack_baselines_cer_compact} summarizes equal-weight cell means, and \cref{tab:attack_counts} the unrounded suppression/divergence signs and signed worst values; neither is significance or adaptive-attack certification.

\paragraph{Reading the full matrix.}
\cref{tab:attack_full} expands every summary cell into per-condition values. Each dataset block lists the eight evaluated methods in the main table's order with locked access-class tints and the C\textsuperscript{2}U row highlighted; each condition group pairs target density $\Delta$ on target queries with CER $\Delta$ and own-label Hit@10 loss on control queries, and the Mean group reproduces the main table's per-dataset cells. Marks compare only the exact profile-only group (C\textsuperscript{2}U, PISCES, Random).

\paragraph{Condition-level patterns.}
Read per condition, the counts in \cref{tab:attack_counts} are not artifacts of Mean pooling. C\textsuperscript{2}U lowers target density in all $15$ cells and never raises control exposure, with attack utility between $-5.2$ and $+2.6$ pp; its remaining variation is condition mix, not sign. INLP's Amazon pair---density down $15.8$ pp, exposure up $17.6$ pp on Mean---repeats under every condition, the displacement pattern behind the $13/15$ divergent cells it shares with LEACE. RAG-Emb-Ctrl suppresses most strongly on ArXiv ($-72.4$ density, $-21.9$ exposure) yet diverges in the three IMDB cells, worst $+16.7$ pp under injection, and costs utility on ArXiv ($+5.3$ pp) and MNLI ($+5.7$); the duplicate-template limitation applies. CURE's all-zero matrix is its locked zero-strength identity, so its $0/15$ divergence count is trivial. PISCES, Random, and DARE-GRL diverge in $11/15$ cells.

\clearpage
\section{Reproducibility and Source Record}
\label{sec:reproducibility}

Frozen experimental packages retain per-arm protocols, configuration/input/method locks, encoder/environment records, per-query outputs, aggregation rules, and independent checks. Bootstrap units differ: main per-dataset H5 intervals reuse the standard replay's paired-row bootstrap, whereas the multi-concept test resamples twelve unit differences. Recent-baseline provenance replay is explicitly retrospective. Unknown legacy resource measurements remain unavailable, not estimated.

The ARR manuscript uses the official ACL template and a lightweight single-JSON conversion/source record, \texttt{CONVERSION\_PROVENANCE.json}. That record is distinct from frozen experimental audit chains; ARR has no ICLR-style closure-manifest claim. The restructuring introduces no experiments or test-time selection. Full matrices, negative probes, and descriptive views distinguish evidence reuse from new inference. Generated summaries derive from existing locked aggregates rather than newly accessed raw test queries.

\section{Theoretical Analysis of the Deployed Edit Operator}
\label{app:theory}

We distinguish exact real-arithmetic identities from the finite-precision, thresholded implementation in \cref{ssec:deployment_config}. No theorem below proves retrieval benefit, atom necessity, or absence of multi-concept interference. Throughout, $\mathbb S^{d-1}$ is the unit sphere of $\mathbb R^d$, $g$ denotes the negative diagonal-O2 unit direction, and $d$ denotes ambient dimension. Unit queries and documents are assumed where stated. Numerical branches are specified separately in \cref{app:method_details}.

\subsection{Trust-Region Geometry of the Delta Exit}

\begin{definition}[Edit segment and ideal trust region]
\label{def:edit_operator}
For $q\in\mathbb S^{d-1}$ and finite $\delta\ne0$, define
\begin{equation}
 E_\eta(q)=\frac{q+\eta\delta}{\|q+\eta\delta\|_2},\qquad \eta\in[0,1],
\end{equation}
where the denominator is nonzero. For $0<\gamma_{\rm floor}<1$ and $0<\theta_{\rm cap}\le\pi$, define
\begin{equation}
\begin{gathered}
 \gamma=\max(\cos\theta_{\rm cap},\gamma_{\rm floor}),\\
 \mathcal T_\gamma(q)=\{u\in\mathbb S^{d-1}:\langle u,q\rangle\ge\gamma\}.
\end{gathered}
\end{equation}
Then $\theta_{\max}=\arccos\gamma=\min(\theta_{\rm cap},\arccos\gamma_{\rm floor})$. The angular request is already incorporated into $\delta$; the exit never increases its step above one. On a nonsingular segment the ideal exit chooses the largest feasible step in $[0,1]$. Finite bisection need not return that exact real-valued endpoint.
\end{definition}

\begin{lemma}[Monotone exit on a bounded segment]
\label{lem:monotone_exit}
If $\delta$ is not parallel to $q$, set $a=\langle\delta,q\rangle$ and $b=\|\delta\|^2$. Then $\varphi(\eta)=\langle E_\eta(q),q\rangle$ is continuous on $[0,1]$ and strictly decreasing for positive steps. If $\varphi(1)\ge\gamma$, the full step is feasible. Otherwise a unique $\eta_*\in(0,1)$ satisfies $\varphi(\eta_*)=\gamma$; exact-arithmetic bisection brackets it, with feasible lower endpoints converging to $\eta_*$.
\end{lemma}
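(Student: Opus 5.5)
The plan is to write $\varphi$ in closed form and differentiate it. Since $\|q\|=1$, we have $\|q+\eta\delta\|^2=1+2\eta a+\eta^2 b$ and $\langle q+\eta\delta,q\rangle=1+\eta a$, so
\begin{equation*}
 \varphi(\eta)=\frac{1+\eta a}{N(\eta)},\qquad N(\eta)=\sqrt{1+2\eta a+\eta^2 b}.
\end{equation*}
First I check that the segment is nonsingular. If $q+\eta\delta=0$ for some $\eta\in(0,1]$, then $\delta=-q/\eta$ would be parallel to $q$, which is excluded. At $\eta=0$ we have $N(0)=1$. Hence $N>0$ on $[0,1]$, $E_\eta(q)$ is well defined there, and $\varphi$ is continuous, indeed $C^1$, on $[0,1]$.

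The key computation is the derivative. Differentiating the quotient gives
\begin{equation*}
 \varphi'(\eta)=\frac{a(1+2\eta a+\eta^2 b)-(1+\eta a)(a+\eta b)}{N(\eta)^3}.
\end{equation*}
The numerator simplifies to $\eta(a^2-b)$. The non-parallel hypothesis gives strict Cauchy--Schwarz, $a^2=\langle\delta,q\rangle^2<\|\delta\|^2\|q\|^2=b$. Therefore $\varphi'(\eta)<0$ for every $\eta\in(0,1]$, and $\varphi'(0)=0$. Because the derivative vanishes only at the single endpoint $0$, $\varphi$ is strictly decreasing on all of $[0,1]$. This covers the ``positive steps'' claim.

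Next comes the feasibility dichotomy. If $\varphi(1)\ge\gamma$, then by definition $E_1(q)\in\mathcal T_\gamma(q)$, so the full step is feasible. Otherwise $\varphi(1)<\gamma<1=\varphi(0)$, using $\gamma<1$, which holds because $\gamma_{\rm floor}<1$ and $\theta_{\rm cap}>0$. The intermediate value theorem then yields some $\eta_*\in(0,1)$ with $\varphi(\eta_*)=\gamma$. Strict monotonicity makes $\eta_*$ unique, and it also shows that the feasible set $\{\eta:\varphi(\eta)\ge\gamma\}$ is exactly $[0,\eta_*]$. Consequently $\eta_*$ is the largest feasible step.

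For bisection, I would keep the invariant $\varphi(\ell_k)\ge\gamma>\varphi(r_k)$, starting from $\ell_0=0$ and $r_0=1$. By the monotonicity just proved, this invariant is equivalent to $\ell_k\le\eta_*<r_k$. At each step the midpoint replaces whichever endpoint preserves the invariant. The lower endpoints $\ell_k$ are then nondecreasing and always feasible, and $r_k-\ell_k=2^{-k}$, so $\ell_k\to\eta_*$.

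I do not expect a real obstacle here. The only delicate point is the derivative algebra together with the use of strict Cauchy--Schwarz, since equality in Cauchy--Schwarz is exactly the excluded parallel case. I would also note explicitly that the result is about exact arithmetic, consistent with the paper's caveat that finite-precision bisection need not land on $\eta_*$ itself.
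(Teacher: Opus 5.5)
Your proposal is correct and follows the paper's proof essentially step for step. Both use the same closed form for $\varphi$, the same derivative $\varphi'(\eta)=-\eta(b-a^2)/(1+2\eta a+\eta^2 b)^{3/2}$ made negative by strict Cauchy--Schwarz, the intermediate value theorem from $\varphi(0)=1>\gamma$, and a feasible-lower/infeasible-upper bisection invariant; you additionally spell out details the paper leaves implicit, namely the numerator algebra, why $\gamma<1$, and that the feasible set is exactly $[0,\eta_*]$.
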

\begin{proof}
Non-parallelism gives $b-a^2>0$, so $q+\eta\delta$ cannot vanish. Direct differentiation gives
\begin{equation}
\begin{gathered}
 \varphi(\eta)=\frac{1+\eta a}{\sqrt{1+2\eta a+\eta^2b}},\\
 \varphi'(\eta)=\frac{-\eta(b-a^2)}{(1+2\eta a+\eta^2b)^{3/2}}.
\end{gathered}
\end{equation}
The derivative is negative for $\eta>0$, and $\varphi(0)=1>\gamma$. Continuity gives the unique crossing when $\varphi(1)<\gamma$. Maintaining feasible lower and infeasible upper endpoints gives the bisection claim.
\end{proof}

\begin{remark}[Degeneracy and numerical scope]
If $\delta=tq$, the normalized point stays $q$ while $1+\eta t>0$; an antiparallel segment may reach a singular zero or flip sign. Such a segment is not covered by the lemma. Finite FP32 checks and denominator safeguards do not establish exact real-number boundary attainment or monotonicity of rounded cosines. The drift result below applies to a normalized output satisfying its stated cap condition; numerical tolerances must be accounted for when applying it to computed outputs.
\end{remark}

\begin{proposition}[Uniform worst-case drift bound]
\label{prop:drift_bound}
Assume $q,q'\in\mathbb S^{d-1}$ and $q'\in\mathcal T_\gamma(q)$. For every unit vector $u \in \mathbb{S}^{d-1}$, in particular every frozen document embedding,
\begin{equation}
\begin{gathered}
\bigl| \langle q', u \rangle - \langle q, u \rangle \bigr|
\;\le\; \|q' - q\|_2
\\
\;=\; 2 \sin \frac{\angle(q', q)}{2}
\;\le\; 2 \sin \frac{\theta_{\max}}{2}.
\end{gathered}
\end{equation}
\end{proposition}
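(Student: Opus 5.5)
The bound splits into three links: Cauchy--Schwarz, the chord-length identity on the sphere, and monotonicity of $\sin(x/2)$ on $[0,\pi]$. I would prove them in that order and then chain them.

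\textbf{First inequality.} Write $\langle q',u\rangle-\langle q,u\rangle=\langle q'-q,u\rangle$. Cauchy--Schwarz with $\|u\|_2=1$ gives $|\langle q'-q,u\rangle|\le\|q'-q\|_2$. This holds for any unit $u$ and uses nothing about documents beyond normalization. It therefore applies to every frozen index vector $u_j$.

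\textbf{Middle equality.} Expand $\|q'-q\|_2^2=\|q'\|^2+\|q\|^2-2\langle q',q\rangle$. Since both vectors are unit, this equals $2-2\cos\alpha$ with $\alpha=\angle(q',q)\in[0,\pi]$, defined as $\arccos\langle q',q\rangle$. The half-angle identity $1-\cos\alpha=2\sin^2(\alpha/2)$ then gives $\|q'-q\|_2^2=4\sin^2(\alpha/2)$. Because $\alpha/2\in[0,\pi/2]$, the sine is nonnegative, so taking square roots yields $\|q'-q\|_2=2\sin(\alpha/2)$ with no sign ambiguity.

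\textbf{Last inequality.} The assumption $q'\in\mathcal T_\gamma(q)$ means $\cos\alpha=\langle q',q\rangle\ge\gamma$. By \cref{def:edit_operator}, $\gamma=\cos\theta_{\max}$ with $\theta_{\max}=\arccos\gamma\in[0,\pi]$. Since $\arccos$ is decreasing on $[-1,1]$, we get $\alpha\le\theta_{\max}$. The map $x\mapsto\sin(x/2)$ is nondecreasing on $[0,\pi]$, which gives $2\sin(\alpha/2)\le2\sin(\theta_{\max}/2)$.

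The only step needing care is this last one, in checking that $\theta_{\max}$ lies in $[0,\pi]$ so the monotonicity applies. That follows because $\gamma\in(0,1)$ under the definition's hypotheses $0<\gamma_{\rm floor}<1$, so in fact $\theta_{\max}\in(0,\pi/2)$. No appeal to \cref{lem:monotone_exit} is needed: the proposition assumes only that $q'$ lies in the ideal trust region, whatever produced it. Finite-precision outputs are outside its scope, as the preceding remark states.
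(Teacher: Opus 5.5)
Your proof is correct and follows the paper's argument exactly: Cauchy--Schwarz, the chord identity $\|q'-q\|_2=\sqrt{2-2\cos\theta}=2\sin(\theta/2)$, and monotonicity of sine once $q'\in\mathcal T_\gamma(q)$ gives $\theta\le\theta_{\max}$. Your check that $\theta_{\max}\in(0,\pi/2)$ matches the paper's appeal to $\gamma>0$, although it is not strictly needed: $\arccos$ already lands in $[0,\pi]$, and $\sin(x/2)$ is increasing on all of $[0,\pi]$.
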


\begin{proof}
The inequality $|\langle q' - q, u \rangle| \le \|q' - q\|_2 \|u\|_2 = \|q' - q\|_2$ is Cauchy--Schwarz. For two unit vectors separated by angle $\theta$, the chord length is $\|q' - q\|_2 = \sqrt{2 - 2 \cos \theta} = 2 \sin(\theta / 2)$, and $\theta \le \theta_{\max}$ holds because $q' \in \mathcal{T}_{\gamma}(q)$ by construction; sine is increasing on $[0, \pi/2]$, which contains $\theta_{\max}$ since $\gamma > 0$.
\end{proof}

\begin{corollary}[Margin stability of the top-ranked document]
\label{cor:margin}
Under the hypotheses of \cref{prop:drift_bound}, let the fixed index contain at least two unit document vectors, with $u_1,u_2$ ranked first and second for $q$. Set $m=\langle q,u_1\rangle-\langle q,u_2\rangle$. If $m>4\sin(\theta_{\max}/2)$, then $u_1$ remains uniquely top-ranked after editing.
\end{corollary}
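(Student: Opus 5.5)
The plan is to apply \cref{prop:drift_bound} to every document in the fixed index and then run a two-sided margin comparison. Write $\varepsilon=2\sin(\theta_{\max}/2)$. Since $q'\in\mathcal T_\gamma(q)$, the proposition gives $|\langle q',u_j\rangle-\langle q,u_j\rangle|\le\varepsilon$ for every unit document vector $u_j$. The one thing I will check explicitly is that this bound is uniform over the index. It is, because it depends only on $\|q'-q\|_2$ and not on $u_j$.

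Next I fix an arbitrary competitor $u_j\neq u_1$ in the index. Because $u_2$ is ranked second for $q$, we have $\langle q,u_j\rangle\le\langle q,u_2\rangle=\langle q,u_1\rangle-m$. Chaining the drift bound on each side gives
\begin{equation}
 \langle q',u_1\rangle-\langle q',u_j\rangle\ \ge\ \bigl(\langle q,u_1\rangle-\varepsilon\bigr)-\bigl(\langle q,u_j\rangle+\varepsilon\bigr)\ \ge\ m-2\varepsilon.
\end{equation}
The hypothesis $m>4\sin(\theta_{\max}/2)=2\varepsilon$ makes the right-hand side strictly positive. So $\langle q',u_1\rangle>\langle q',u_j\rangle$ for every $j\ne1$, and $u_1$ is the unique maximizer of $u\mapsto\langle q',u\rangle$ over the index.

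The only delicate point is ties in the original ranking below first place, where several documents may share the second score or lower scores. The argument above avoids this because it only uses that every non-first document has score at most $\langle q,u_2\rangle$. That inequality holds for any tie-breaking that defines $u_2$. It also holds when $m>0$ forces $u_1$ to be strictly first.

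Consistent with the remark on numerical scope, I will state that the corollary concerns the ideal real-arithmetic output $q'$ satisfying the cap. Applying it to computed outputs requires inflating $\varepsilon$ by the numerical tolerance.
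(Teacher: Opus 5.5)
Your proposal is correct and follows the paper's own argument: apply \cref{prop:drift_bound} uniformly across the index so that every score moves by at most $2\sin(\theta_{\max}/2)$, which leaves a post-edit gap of at least $m-4\sin(\theta_{\max}/2)>0$. Your explicit comparison of $u_1$ against every competitor $u_j$, using $\langle q,u_j\rangle\le\langle q,u_2\rangle$, spells out a step that the paper's one-line ``post-edit gap'' argument leaves implicit, and uniqueness over the whole index needs exactly that step.
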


\begin{proof}
By \cref{prop:drift_bound} each score moves by at most $2 \sin(\theta_{\max}/2)$, so the post-edit gap is at least $m - 4 \sin(\theta_{\max}/2) > 0$.
\end{proof}

\begin{remark}[Floor and cap]
If $\theta_{\rm cap}>\arccos\gamma_{\rm floor}$, the floor is tighter; at equality they coincide. Below that angle the nominal cap is tighter. Equal floors across raised caps do not imply equality with a smaller-cap edit, nor identical rankings or H5. The observed operating-point ties and the residual-coefficient counterexample are empirical results in \cref{app:method_details}.
\end{remark}

\subsection{Exact Direction Routing through the Residual Atom}

\begin{definition}[Bank projection and completed coordinates]
\label{def:residual_atom}
For $\mathcal A\in\mathbb R^{N\times d}$ and nonzero $g$, let
\begin{equation}
\begin{gathered}
 a=(\mathcal A^\top)^+g,\quad P=\mathcal A^\top(\mathcal A^\top)^+,\\
 p=Pg,\qquad e=(I-P)g.
\end{gathered}
\end{equation}
Here $M^+$ is the Moore--Penrose pseudoinverse. If $e\ne0$, append $r=e/\|e\|$ and use $b_\beta=[a;\beta\|e\|]$. If $e=0$, then $p=g\ne0$: append $r=p/\|p\|$ and use $b_\beta=[a;0]$. In both cases $\mathcal A'=[\mathcal A;r]$ is defined.
\end{definition}

\begin{proposition}[Projected plus residual recovers the full direction]
\label{prop:exact_routing}
In exact arithmetic, $e$ is orthogonal to the row span of $\mathcal A$ and
\begin{equation}
 \mathcal A'^\top b_\beta=p+\beta e.
\end{equation}
For $\beta=1$ this is full $g$, including when $g$ is in span. The projected coefficients $a$ are the minimum-norm preimage of $p$. Appending $r$ raises rank by one exactly when $e\ne0$; in the in-span branch it adds no rank.
\end{proposition}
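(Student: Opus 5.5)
The argument rests entirely on standard facts about the Moore--Penrose pseudoinverse, applied to $M=\mathcal A^\top\in\mathbb R^{d\times N}$. The column space of $M$ is the row span of $\mathcal A$. We use the fact that $P=MM^+$ is the orthogonal projector onto $\operatorname{col}(M)$, meaning it is symmetric, idempotent, and has range $\operatorname{col}(M)$. It follows that $p=Pg$ lies in the row span and that $e=(I-P)g$ is orthogonal to it, since $\langle e,Mx\rangle=\langle (I-P)g,Mx\rangle=\langle g,(I-P)Mx\rangle=0$ by $PM=M$. This settles the first claim.

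For the reconstruction identity, we expand $\mathcal A'^\top b_\beta=\mathcal A^\top a+r\cdot(\text{last coefficient})$.

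\begin{itemize}
\item The first term is $MM^+g=Pg=p$.
\item If $e\ne0$, the last coefficient is $\beta\|e\|$, and $r\,\beta\|e\|=\beta e$.
\item If $e=0$, the last coefficient is $0$, so the sum is $p=p+\beta e$.
\end{itemize}

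Setting $\beta=1$ gives $p+e=Pg+(I-P)g=g$, and this includes the in-span case, where $p=g$. In that case $g\ne0$ forces $p\ne0$, so $r=p/\|p\|$ is well defined, which is the definedness check noted in \cref{def:residual_atom}.

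The minimum-norm claim is the one place where we need more than projector algebra. We invoke the least-squares characterization of $M^+$: $x=M^+y$ is the unique minimizer of $\|x\|$ among minimizers of $\|Mx-y\|$. We apply it with $y=p$.

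\begin{itemize}
\item Since $p\in\operatorname{col}(M)$, the set of exact preimages $\{x:Mx=p\}$ is nonempty, and it is exactly the set of least-squares minimizers for $p$.
\item We have $M^+p=M^+MM^+g=M^+g=a$, using the Penrose identity $M^+MM^+=M^+$.
\item Equivalently, $a=M^+g\in\operatorname{row}(M)=\ker(M)^\perp$. Any other preimage is $a+k$ with $k\in\ker M$, and $\|a+k\|^2=\|a\|^2+\|k\|^2$.
\end{itemize}

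So $a$ is a preimage of $p$ (because $Ma=p$) and has minimum norm among all preimages. I would present the kernel-orthogonality form, since it is self-contained.

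The rank statement comes from the fact that $\operatorname{rank}\mathcal A'=\dim\operatorname{span}(\operatorname{row}\mathcal A\cup\{r\})$.

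\begin{itemize}
\item When $e\ne0$, $r$ is a nonzero vector orthogonal to the row span, so it does not lie in the span and the rank increases by exactly one.
\item When $e=0$, $r=p/\|p\|$ already lies in the row span, so the rank is unchanged.
\end{itemize}

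The only mildly delicate step is the minimum-norm claim. The subtle point there is that the paper states minimality for $p$ rather than for $g$; the identity $M^+P=M^+$ removes that subtlety. Everything is in exact arithmetic, consistent with the proposition's scope; the thresholded numerical branch of \cref{app:method_details} is not covered.
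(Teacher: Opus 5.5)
Your proof is correct and follows essentially the same route as the paper's: $P=\mathcal A^\top(\mathcal A^\top)^+$ is the orthogonal projector, which gives $p\perp e$ and $\mathcal A^\top a=p$. The two branches of the appended coefficient give $p+\beta e$, and orthogonality of $r$ to the row span settles the rank claim. You also spell out the minimum-norm step, via $a\in\ker(\mathcal A^\top)^\perp$ and the Pythagorean split over $\ker(\mathcal A^\top)$, which the paper compresses into ``pseudoinverse identities.''
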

\begin{proof}
$P$ is the orthogonal projector onto $\operatorname{range}(\mathcal A^\top)$, hence $p\perp e$ and $\mathcal A^\top a=p$. If $e\ne0$, the last row contributes $\beta\|e\|r=\beta e$, so the sum is $p+\beta e$; at $\beta=1$, $p+e=g$. If $e=0$, the last coefficient is zero and $p=g$. Pseudoinverse identities give the minimum-norm preimage property. A nonzero orthogonal row raises rank by one, whereas normalized $p$ is already in the row span.
\end{proof}

\begin{remark}[What residual completion does not say]
A pure last-coordinate edit produces only a multiple of $e$ when $e\ne0$. That is not the deployed full-direction route, which also changes projected coefficients $a$. At a positive numerical threshold $\varepsilon_r$, the fallback for $0<\|e\|<\varepsilon_r$ yields $p$, not exactly $g$, even before rounding; its exact-arithmetic truncation error is $\|e\|$. FP32 bank/weight subtraction adds further error. The theorem describes ideal routing, not bitwise equality or a retrieval advantage over dense $g$.
\end{remark}

\paragraph{Angular scale.}
For unit direction $h$, $q^\top h=\cos\phi$ with $0<\phi<\pi$, and $0\le\theta<\phi$, solving for a turn of $\theta$ along $q+kh$ gives $k=\sin\theta/\sin(\phi-\theta)$. Write $h=\cos\phi\,q+\sin\phi\,v_\perp$ and resolve the angle to obtain $k\sin(\phi-\theta)=\sin\theta$. The runtime uses this scale with unnormalized routed coefficients. At $\beta=1$ their ideal dense direction has unit norm; for general $\beta$, $\|p+\beta e\|$ need not be one, so the same coefficient scale need not realize precisely the requested angle. The trust exit acts afterward in either case.

\subsection{Coherence Bounds for Multi-Concept Composition}

For simultaneous suppression of $K\ge2$ target concepts (\cref{ssec:multi_concept_main}), let $\delta_1,\dots,\delta_K\in\mathbb S^{d-1}$ be the unit full per-concept edit directions and write $\Delta=\sum_{k=1}^K\eta_k\delta_k$ for nonnegative coefficients. Here $\eta_k$ denotes each pre-exit dense magnitude after angular allocation, not the exit step $\eta$ or the allocated angle itself. The following is an exact-arithmetic bound on this sum.

\begin{definition}[Mutual coherence]
\label{def:coherence}
The \emph{Gram matrix} of the concept directions is $G \in \mathbb{R}^{K \times K}$, $G_{k\ell} = \langle \delta_k, \delta_\ell \rangle$, and the \emph{mutual coherence} is $\mu = \max_{k \ne \ell} |G_{k\ell}|$.
\end{definition}

\begin{proposition}[Cross-talk and spectral sandwich]
\label{prop:coherence}
With the notation above:
\begin{enumerate}[label=(\alph*),leftmargin=2em]
\item each concept direction retains alignment $\langle \Delta, \delta_k \rangle \ge \eta_k - \mu \sum_{\ell \ne k} \eta_\ell$;
\item every eigenvalue $\lambda$ of $G$ lies in $\bigl[\, 1 - (K-1)\,\mu,\; 1 + (K-1)\,\mu \,\bigr]$, and therefore
\begin{equation}
\begin{gathered}
\max\bigl(0,\, 1 - (K-1)\,\mu\bigr) \, \|\eta\|_2^2
\\
\;\le\; \|\Delta\|_2^2 = \eta^{\top} G \, \eta
\\
\;\le\; \bigl(1 + (K-1)\,\mu\bigr) \, \|\eta\|_2^2;
\end{gathered}
\end{equation}
\item in the equal-strength case $\eta_k = \eta > 0$, the condition $\mu < \frac{1}{K-1}$ suffices for the composite edit to keep positive alignment with every concept direction.
\end{enumerate}
\end{proposition}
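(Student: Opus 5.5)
The plan is to expand everything in terms of the Gram matrix $G$ of \cref{def:coherence} and use only $G_{kk}=1$ (unit directions), $|G_{k\ell}|\le\mu$ for $k\ne\ell$, and $\eta_\ell\ge0$. All three parts are then elementary consequences of linearity and Gershgorin's disc theorem; none needs the earlier trust-region or routing results.

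For (a), write $\langle\Delta,\delta_k\rangle=\sum_\ell \eta_\ell G_{\ell k}=\eta_k+\sum_{\ell\ne k}\eta_\ell G_{\ell k}$. Each cross term satisfies $\eta_\ell G_{\ell k}\ge-\eta_\ell|G_{\ell k}|\ge-\mu\eta_\ell$. Nonnegativity of the $\eta_\ell$ is what lets the absolute value be dropped, so I will state it explicitly. Summing the cross terms gives the bound.

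For (b), $G$ is real symmetric, so its eigenvalues are real. By Gershgorin, each eigenvalue lies in a disc centred at $G_{kk}=1$ with radius $\sum_{\ell\ne k}|G_{k\ell}|\le(K-1)\mu$, which gives the interval. Next, $\|\Delta\|^2=\sum_{k,\ell}\eta_k\eta_\ell\langle\delta_k,\delta_\ell\rangle=\eta^\top G\eta$. The Rayleigh-quotient bounds $\lambda_{\min}\|\eta\|^2\le\eta^\top G\eta\le\lambda_{\max}\|\eta\|^2$ then yield the sandwich. On the lower side we may take $\max(0,\cdot)$, because $G$ is a Gram matrix and hence positive semidefinite, so $\eta^\top G\eta\ge0$ regardless.

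For (c), substitute $\eta_k=\eta$ into (a). This gives $\langle\Delta,\delta_k\rangle\ge\eta\bigl(1-(K-1)\mu\bigr)$, which is strictly positive when $\mu<1/(K-1)$ and $\eta>0$. Positive alignment here means a positive inner product. If normalized alignment is wanted, note that the same condition makes $\lambda_{\min}>0$ by (b), so $\Delta\ne0$ and the cosine is also positive.

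The only step needing care is the sign handling in (a), where the nonnegativity of the coefficients must be invoked. I will also remark that the bounds concern the pre-exit sum and are not affected by the later trust-region rescaling, since that rescaling multiplies by a positive scalar.
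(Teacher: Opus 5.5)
Your proposal is correct and follows the paper's proof essentially step for step: you expand and use $\eta_\ell\ge0$ for (a), apply Gershgorin to the symmetric Gram matrix together with the Rayleigh-quotient sandwich and positive semidefiniteness for (b), and substitute into (a) for (c). One small caution on your closing aside: the exit rescales only the delta, by a step in $[0,1]$ that may be zero, while the realized query motion also passes through normalization, so, as the paper's following remark stresses, positive alignment of the pre-exit sum $\Delta$ does not by itself carry over to the edited query.
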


\begin{proof}
(a) $\langle \Delta, \delta_k \rangle = \eta_k + \sum_{\ell \ne k} \eta_\ell \langle \delta_\ell, \delta_k \rangle \ge \eta_k - \mu \sum_{\ell \ne k} \eta_\ell$, using $\langle \delta_\ell, \delta_k \rangle \ge -\mu$ and $\eta_\ell \ge 0$.
(b) $G$ has unit diagonal and off-diagonal entries bounded by $\mu$ in absolute value, so each Gershgorin disc of $G$ is centered at $1$ with radius $\sum_{\ell \ne k} |G_{k\ell}| \le (K-1)\,\mu$; since $G$ is symmetric, its eigenvalues are real and lie in the union of these discs. The Rayleigh-quotient sandwich $\lambda_{\min}(G) \|\eta\|_2^2 \le \eta^{\top} G \eta \le \lambda_{\max}(G) \|\eta\|_2^2$ with $\lambda_{\min}(G) \ge \max(0, 1 - (K-1)\mu)$ (the Gram matrix is positive semidefinite) and $\eta^{\top} G \eta = \bigl\| \sum_k \eta_k \delta_k \bigr\|_2^2$ gives the claim.
(c) Plug $\eta_k = \eta$ into (a): $\langle \Delta, \delta_k \rangle \ge \eta \bigl(1 - (K-1)\,\mu\bigr) > 0$ for every $k$.
\end{proof}

\begin{remark}[Geometry is not retrieval non-interference]
The norm bound depends on both coherence and coefficient magnitudes. Positive alignment of a pre-normalization sum does not guarantee improved per-concept retrieval after normalization and trust clipping. Nor can the residual-atom heatmap in \cref{fig:concept_composition_viz} instantiate $G$: at $\beta=1$ the full directions contain projected and residual components. With a common exact projector, $\langle g_k,g_\ell\rangle=\langle p_k,p_\ell\rangle+\langle e_k,e_\ell\rangle$, not simply $\langle r_k,r_\ell\rangle$. Low residual coherence therefore does not establish full-direction incoherence, causal isolation of dictionary benefits, or absence of interference.
\end{remark}

\subsection{Weight Concentration and Gate-Off Identity}

The weight generator assigns $w_{x,i}\propto\exp(\operatorname{sim}(h_x,a_i)/\tau)$ over the $N$ atoms (\cref{ssec:ACR}). The next lemma describes input-weight concentration, not locality of the deployed edit: least-squares coefficients can be dense regardless of the support or peakedness of $w_x$.

\begin{lemma}[Temperature concentration of the atom weights]
\label{lem:softmax}
For $N\ge2$ finite real scores $s_i=\operatorname{sim}(h_x,a_i)$ and temperature $\tau>0$, order $s_{(1)}\ge s_{(2)}\ge\cdots\ge s_{(N)}$. Let $\Delta_i=s_{(1)}-s_{(i)}$ and $\Delta=\min_{i\ge2}\Delta_i$ be the top-gap. Then
\begin{equation}
\begin{gathered}
w_{(1)} \;\ge\; \frac{1}{1 + \sum_{i \ge 2} e^{-\Delta_i / \tau}}
\\
\;\ge\; \frac{1}{1 + (N-1)\, e^{-\Delta / \tau}},
\\
\sum_{i \ge 2} w_{(i)} \;\le\; (N-1)\, e^{-\Delta / \tau}.
\end{gathered}
\end{equation}

In particular, at fixed positive top-gap the weights converge to the one-hot argmax at an exponential rate as $\tau \to 0^{+}$.
\end{lemma}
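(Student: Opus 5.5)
The plan is to rewrite the top softmax weight as a ratio normalized by its own exponential, so that everything reduces to elementary bounds on exponentials of nonpositive numbers. Writing $w_{(i)}=e^{s_{(i)}/\tau}/\sum_j e^{s_{(j)}/\tau}$ and dividing numerator and denominator by $e^{s_{(1)}/\tau}$ gives the exact identity
\begin{equation*}
 w_{(1)}=\frac{1}{1+\sum_{i\ge2}e^{-\Delta_i/\tau}},
\end{equation*}
so the first inequality in \cref{lem:softmax} actually holds with equality. When the maximum is not unique, the ordering is any fixed tie-break, and the identity still holds with some $\Delta_i=0$.

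For the second inequality I will use $\Delta_i\ge\Delta$ for every $i\ge2$, by the definition of the top-gap. Since $x\mapsto e^{-x/\tau}$ is decreasing for $\tau>0$, each term satisfies $e^{-\Delta_i/\tau}\le e^{-\Delta/\tau}$. Summing the $N-1$ terms enlarges the denominator, so the reciprocal can only shrink, which yields $w_{(1)}\ge 1/(1+(N-1)e^{-\Delta/\tau})$.

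For the tail mass I will use that the weights sum to one. Setting $S=\sum_{i\ge2}e^{-\Delta_i/\tau}\ge0$, the tail is $1-w_{(1)}=S/(1+S)\le S\le (N-1)e^{-\Delta/\tau}$. For the limiting statement, fix $\Delta>0$ and let $\tau\to0^+$. The tail bound then tends to zero at the exponential rate $e^{-\Delta/\tau}$, and hence $w_{(1)}\to1$. Every other coordinate is bounded by the tail, so the weight vector converges to the one-hot argmax, for example in $\ell_1$ distance $2(1-w_{(1)})\le 2(N-1)e^{-\Delta/\tau}$.

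The only delicate point is bookkeeping rather than analysis. If $\Delta=0$ because of a tie, the bounds remain true but vacuous, and the one-hot claim correctly requires positive gap. It is also worth stressing, as the surrounding text does, that the lemma concerns the input weights $w_x$ and says nothing about sparsity of the least-squares edit coefficients.
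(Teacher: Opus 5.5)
Your proposal is correct and follows the paper's proof essentially line for line: divide through by $e^{s_{(1)}/\tau}$ to get the exact identity, use $\Delta_i\ge\Delta$ to get the second bound, and control the tail through $1-w_{(1)}$. The only cosmetic difference is in the tail step: the paper applies the monotone map $x\mapsto x/(1+x)$ to $(N-1)e^{-\Delta/\tau}$ before dropping the denominator, whereas you use $S/(1+S)\le S$ directly, which gives the same bound.
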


\begin{proof}
Divide numerator and denominator of $w_{(1)}$ by $e^{s_{(1)}/\tau}$ to get $w_{(1)} = \bigl(1 + \sum_{i \ge 2} e^{-\Delta_i/\tau}\bigr)^{-1}$, then bound each $\Delta_i \ge \Delta$ and sum the $N-1$ identical terms; the tail mass is $\sum_{i \ge 2} w_{(i)} = 1 - w_{(1)} \le \frac{(N-1)e^{-\Delta/\tau}}{1 + (N-1)e^{-\Delta/\tau}} \le (N-1)\, e^{-\Delta / \tau}$.
\end{proof}

\begin{proposition}[Gate-off exactness of the implementation branch]
\label{prop:gate_off}
Under the validated finite-input contract, if the deployed single-concept gate is zero, or its resulting dense delta is zero, the runtime returns the original query row without modification. For multi-concept composition, all per-concept gates zero likewise imply an unchanged row.
\end{proposition}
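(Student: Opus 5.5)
The argument is a trace through the deployed runtime described in \cref{ssec:deployment_config} and \cref{app:method_details}, branch by branch, under the validated finite-input contract: finite inputs, a fitted bank, and a defined direction or a fail-closed result. It is not an analytic estimate. The main point is that every path reaching the output either copies the input array or goes through the trust-region exit with a zero step. I will show that the gate-off and zero-delta conditions always land on the first kind of path.

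\emph{Single-concept, gate zero.} If $s(q)=0$ (including the conservative G0 fallback when it returns zero), then $\theta_{\rm req}=0.3s(q)=0$. Hence the feasible angle $\theta=\max(0,\min(0,\phi-\epsilon_{\rm plane}))=0$. The scale $k_q=\sin 0/\sin\phi=0$ belongs to the ``zero-angle produces zero scale'' branch, so $w'_q=[w_q;0]$ exactly. More importantly, the runtime marks the row as inactive before any arithmetic, and inactive rows are copied from the original query array. I will cite that copy branch directly, because it is what gives bitwise identity; the algebraic zero only shows the branch is consistent.

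\emph{Single-concept, zero delta.} If the gate is positive but $\delta_q=0$, the update $q+\eta\delta_q=q$ reduces to $\operatorname{normalize}(q)$. That need not equal $q$ bitwise in FP32. The statement is therefore proved by appeal to the explicit zero-delta branch (``gate-off and zero-delta rows return the input directly''), not by the formula. I also have to check that the non-positive-denominator and degenerate-$p$ fail-closed paths produce zero scale and so reach the same branch. Lemma~\ref{lem:monotone_exit} is not needed here.

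\emph{Multi-concept.} If all $s_c(q)=0$, then all $t_c=0$. The allocation rule in \cref{app:multi_concept} then gives every $\tilde t_c=0$, the ``all gates vanish'' case. Each per-concept scale is zero, each per-concept delta is zero, the summed delta is zero, and the row takes the inactive copy path before the shared trust exit.

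The main obstacle is that this is an implementation claim, so exactness depends on the copy-branch guard and not on real arithmetic. I would make the contract explicit: an active mask computed from $s(q)>0$ and $\delta_q\neq 0$, with output rows selected from the input where the mask is false. The proposition then follows because the two conditions in its hypothesis are exactly the complement of that mask.
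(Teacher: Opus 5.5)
Your proposal matches the paper's proof: both rest on the runtime's explicit inactive-row copy branch, which bypasses normalization, and use the zero-angle/zero-scale algebra only to show that the zero-gate and all-gates-zero cases reach that branch. One small precision: the paper's active mask compares the delta norm with a numerical inactivity threshold rather than testing $\delta_q\neq0$, so your hypotheses are contained in the inactive set rather than being exactly its complement, which is all the argument needs.
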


\begin{proof}
A zero gate gives zero angle and coefficient scale; identical padded and steered weights produce zero delta. The exit's active mask requires both a positive gate and a delta norm above its numerical inactivity threshold. For inactive rows it explicitly copies the original source query, bypassing normalization. With every multi-concept gate zero, all allocations and scales vanish and the same copy branch applies. This is bit-exact copying of the input representation, not a consequence of ideal normalization algebra. Historical $\alpha_x$ is absent from this argument and from the deployed operator.
\end{proof}

\begin{remark}
The ideal cap bound and explicit inactive branch explain bounded motion and exact inactivity under their respective hypotheses. They do not establish dataset-level suppression or a dictionary advantage; empirical comparisons remain necessary.
\end{remark}

\end{document}